\newcommand{\Vp}{V_h^{\pi_k}}
\newcommand{\hVp}{\hat{V}_h^{\pi_k}}
\newcommand{\Vpn}{V_{h+1}^{\pi_k}}
\newcommand{\hVpn}{\hat{V}_{h+1}^{\pi_k}}
\newcommand{\Qp}{Q_h^{\pi_k}}
\newcommand{\hQp}{\hat{Q}_h^{\pi_k}}
\newcommand{\sPsa}{\sigma_{\mathcal{P}_h(s,a)}}
\newcommand{\hPsa}{\sigma_{\hat{\mathcal{P}}_h(s,a)}}
\newcommand{\sPs}{\sigma_{\mathcal{P}_h(s)}}
\newcommand{\hPs}{\sigma_{\hat{\mathcal{P}}_h(s)}}
\newcommand{\ie}{{\it i.e.}}
\def\approxcorrect{\checkmark\kern-1.1ex\raisebox{.89ex}{$\times$}}
\def\eqref#1{equation~\ref{#1}}
\def\1{\bm{1}}
\DeclareMathAlphabet{\mathsfit}{\encodingdefault}{\sfdefault}{m}{sl}
\SetMathAlphabet{\mathsfit}{bold}{\encodingdefault}{\sfdefault}{bx}{n}
\def\gA{{\mathcal{A}}}
\def\gG{{\mathcal{G}}}
\def\gM{{\mathcal{M}}}
\def\gP{{\mathcal{P}}}
\def\gS{{\mathcal{S}}}
\newtheorem{thm}{Theorem}
\newtheorem{lem}{Lemma}
\newtheorem{defn}{Definition}[section]
\newtheorem{rem}{Remark}[section]
\let\svthefootnote\thefootnote
\newcommand\freefootnote[1]{%
  \let\thefootnote\relax%
  \footnotetext{#1}%
  \let\thefootnote\svthefootnote%
}
\title{Online Policy Optimization for Robust MDP}
\author{
  Jing Dong \thanks{The Chinese University of Hong Kong, Shenzhen}\\
  \texttt{jingdong@link.cuhk.edu.cn}
  \and
  Jingwei Li \thanks{Tsinghua University}\\
  \texttt{ljw22@mails.tsinghua.edu.cn}
  \and
  Baoxiang Wang \footnotemark[1] \\
  \texttt{bxiangwang@cuhk.edu.cn}
  \and
  Jingzhao Zhang \footnotemark[2]\\
  \texttt{jingzhaoz@mail.tsinghua.edu.cn}
}
\date{}
\begin{document} \freefootnote{Authors are listed in alphabetical order.}
\maketitle

\begin{abstract}
Reinforcement learning (RL) has exceeded human performance in many synthetic settings such as video games and Go. However, real-world deployment of end-to-end RL models is less common, as RL models can be very sensitive to slight perturbation of the environment. 
The robust Markov decision process (MDP) framework---in which the transition probabilities belong to an uncertainty set around a nominal model---provides one way to develop robust models. 
While previous analysis shows RL algorithms are effective assuming access to a generative model, it remains unclear whether RL can be efficient under a more realistic online setting, which requires a careful balance between exploration and exploitation. 
In this work, we consider online robust MDP by interacting with an unknown nominal system. We propose a robust optimistic policy optimization algorithm that is provably efficient. 
To address the additional uncertainty caused by an adversarial environment, our model features a new optimistic update rule derived via Fenchel conjugates. 
Our analysis establishes the first regret bound for online robust MDPs. 
\end{abstract}

\section{Introduction}
The rapid progress of reinforcement learning (RL) algorithms enables trained agents to navigate around complicated environments and solve complex tasks. The standard reinforcement learning methods, however, may fail catastrophically in another environment, even if the two environments only differ slightly in dynamics \citep{farebrother2018generalization,packer2018assessing,cobbe2019quantifying,song2019observational,raileanu2021decoupling}. In practical applications, such mismatch of environment dynamics are common and can be caused by a number of reasons, e.g., model deviation due to incomplete data, unexpected perturbation and possible adversarial attacks. Part of the sensitivity of standard RL algorithms stems from the formulation of the underlying Markov decision process (MDP). In a sequence of interactions, MDP assumes the dynamic to be unchanged, and the trained agent to be tested on the same dynamic thereafter. 

To model the potential mismatch between system dynamics, the framework of robust MDP is introduced to account for the uncertainty of the parameters of the MDP \citep{satia1973markovian,white1994markov,nilim2005robust,iyengar2005robust}. Under this framework, the dynamic of an MDP is no longer fixed but can come from some uncertainty set, such as the rectangular uncertainty set, centered around a nominal transition kernel. The agent sequentially interacts with the nominal transition kernel to learn a policy, which is then evaluated on the worst possible transition from the uncertainty set. Therefore, instead of searching for a policy that may only perform well on the nominal transition kernel, the objective is to find the worst-case best-performing policy. This can be viewed as a dynamical zero-sum game, where the RL agent tries to choose the best policy while nature imposes the worst possible dynamics. Intrinsically, solving the robust MDPs involves solving a max-min problem, which is known to be challenging for efficient algorithm designs.

More specifically, if a generative model (also known as a simulator) of the environment or a suitable offline dataset is available, one could obtain a $\epsilon$-optimal robust policy with $\Tilde{O}(\epsilon^{-2})$ samples under a rectangular uncertainty set \citep{qi2020robust,panaganti2022sample,wang2022policy,ma22distribution}. Yet the presence of a generative model is stringent to fulfill for real applications. In a more practical online setting, the agent sequentially interacts with the environment and tackles the exploration-exploitation challenge as it balances between exploring the state space and exploiting the high-reward actions. 
In the robust MDP setting, previous sample complexity results cannot directly imply a sublinear regret in general \citet{dann2017unifying} and so far no asymptotic result is available. A natural question then arises:
\begin{center}
    \textit{Can we design a robust RL algorithm that attains sublinear regret under robust MDP with rectangular uncertainty set?}
\end{center}

In this paper, we answer the above question affirmatively and propose the first policy optimization algorithm for robust MDP under a rectangular uncertainty set. One of the challenges for deriving a regret guarantee for robust MDP stems from its adversarial nature. As the transition dynamic can be picked adversarially from a predefined set, the optimal policy is in general randomized \citep{wiesemann2013robust}. This is in contrast with conventional MDPs, where there always exists a deterministic optimal policy, which can be found with value-based methods and a greedy policy (e.g. UCB-VI algorithms). Bearing this observation, we resort to policy optimization (PO)-based methods, which directly optimize a stochastic policy in an incremental way.

With a stochastic policy, our algorithm explores robust MDPs in an optimistic manner. To achieve this robustly, we propose a carefully designed bonus function via the dual conjugate of the robust bellman equation. This quantifies both the uncertainty stemming from the limited historical data and the uncertainty of the MDP dynamic. In the episodic setting of robust MDPs, we show that our algorithm attains sublinear regret $O(\sqrt{K})$ for both $(s,a)$ and $s$-rectangular uncertainty set, where $K$ is the number of episodes. In the case where the uncertainty set contains only the nominal transition model, our results recover the previous regret upper bound of non-robust policy optimization \citep{shani2020optimistic}. Our result achieves the first provably efficient regret bound in the online robust MDP problem, as shown in Table~\ref{table:compare}. We further validated our algorithm with experiments.



\begin{table}[h]
\centering\caption{Comparisons of previous results and our results, where $S,A$ are the size of the state space and action space, $H$ is the length of the horizon, $K$ is the number of episodes, $\rho$ is the radius of the uncertainty set and $\epsilon$ is the level of suboptimality. We shorthand $\iota = \log(SAH^2 K^{3/2} (1 + \rho))$. The regret upper bound by \citet{panaganti2022sample} are obtained through converting their sample complexity results and the sample complexity result for our work is converted through our regret bound. We use ``GM'' to denote the requirement of a generative model and ``for PE'' to denote that the result is only for robust policy evaluation (estimating a robust value function for a fixed policy). The reference to the previous works are [A]: \citet{panaganti2022sample}, [B]: \citet{wang2021online}, [C]: \citet{badrinath2021robust}, [D]: \citet{yang2021towards}.}\label{table:compare}
\begin{tabular}{|c|c|c|c|c|c|}
\hline
                               & Algorithm                                                                   & Requires  & Rectangular        & Regret                                                                                                                                            & Sample Complexity                                                                                                                  \\ \hline
{[}A{]}                        & \begin{tabular}[c]{@{}c@{}}Value\\ based\end{tabular}                   & GM            & $(s,a)$      & \begin{tabular}[c]{@{}c@{}}NA\\ \end{tabular}                                                 & \begin{tabular}[c]{@{}c@{}}$O\left(\frac{H^4S^2A}{\epsilon^2} \right)$\\ \end{tabular}                           \\ \hline

{[}B{]}                        & \begin{tabular}[c]{@{}c@{}}Value\\ based\end{tabular}                   & -            & $(s,a)$      & \begin{tabular}[c]{@{}c@{}}NA\\ \end{tabular}                                                 & Asymptotic                \\ \hline

{[}C{]}                        & \begin{tabular}[c]{@{}c@{}}Policy\\ based\end{tabular}                   & -            & $(s,a)$      & \begin{tabular}[c]{@{}c@{}}NA\\ \end{tabular}                                                 & Asymptotic                \\ \hline
\multirow{2}{*}{{[}D{]}}       & \multirow{2}{*}{\begin{tabular}[c]{@{}c@{}}Value\\ based\end{tabular}}  & \multirow{2}{*}{GM} & $(s,a)$                                                                                  & NA                                                             & \begin{tabular}[c]{@{}c@{}}$\Tilde{O}\left(\frac{H^2S^2A(2 + \rho)^2}{\rho^2\epsilon^2} \right)$  for PE\end{tabular} \\ \cline{4-6} &                                                                         &                     & $s$& NA                                             & \begin{tabular}[c]{@{}c@{}}$\Tilde{O}\left(\frac{H^2S^2A^2(2 + \rho)^2}{\rho^2\epsilon^2} \right)$ for PE\end{tabular}   
                               \\ \hline

\multirow{2}{*}{\textbf{Ours}} & \multirow{2}{*}{\begin{tabular}[c]{@{}c@{}}Policy\\ based\end{tabular}} & \multirow{2}{*}{-}  & $(s,a)$ &  $O \left( 
SH^2  \sqrt{AK\iota}\right)$ & $O\left( \frac{H^4 S^2 A \iota}{\epsilon^2}\right)$                                                  

\\ \cline{4-6} 
                               &                                                                         & & $s$ & $O \left( SA^2 H^2\sqrt{K\iota}\right) $                      & $O \left( \frac{H^4S^2 A^4  \iota}{\epsilon^2}\right)$                                                                                                                            \\ \hline
\end{tabular}
\end{table} 

\section{Related work}
\paragraph{RL with robust MDP} Different from conventional MDPs, robust MDPs allow the transition kernel to take values from an uncertainty set. The objective in robust MDPs is to learn an optimal robust policy that maximizes the worst-case value function. When the exact uncertainty set is known, this can be solved through dynamic programming methods \citep{iyengar2005robust,nilim2005robust,mannor2012lightning}. Yet knowing the exact uncertainty set is a rather stringent requirement for most real applications. 
If one has access to a generative model, several model-based reinforcement learning methods are proven to be statistically efficient. With the different characterization of the uncertainty set, these methods can enjoy a sample complexity of $O(1/\epsilon^2)$ for an $\epsilon$-optimal robust value function \citep{panaganti2022sample,yang2021towards}. Similar results can also be achieved if an offline dataset is present, for which previous works \citet{qi2020robust,zhou2021finite,kallus2022doubly,ma22distribution} show the $O(1/\epsilon^2)$ sample complexity for an $\epsilon$-optimal policy. 

In the case of online RL, the only results available are asymptotic. In the case of discounted MDPs, \citet{wang2021online,badrinath2021robust} study the policy gradient method and show an $O(\epsilon^{-3})$ convergence rate for an alternative learning objective (a smoothed variant), which could be equivalent to the original policy gradient objective in an asymptotic regime.
These results in sample complexity and asymptotic regimes in general cannot imply sublinear regret in robust MDPs \citep{dann2017unifying}. 

\paragraph{RL with adversarial MDP}
Another line of works characterizes the uncertainty of the environment through the adversarial MDP formulation, where the environmental parameters can be adversarially chosen without restrictions. 
This problem is proved to be NP-hard to obtain a low regret \citep{even2004experts}.
Several works study the variant where the adversarial could only modify the reward function, while the transition dynamics of the MDP remain unchanged.
In this case, it is possible to obtain policy-based algorithms that are efficient with a sublinear regret \citep{rosenberg2019online,jin2020simultaneously,pmlr-v119-jin20c,shani2020optimistic,cai2020provably}.
On a separate vein, it investigates the setting where the transition is only allowed to be adversarially chosen for $C$ out of the $K$ total episodes. A regret of $O(C^2 + \sqrt{K})$ are established thereafter \citep{lykouris2021corruption,chen2021improved,zhang2022corruption}.


\paragraph{Non-robust policy optimization}
The problem of policy optimization has been extensively investigated under non-robust MDPs \citep{neu2010online,cai2020provably,shani2020optimistic,wu2022nearly,chen2021minimax}. The proposed methods are proved to achieve sublinear regret.
The methods are also closely related to empirically successful policy optimization algorithms in RL, such as PPO \citet{schulman2017proximal} and TRPO \citet{schulman2015trust}. 

\section{Robust MDP and uncertainty sets}
In this section, we describe the formal setup of robust MDP. We start with defining some notations.
\paragraph{Robust Markov decision process}
We consider an episodic finite horizon robust MDP, which can denoted by a tuple $\gM = \langle \gS, \gA, H, $ $\{\gP\}_{h=1}^H, \{r\}_{h=1}^H \rangle$. Here $\gS$ is the state space, $\gA$ is the action space, $\{r\}_{h=1}^H$ is the time-dependent reward function, and $H$ is the length of each episode. Instead of a fixed step of time-dependent uncertainty kernels, the transitions of the robust MDP is governed by kernels that are within a time-dependent uncertainty set $\{\gP\}_{h=1}^H$, $\ie$, time-dependent transition $P_h \in \gP_h \subseteq \Delta_{\gS}$ at time $h$. 

The uncertainty set $\gP$ is constructed around a nominal transition kernel $P_h = \{P_h^o\}$, and all transition dynamics within the set are close to the nominal kernel with a distance metric of one's choice. Different from an episodic finite-horizon non-robust MDP, the transition kernel $P$ may not only be time-dependent but may also be chosen (even adversarially) from a specified time-dependent uncertainty set $\gP$. We consider the case where the rewards are stochastic. This is, on state-action $(s,a)$ at time $h$, the immediate reward is $R_h(s,a) \in [0,1]$, which is drawn i.i.d from a distribution with expectation $r_h(s,a)$.
With the described setup of robust MDPs, we now define the policy and its associated value.

\paragraph{Policy and robust value function}
A time-dependent policy $\pi$ is defined as $\pi = \{\pi_h\}_{h=1}^H$, where each $\pi_h$ is a function from $\gS$ to the probability simplex over actions, $\Delta(\gA)$. 
If the transition kernel is fixed to be $P$, the performance of a policy $\pi$ starting from state $s$ at time $h$ can be measured by its value function, which is defined as 
\begin{align*}
    V_h^{\pi, P}(s) = \mathbb{E}_{\pi, P}\left[ \sum^H_{h^\prime = h} r_{h^\prime}(s_{h^\prime}, a_{h^\prime}) \mid s_h = s\right] \,.
\end{align*}
In robust MDP, the robust value function instead measures the performance of $\pi$ under the worst possible choice of transition $P$ within the uncertainty set. Specifically, the value and the Q-value function of a policy given the state action pair $(s,a)$ at step $h$ are defined as 
\begin{align*}
    V^{\pi}_h (s) = \ & \min_{\{P_h\} \in \{\gP_h\}} V^{\pi, \{P\}}_h (s) \,, \nonumber \\
    Q^{\pi}_h (s,a) = \ & \min_{\{P_h\} \in \{\gP_h\}} \mathbb{E}_{\pi, \{P\}}\left[\sum^H_{h^\prime =h} r_h (s_{h^\prime} , a_{h^\prime} ) \mid (s_h,a_h) = (s, a)\right] \,.
\end{align*}
The optimal value function is defined to be the best possible value attained by a policy
\begin{align*}
    V^{\ast}_h (s) = \max_{\pi} V^{\pi}_h (s) = \max_{\pi} \min_{\{P_h\} \in \{\gP_h\}} V^{\pi, \{P\}}_h (s) \,.
\end{align*}
The optimal policy is then defined to be the policy that attains the optimal value.

\paragraph{Robust Bellman equation}
Similar to non-robust MDP, robust MDP has the following robust bellman equation, which characterizes a relation to the robust value function.
\begin{align*}
    Q^{\pi}_h (s,a) = r(s,a) + \sigma_{\gP_h}(V_{h+1}^\pi)(s,a)\,, \quad V^{\pi}_h (s) = \langle Q^{\pi}_h (s,\cdot), \pi_h(\cdot, s) \rangle \,,
\end{align*} where
\begin{align}\label{eq:sigma}
    \sigma_{\gP_h}(V_{h+1}^\pi)(s,a) = \min_{P_h \in \gP_h} \limits P_h(\cdot \mid s,a) V_{h+1}^\pi \,, \quad P_h(\cdot \mid s,a) V = \sum_{s^\prime \in \gS} \limits P_h(s^\prime \mid s,a) V(s^\prime)\,.
\end{align}

Without additional assumptions on the uncertainty set, the optimal policy and value of the robust MDP are in general NP-hard to solve \citep{wiesemann2013robust}. One of the most commonly assumptions that make solving optimal value feasible is the rectangular assumption \citep{iyengar2005robust,wiesemann2013robust,badrinath2021robust,yang2021towards,panaganti2022sample}. 
\paragraph{Rectangular uncertainty sets}
To limit the level of perturbations, we assume that the transition kernels is close to the nominal transition measured via $\ell_1$ distance. We consider two cases.

The $(s,a)$-rectangular assumption assumes that the uncertain transition kernel within the set takes value independently for each $(s,a)$. We further use $\ell_1$ distance to characterize the $(s,a)$-rectangular set around a nominal kernel with a specified level of uncertainty.
\begin{defn}[$(s,a)$-rectangular uncertainty set \citet{iyengar2005robust,wiesemann2013robust}]\label{def:sa}
For all time step $h$ and with a given state-action pair $(s,a)$, the $(s,a)$-rectangular uncertainty set $\gP_h(s,a)$ is defined as 
\[
\gP_h(s,a) = \left\{\left\|P_h(\cdot \mid s,a) - P_h^o(\cdot \mid s,a) \right\|_1 \leq \rho,P_h(\cdot \mid s,a) \in \Delta(\gS) \right\} \,,
\]
where $P_h^o$ is the nominal transition kernel at $h$, $P_h^o(\cdot \mid s,a) > 0, \forall (s,a) \in \gS \times \gA$, $\rho$ is the level of uncertainty.
\end{defn}
With the $(s,a)$-rectangular set, it is shown that there always exists an optimal policy that is deterministic \cite{wiesemann2013robust}. 

One way to relax the $(s,a)$-rectangular assumption is to instead let the uncertain transition kernels within the set take value independent for each $s$ only. This characterization is then more general and its solution gives a stronger robustness guarantee. 
\begin{defn}[$s$-rectangular uncertainty set \citet{wiesemann2013robust}]\label{def:s}
For all time step $h$ and with a given state $s$, the $s$-rectangular uncertainty set $\gP_h(s)$ is defined as 
\[
\gP_h(s) = \left\{ \sum_{a \in \gA}\left\|P_h(\cdot \mid s,a) - P_h^o(\cdot \mid s,a) \right\|_1 \leq A \rho, P_h(\cdot \mid s,\cdot) \in \Delta(\gS)^{\gA}  \right\} \,,
\]
where $P_h^o$ is the nominal transition kernel at $h$, $P_h^o(\cdot \mid s,a) > 0, \forall (s,a) \in \gS \times \gA$, $\rho$ is the level of uncertainty.
\end{defn}
Different from the $(s,a)$-rectangular assumption, which guarantees the existence of a deterministic optimal policy, the optimal policy under $s$-rectangular set may need to be randomized \citep{wiesemann2013robust}. We also remark that the requirement of $P_h^o(\cdot \mid s,a) > 0$ is mostly for technical convenience.

Equipped with the characterization of the uncertainty set, we now describe the learning protocols and the definition of regret under the robust MDP. 

\paragraph{Learning protocols and regret}
We consider a learning agent repeatedly interacts with the environment in an episodic manner, over $K$ episodes. 
At the start of each episode, the learning agent picks a policy $\pi_k$ and interacts with the environment while executing $\pi_k$. 
Without loss of generality, we assume the agents always start from a fixed initial state $s$. The performance of the learning agent is measured by the cumulative regret incurred over the $K$ episodes. Under the robust MDP, the cumulative regret is defined to be the cumulative difference between the robust value of $\pi_k$ and the robust value of the optimal policy, 
\begin{align*}
    \text{Regret}(K) = \sum^K_{k=1} V_1^{\ast}(s_0) - V_1^{\pi_k} (s_0)\,,
\end{align*}
where $s_0^k$ is the initial state.

We highlight that the transition of the states in the learning process is specified by the nominal transition kernel $\{P_h^o\}_{h=1}^H$, though the agent only has access to the nominal kernel in an online manner. We remark that if the agent is asked to interact with a potentially adversarially chosen transition, the learning problem is NP-hard \cite{even2004experts}. 

One practical motivation for this formulation could be as follows. The policy provider only sees feedback from the nominal system, yet she aims to minimize the regret for clients who refuse to share additional deployment details for privacy purposes.

\section{Algorithm}
Before we introduce our algorithm, we first illustrate the importance of taking uncertainty into consideration. With the robust MDP, one of the most naive methods is to directly train a policy with the nominal transition model. However, the following proposition shows an optimal policy under the nominal policy can be arbitrarily bad in the worst-case transition (even worse than a random policy).  
\begin{restatable}[Suboptimality of non-robust optimal policy]{claim}{hard}\label{prop:hard}
   There exists a robust MDP $\gM = \langle \gS, \gA, \gP, r, H \rangle$ with uncertainty set $\gP$ of uncertainty radius $\rho$, such that the non-robust optimal policy is $\Omega(1)$-suboptimal to the uniformly random policy. 
 \end{restatable}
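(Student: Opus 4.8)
The statement is an existence claim, so the plan is to write down one explicit hard instance and compute everything by hand. The instance has two actions at the initial state: a ``fragile'' action $a_1$ that is optimal under the nominal kernel but whose robust value is badly degraded once the adversary is allowed to shift mass, and a ``safe'' action $a_2$ that is slightly suboptimal nominally but essentially immune to the adversary. The non‑robust optimal policy is deterministic and commits to $a_1$; the uniform policy still plays $a_2$ half the time and therefore ends up with a strictly larger robust value.

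\textbf{Construction.} Take $H=2$, $\gS=\{s_0,g,b\}$, $\gA=\{a_1,a_2\}$, with $s_0$ the (deterministic) step‑$1$ state. Fix small constants $0<\eta\ll\epsilon\ll\rho$. Rewards: $r_1(s_0,a_1)=0$, $r_1(s_0,a_2)=1-\epsilon$, $r_2(g,\cdot)=1$, $r_2(b,\cdot)=0$. Nominal transitions from $s_0$: $P_1^o(\cdot\mid s_0,a_1)=(1-\eta)\delta_g+\eta\,\delta_b$ and $P_1^o(\cdot\mid s_0,a_2)=\eta\,\delta_g+(1-\eta)\delta_b$, where the $\eta$‑mass is present only to satisfy the strict‑positivity convention in Definition~\ref{def:sa} and plays no role in the limit $\eta\to 0$. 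Equip the MDP with the $(s,a)$‑rectangular set of radius $\rho$ (Definition~\ref{def:sa}). First, under $P^o$ the nominal value of $a_1$ at $s_0$ is $1-\eta$ while that of $a_2$ is $(1-\epsilon)+\eta$; since $\epsilon>2\eta$, the unique nominal‑greedy action is $a_1$, so $\pi^{nr}$ plays $a_1$ deterministically at $s_0$. Next, evaluating $\sigma_{\gP_1(s_0,a_1)}(V_2)$ amounts to moving the maximum allowed $\rho/2$ mass from $g$ onto $b$, so $V_1^{\pi^{nr}}(s_0)=0+(1-\eta-\rho/2)$. Finally, for any $P\in\gP_1$ one has $V_1^{\pi^u,P}(s_0)=\tfrac12 P(g\mid s_0,a_1)+\tfrac12\bigl[(1-\epsilon)+P(g\mid s_0,a_2)\bigr]$, which the adversary minimizes termwise, pushing $P(g\mid s_0,a_1)$ down to $1-\eta-\rho/2$ and $P(g\mid s_0,a_2)$ down to $0$; hence $V_1^{\pi^u}(s_0)=\tfrac12(1-\eta-\rho/2)+\tfrac12(1-\epsilon)$. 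Subtracting gives $V_1^{\pi^u}(s_0)-V_1^{\pi^{nr}}(s_0)=\tfrac{\rho}{4}-\tfrac{\epsilon}{2}+\tfrac{\eta}{2}\ge\rho/8$ for $\epsilon,\eta$ small enough, so $\pi^{nr}$ is $\Omega(\rho)=\Omega(1)$‑suboptimal to $\pi^u$ (and to the robust‑optimal policy, which plays $a_2$ and attains value $\approx 1$).

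\textbf{Main obstacle.} The construction is elementary; the only points needing care are (i) verifying that the inner minimization defining $\sigma_{\gP_h}$ is indeed attained by shifting exactly $\rho/2$ mass toward the zero‑reward state, and (ii) the ordering $\eta\ll\epsilon\ll\rho$, which is what simultaneously makes $a_1$ nominally optimal yet robustly bad. If one wants the gap to be a universal constant independent of $\rho$, replace $g$ by a length‑$\Theta(1/\rho)$ chain of reward‑$1$ states, each of which leaks $\rho/2$ mass to an absorbing zero‑reward sink: the fragile action's robust value then collapses to $O(1/\rho)$ while its nominal value is $\Theta(H)$, so taking $H=\Theta(1/\rho)$ amplifies the suboptimality to a constant fraction of the optimal value. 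The same instances yield an analogous $\Omega(\rho)$ gap under the $s$‑rectangular set of Definition~\ref{def:s}, since the adversary's enlarged budget $A\rho$ is spent entirely against the single action $\pi^{nr}$ commits to.
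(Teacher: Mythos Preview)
Your proof is correct and follows the same overall strategy as the paper's: exhibit an explicit three-state, two-action MDP in which the nominal-greedy action is fragile under the adversary while an alternative action is robust, so the deterministic nominal-optimal policy has strictly smaller robust value than the uniform policy. The concrete instances differ, however. The paper uses absorbing good/bad states with symmetric rewards $\pm 1/(H-1)$ and a second action with $50/50$ nominal transitions; it then lets the transition parameter of the first action approach $1/2$ so that the conclusion holds for arbitrarily small $\rho$. Your instance is arguably simpler: with $H=2$ and an immediate-reward ``safe'' action $a_2$, the robust values are one-line computations and the gap $\rho/4-\epsilon/2+\eta/2$ is explicit. You also sketch how to amplify the gap to a $\rho$-independent constant via a chain of leaky reward states, something the paper does not address. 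One cosmetic nit: the strict-positivity convention $P_1^o(\cdot\mid s_0,a)>0$ would also require a token mass on $s_0$ itself, not just on $g$ and $b$; but as the paper itself remarks, that requirement is only for analytic convenience and does not affect the argument.
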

The proof of Proposition \ref{prop:hard} is deferred to Appendix \ref{appendix:prop}. With the above-stated result, it implies the policy obtained with non-robust RL algorithms, can have arbitrarily bad performance when the dynamic mismatch from the nominal transition. Therefore, we present the following robust optimistic policy optimization \ref{alg} to avoid this undesired result.   

\subsection{Robust optimistic policy optimization}
With the presence of the uncertainty set, the optimal policies may be all randomized \citep{wiesemann2013robust}. In such cases, value-based methods may be insufficient as they usually rely on a deterministic policy. We thus resort to optimistic policy optimization methods~\cite{shani2020optimistic}, which directly learn a stochastic policy. 

Our algorithm performs policy optimization with empirical estimates and encourages exploration by adding a bonus to less explored states. However, we need to propose a new efficiently computable bonus that is robust to adversarial transitions. We achieve this via solving a sub-optimization problem derived from Fenchel conjugate. We present Robust Optimistic Policy Optimization (ROPO) in Algorithm \ref{alg} and elaborate on its design components.

To start, as our algorithm has no access to the actual reward and transition function, we use the following empirical estimator of the transition and reward:
\begin{align}\label{eq:empirical}
    \hat{r}_h^k(s,a) =& \frac{\sum^{k - 1}_{k^\prime = 1} R_h^{k^\prime}(s,a)\mathbb{I} \left\{s_h^{k^\prime} = s, a_h^{k^\prime} = a\right\}}{N_h^k(s,a)} \,, \nonumber\\ \hat{P}_h^{o,k}(s,a) =& \frac{\sum^{k - 1}_{k^\prime = 1} \mathbb{I} \left\{s_h^{k^\prime} = s, a_h^{k^\prime} = a
    , s_{h+1}^{k^\prime} = s^\prime\right\}}{N_h^k(s,a)} \,,
\end{align}
where $N_h^k(s,a) = \max \left\{ \sum^{k-1}_{k^\prime = 1}  \mathbb{I}\left\{s_h^{k^\prime} = s, a_h^{k^\prime} = a\right\},1\right\}$.

\paragraph{Challenges in Optimistic Robust Policy Evaluation}
In each episode, the algorithm estimates $Q$-values with an optimistic variant of the bellman equation. 
Specifically, to encourage exploration in the robust MDP, we add a bonus term $b_h^k(s,a)$, which compensates for the lack of knowledge
of the actual reward and transition model as well as the uncertainly set, with order $b_h^k(s,a) = O\left(1 / \sqrt{N_h^k(s,a)} \right)$.
\begin{align*}
    \hat{Q}^{k}_h (s,a) = \min\left\{\hat{r}(s,a) + \sigma_{\hat{\gP}_h}(\hat{V}_{h+1}^\pi)(s) + b_h^k(s,a), H\right\} \,.
\end{align*}
Intuitively, the bonus term $b_h^k$ desires to characterize the optimism required for efficient exploration for both the estimation errors of $P$ and the robustness of $P$. 
It is hard to control the two quantities in their primal form because of the coupling between them. We propose the following procedure to address the problem.

Note that the key difference between our algorithm and standard policy optimization is that $\sigma_{\hat{\gP}_h}(\hat{V}_{h+1}^\pi)(s)$ requires solving an inner minimization (\ref{eq:sigma}). 
Through relaxing the constraints with Lagrangian multiplier and Fenchel conjugates, under $(s,a)$-rectangular set, the inner minimization problem can be reduced to a one-dimensional unconstrained convex optimization problem on $\mathbb{R}$ (Lemma \ref{lem:sa_con}). 
\begin{align}\label{eq:inner_sa}
    \sup_{\eta} \eta - \frac{ (\eta - \min_s \limits \hVpn(s))_{+}}{2}\rho - \sum_{s^\prime}\hat{P}_h^o(s^\prime \mid s,a) \left( \eta - \hVpn(s^\prime)\right)_{+} \,.
\end{align}
The optimum of Equation (\ref{eq:inner_sa}) is then computed efficiently with bisection or sub-gradient methods. We note that while the dual form has been similarly used before under the presence of a generative model or with an offline dataset \citep{badrinath2021robust,panaganti2022sample,yang2021towards}, it remains unclear whether it is effective for the online setting.

Similarly, in the case of $s$-rectangular set, the inner minimization problem is equivalent to a $A$-dimensional convex optimization problem.
\begin{align}\label{eq:inner_s}
    \sup_{\eta} \ \sum_{a^\prime} \eta_{a^\prime} -  \sum_{s^\prime, a^\prime} \hat{P}_h^o(s^\prime \mid s,a^\prime) \left(\eta_{a^\prime} - \mathbb{I}\{a^\prime = a\} V_{h+1}^{\pi_k}(s^\prime) \right)_{+} - \min_{s^\prime, a^\prime}\frac{A \rho  (\eta_{a^\prime} - \mathbb{I}\{a^\prime = a\} V_{h+1}^{\pi_k}(s^\prime))_{+}}{2} \,.
\end{align}
This optimum in $\mathbb{R}^A$ can be computed efficiently in $\tilde{O}(A)$ iterations by methods like gradient descent.


In addition to reducing computational complexity, the dual form (Equation (\ref{eq:inner_sa}) and Equation (\ref{eq:inner_s})) decouples the uncertainty in estimation error and in robustness, as $\rho$ and $\hat{P}_h^o$ are not in different terms. The exact form of $b_h^k$ is presented in the Equation (\ref{bonus_sa}) and (\ref{bonus_s}).

\paragraph{Policy Improvement Step}
Using the optimistic $Q$-value obtained from policy evaluation, the algorithm improves the policy with a KL regularized online mirror descent step, 
\begin{align*}
    \pi_h^{k+1} \in \arg\min_{\pi } \limits \beta \langle \nabla \hat{V}_{h}^{\pi_k}, \pi \rangle- \pi_h^k + D_{KL} (\pi || \pi_h^k) \,,
\end{align*}
where $\beta$ is the learning rate.
Equivalently, the updated policy is given by the closed-form solution
\begin{align*}
    \pi_h^{k+1}(a \mid s) = \frac{\pi_h^{k}\exp(\beta \hat{Q}^{\pi}_h (s,a))}{\sum_{a^\prime} \exp(\beta \hat{Q}^{\pi}_h (s,a^\prime))} \,.
\end{align*}
An important property of policy improvement is to use a fundamental inequality (\ref{eq:omd}) of online mirror descent presented in \citep{shani2020optimistic}. We suspect that other online algorithms with sublinear regret could also be used in policy improvement.

In the non-robust case, this improvement step is also shown to be theoretically efficient \citep{shani2020optimistic,wu2022nearly}. Many empirically successful policy optimization algorithms, such as PPO \citep{schulman2017proximal} and TRPO \cite{schulman2015trust}, also take a similar approach to KL regularization for non-robust policy improvement.

The proposed algorithm is summarized in Algorithm \ref{alg}.
\begin{algorithm}[h]
    \caption{Robust Optimistic Policy Optimization (ROPO)} 
    \label{alg}
    \begin{algorithmic}
    \STATE Input: learning rate $\beta$, bonus function $b_h^k$.
        \FOR{$k = 1, \ldots, K$}
        \STATE Collect a trajectory of samples by executing $\pi_k$.
        \STATE{ {\color{gray}\# Robust Policy Evaluation }}
        \FOR{$h = H, \ldots, 1$}
        \FOR{ $\forall (s,a) \in \gS \times \gA$}
        \STATE Solve $\sigma_{\hat{\gP}_h}(\hat{V}_{h+1}^\pi)(s,a)$ according to Equation (\ref{eq:inner_sa}) for $(s,a)$-rectangular set \\ or Equation (\ref{eq:inner_s}) for $s$-rectangular set.
        \STATE $\hat{Q}^{k}_h (s,a) = \min\left\{\hat{r}(s,a) + \sigma_{\hat{\gP}_h}(\hat{V}_{h+1}^\pi)(s,a) + b_h^k(s,a), H\right\} $.
        \ENDFOR
        \FOR{ $\forall s \in \gS$}
        \STATE $\hat{V}_h^k(s) = \left\langle \hat{Q}_h^k(s, \cdot), \pi_h^k(\cdot \mid s) \right\rangle$.
        \ENDFOR
        \ENDFOR
        \STATE{ {\color{gray} \# Policy Improvement}}
        \FOR{$\forall h, s, a \in [H] \times \gS \times \gA$}
        \STATE $\pi_h^{k+1}(a \mid s) = \frac{\pi_h^{k}\exp(-\beta \hat{Q}^{\pi}_h (s,a))}{\sum_{a^\prime} \exp(-\beta \hat{Q}^{\pi}_h (s,a^\prime))} $.
        \ENDFOR
        \STATE Update empirical estimate $\hat{r}$, $\hat{P}$ with Equation (\ref{eq:empirical}).
        \ENDFOR
    \end{algorithmic}
\end{algorithm}


\section{Theoretical results}

We are now ready to analyze the theoretical results of our algorithm under the uncertainly set.

\subsection{Results under $(s,a)$-rectangular uncertainty set} Equipped with Algorithm \ref{alg} and the bonus function described in Equation \ref{bonus_sa}. 
We obtain the regret upper bound under $(s,a)$-rectangular uncertainty set described in the following Theorem.

\begin{restatable*}[Regret under $(s,a)$-rectangular uncertainty set]{thm}{sa}
\label{thm:sa}
With learning rate $\beta = \sqrt{\frac{2 \log A}{H^2 K}}$ and bonus term $b_h^k$ as (\ref{bonus_sa}), with probability at least $ 1 - \delta$, the regret incurred by Algorithm \ref{alg} over $K$ episodes is bounded by 
\begin{align*}
    \text{Regret}(K) 
    = O \left( H^2  S \sqrt{AK\log \left( SAH^2 K^{3/2} ( 1 + \rho) / \delta \right)}\right) \,.
\end{align*}
\end{restatable*}

\begin{rem}
When $\rho = 0$, the problem reduces to non-robust reinforcement learning. In such case our regret upper bound is $\tilde{O}\left( H^2 S \sqrt{AK} \right)$, which is in the same order of policy optimization algorithms for the non-robust case \citet{shani2020optimistic}. 
\end{rem}
While we defer the detailed proof to the appendix \ref{appendix:thm1}, we remark on the techniques used in our proof. 

The main challenge of deriving a robust regret is to quantify the uncertainty of the transition. In the non-robust case, this uncertainty is solely incurred by limited interaction with the environment. However, in the robust case, the uncertainty is caused by both the limited interaction and the uncertainty set. With the compound causes of uncertainty we choose not to directly use concentration inequality $\sigma_{\hat{\gP}_{(s,a)}}(V) - \sigma_{\gP_{(s,a)}}(V)$ and instead resort to the dual form Equation (\ref{eq:inner_sa}). 

Notice that now the difference of $\sigma_{\hat{\gP}_{(s,a)}}(V) - \sigma_{\gP_{(s,a)}}(V)$ is only incurred by difference in the value of $\sum_{s^\prime}P_h^o(s^\prime \mid s,a) \left( \eta - \hVpn(s^\prime)\right)_{+}$. When $\eta$ is bounded, we can use Hoeffding's inequality to control it. We then investigate the range of possible optimal values of $\eta$ and use an $\epsilon$-net argument.

Our algorithm and analysis techniques can also extend to other uncertainty sets, such as KL divergence constrained uncertainly set. We include the KL divergence result in Appendix \ref{appendix:thm3}. 

\subsection{Results under $s$-rectangular uncertainty set}
Beyond the $(s,a)$-rectangular uncertainty set, we also extends to $s$-rectangular uncertainty set (Definition \ref{def:s}).
Recall that value-based methods do not extend to $s$-rectangular uncertainty set as there might not exist a deterministic optimal policy.

\begin{restatable*}[Regret under $s$-rectangular uncertainty set]{thm}{s}
\label{thm:s}
With learning rate $\beta = \sqrt{\frac{2 \log A}{H^2 K}}$ and bonus term $b_h^k$ as (\ref{bonus_s}), with probability at least $ 1 - \delta$, the regret of Algorithm \ref{alg} is bounded by 
\begin{align*}
    \text{Regret}(K) 
    = O \left( SA^2 H^2\sqrt{K\log(SA^2H^2K^{3/2}(1+\rho) / \delta)}\right)  \,.
\end{align*}
\end{restatable*}
\begin{rem}
When $\rho = 0$, the problem reduces to non-robust reinforcement learning. In such case our regret upper bound is $\tilde{O}\left( SA^2 H^2 \sqrt{K} \right)$. Our result is the first theoretical result for learning a robust policy under $s$-rectangular uncertainty set, as previous results only learn the robust value function \citep{yang2021towards}. 
\end{rem}

The analysis and techniques used for Theorem \ref{thm:s} hold great similarity to those ones used for Theorem \ref{thm:sa}. The main difference is on bounding $\hPs (\hVpn)(s,a)  - \sPs(\hVpn)(s,a)$. As the robustness of $\hPs (\hVpn)(s,a)$ is no longer independent for different actions, we can not reduce the problem of $\hPs (\hVpn)(s,a)$ into a scalar optimization problem. Instead, through analyzing the Lagrangian form, we obtain the $A$-dimensional convex optimization problem (\ref{eq:inner_s}), which is solvable in $O(A)$. Different from the $(s,a)$-rectangular case, our Lagrangian form has $A$ times more variables, which complicates the solution regions of the optimum. Through proof by contradiction argument, we find the optimal ranges of each dual variable separately. With the optimum of $\eta$, we can apply concentration inequalities uniformly over the range of dual variables. The theorem follows the same arguments of Theorem \ref{thm:sa} thereafter.

\section{Empirical results} 
To validate our theoretical findings, we conduct a preliminary empirical analysis of our purposed robust policy optimization algorithm.  

\begin{wrapfigure}[15]{l}{5cm}
\centering
\includegraphics[width=0.3\textwidth]{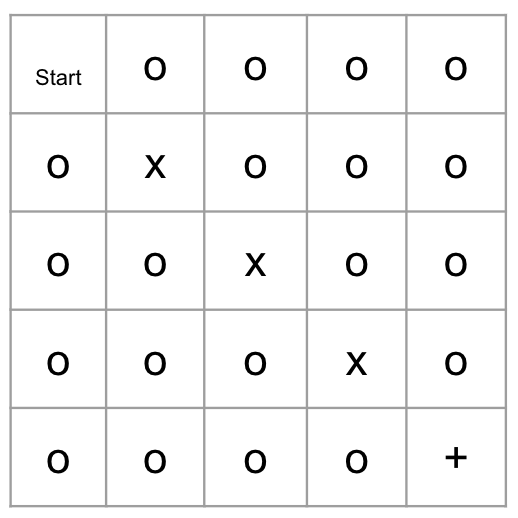}
\caption{Example of the Gridworld environment.}\label{fig:grid}
\end{wrapfigure}
\paragraph{Environment}
We conduct the experiments with the Gridworld environment, which is an early example of reinforcement learning from \cite{sutton2018reinforcement}. The environment is two-dimensional and is in a cell-like environment. Specifically, the environment is a $5 \times 5$ grid, where the agent starts from the upper left cell. The cells consist of three types, road (labeled with $o$), wall (labeled with $x$), or reward state (labeled with $+$). The agent can safely walk through the road cell but not the wall cell. Once the agent steps on the reward cell, it will receive a reward of 1, and it will receive no rewards otherwise. The goal of the agents is to collect as many rewards as possible within the allowed time. The agent has four types of actions at each step, up, down, left, and right. After taking the action, the agent has a success probability of $p$ to move according to the desired direction, and with the remaining probability of moving to other directions. 

\paragraph{Robust MDP}
To simulate the robust MDP, we create a nominal transition dynamic with success probability $p = 0.9$. The learning agent will interact with this nominal transition during training time and interact with a perturbed transition dynamic during evaluation. The transitions are perturbed along the direction is agent is directing with a constraint of $\rho$ under $(s,a)$-rectangular set. Figure \ref{fig:grid} shows an example of our environment, where the perturbation caused some of the optimal policies under nominal transition to be sub-optimal under robust transitions. 
We denote the perturbed transition as robust transitions in our results.
\paragraph{Algorithm configuration}
We implement our proposed robust policy optimization algorithm along with the non-robust variant of it \cite{shani2020optimistic}. The inner minimization of our Algorithm \ref{alg} is computed through its dual formulation for efficiency. Our algorithm is implemented with the rLberry framework \citep{rlberry}.

\paragraph{Results}
We present results with $\rho = 0.1, 0.2, 0.3$ here in Figure \ref{fig:exp}. We present the averaged cumulative rewards during evaluation. Regardless of the level of uncertainty, we observe that the robust variant of the policy optimization algorithm is more robust to dynamic changes as it is able to obtain a higher level of rewards than its non-robust variant.


\begin{figure}[h]
     \centering
     \begin{subfigure}[b]{0.3\textwidth}
         \centering
         \includegraphics[width=\textwidth]{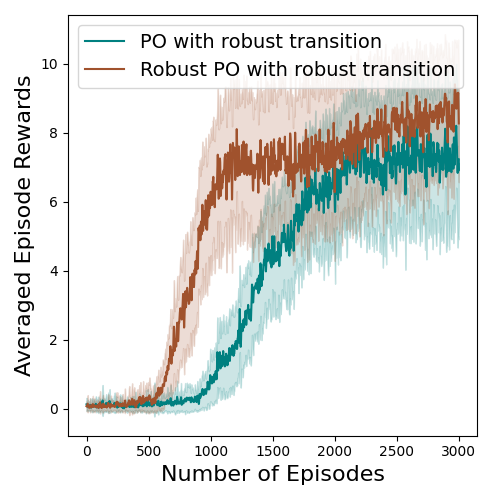}
         \caption{$\rho = 0.1$}
         \label{fig:rho1}
     \end{subfigure}
     \hfill
     \begin{subfigure}[b]{0.3\textwidth}
         \centering
         \includegraphics[width=\textwidth]{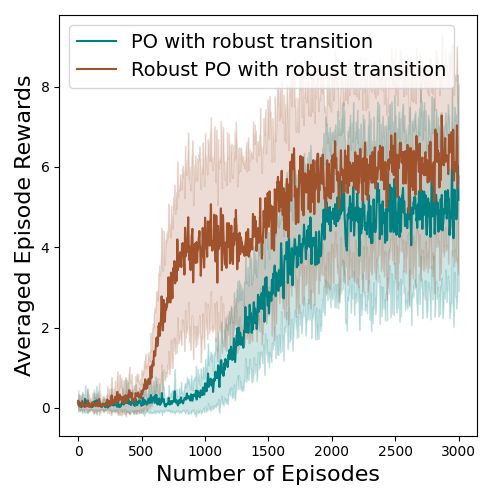}
          \caption{$\rho = 0.2$}
         \label{fig:rho2}
     \end{subfigure}
     \hfill
     \begin{subfigure}[b]{0.3\textwidth}
         \centering
         \includegraphics[width=\textwidth]{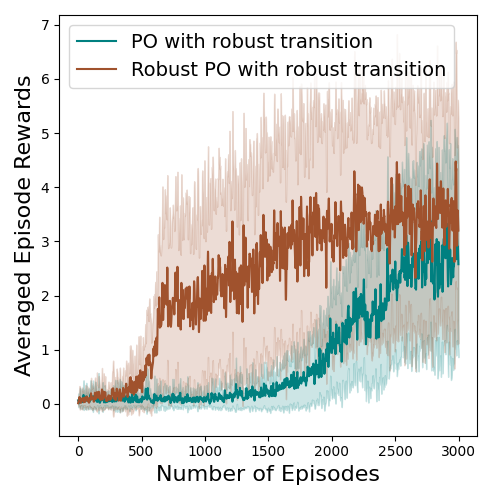}
          \caption{$\rho = 0.3$}
         \label{fig:rho3}
     \end{subfigure}
        \caption{Cumulative rewards obtained by robust and non-robust policy optimization on robust transition with different level of uncertainty $\rho = 0.1, 0.2, 0.3$ under $\ell_1$ distance.}
        \label{fig:exp}
\end{figure}

\section{Conclusion and future directions}

In this paper, we studied the problem of regret minimization in robust MDP with a rectangular uncertainty set. We proposed a robust variant of optimistic policy optimization, which achieves sublinear regret in all uncertainty sets considered. Our algorithm delicately balances the exploration-exploitation trade-off through a carefully designed bonus term, which quantifies not only the uncertainty due to the limited observations but also the uncertainty of robust MDPs. Our results are the first regret upper bounds in robust MDPs as well as the first non-asymptotic results in robust MDPs without access to a generative model. 

For future works,
while our analysis achieves the same bound as the policy optimization algorithm in ~\cite{shani2020optimistic} when the robustness level $\rho=0$, we suspect some technical details could be improved. For example, we required $P_h^o$ to be positive for any $s,a$ so that we could do a change of variable to form an efficiently solvable Fenchel dual. However, the actual positive value gets canceled out later and does not show up in the bound, suggesting that the strictly positive assumption might be an artifact of analysis. 

Furthermore, our work could also be extended in several directions. One is to consider other characterization of uncertainty sets, such as the Wasserstein distance metric. Another direction is to extend robust MDPs to a wider family of MDPs, such as the MDP with infinitely many states and with function approximation.


\bibliography{ref}

\clearpage

\appendix
\section{Proofs of Theorem 1}\label{appendix:thm1}

\subsection{Good events}
We first define the following good events, in which case we estimate the reward function and the nominal transition functions fairly accurately. 
\begin{align*}
    \gG_{k}^{r} = \ & \left\{\forall s, a, h:\left|r_{h}(s, a)-\hat{r}_{h}^{k}(s, a)\right| \leq \sqrt{\frac{2 \ln (2 SAH^2K / \delta^\prime)}{N_h^k(s,a)}}\right\} \,, \\
    \gG_k^{p} = \ & \left\{\forall s, a, h:\sPsa (\hVpn)(s)  - \hPsa(\hVpn)(s)
    \leq C_h^k(s,a) \right\}\,,
\end{align*}
where $C_h^k(s,a) = H \sqrt{\frac{4 S \log(3SAH^2K^{3/2}(4 + \rho)/ \delta^\prime)}{N_h^k(s,a)}} + \frac{1}{\sqrt{K}}$.

When the two good events happens at the same time, we say the algorithm in inside the good event $\gG = \left( \bigcap^K_{k=1} \gG_{k}^{r}\right) \bigcap \left( \bigcap^K_{k=1} \gG_{k}^{p}\right)$. The following lemma shows that $\gG$ happens with high probability by setting $\delta^\prime$ properly. 

\begin{lem}[Good event]
Let $\delta = 2 \delta^{\prime}$,  then the good event happens with high probability, i.e. $\mathbb{P}\left[ \gG\right] \geq 1 - \delta$.
\end{lem}
\begin{proof}
    By Hoeffding's inequality and an union bound on all $s,a$, all possible values of $N_k(s,a)$ and $k$, we have $\mathbb{P}\left[ \bigcap^K_{k=1} \gG_{k}^{r}\right] \geq 1 - \delta^\prime$. By Lemma \ref{lem:sa_con}, we have $\mathbb{P}\left[ \bigcap^K_{k=1} \gG_{k}^{p}\right] \geq 1 - \delta^\prime$ Then set $\delta = 2\delta^\prime$ and we have the desired result. 
\end{proof}

\subsection{Design of the bonus function}

In the case of $(s,a)$-rectangular uncertainty set, we use the following bonus function $b_h^k(s,a)$ to encourage exploration. 
\begin{align}\label{bonus_sa}
    b_h^k(s,a) = \sqrt{\frac{2 \log(3SAH^2K/ \delta)}{N_h^k(s,a)}} + H \sqrt{\frac{4 S \log(3SAH^2K^{3/2}(4 + \rho)/ \delta)}{N_h^k(s,a)}} + \frac{1}{\sqrt{K}} \,. 
\end{align}

\subsection{Regret Analysis}

Armed with the defined good event, we are now ready to present the anlysis of Theorem \ref{thm:sa}, which establishes the regret of the Algorithm under $(s,a)$-uncertainty set.

\sa

\begin{proof}
    We start with decomposing the regret as follows,
    \begin{align*}
    \text{Regret}(K) 
    = \ & \sum^K_{k=1} V_1^{\ast}(s) - V_1^{\pi_k}(s)\\
    = \ & \sum^K_{k=1} \left(V_1^{\ast} (s)- \hat{V}_1^{\pi_k}(s) \right) + \left(\hat{V}_1^{\pi_k} (s) - V_1^{\pi_k}(s)\right) \,.
    \end{align*}
    By Lemma \ref{lem:pseudo_regret_sa} and Lemma \ref{lem:sa_con}, with probability at least $1 - \delta$, we have 
    \begin{align*}
        \text{Regret}(K) 
        = \ & O \left(H^2\sqrt{K \log A} \right) +O \left(H^2 S\sqrt{AK \log \left( SAH^2 K^{3/2} ( 1 + \rho) / \delta \right)} \right) \\
        = \ & O \left( H^2  S \sqrt{AK\log \left( SAH^2 K^{3/2} ( 1 + \rho) / \delta \right)}\right) \,.
    \end{align*}
\end{proof}

\begin{lem}\label{lem:pseudo_regret_sa}
With probability at least $ 1 - \delta$, we have
\begin{align*}
     \sum^K_{k=1}V_1^{\ast}(s) - \hat{V}_1^{\pi_k}(s) = O \left(H^2\sqrt{K \log A} \right)\,.
\end{align*}
\end{lem}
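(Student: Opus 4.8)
\textbf{Proof proposal for Lemma~\ref{lem:pseudo_regret_sa}.}
The plan is to bound the ``optimism gap'' $\sum_k V_1^\ast(s) - \hat V_1^{\pi_k}(s)$ by combining an optimism claim (the estimated value dominates the optimal value) with the standard online-mirror-descent regret bound for the policy improvement step. First I would establish \emph{optimism}: conditioned on the good event $\gG$, for every $k,h,s$ we have $\hat V_h^{\pi_k}(s) \ge V_h^{\ast}(s)$ when $\pi_k$ is compared against $\pi^\ast$ in the appropriate sense, or more precisely that $\hat Q_h^k(s,a) \ge Q_h^{\pi}(s,a)$ along the trajectory that matters. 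The key mechanism is that the bonus $b_h^k(s,a)$ in \eqref{bonus_sa} was designed so that $\hat r + \sigma_{\hat\gP_h}(\hat V_{h+1})(s,a) + b_h^k(s,a)$ upper bounds $r + \sigma_{\gP_h}(\hat V_{h+1})(s,a)$: the first term of $b_h^k$ handles the reward estimation error via $\gG_k^r$, and the second term (together with the $1/\sqrt K$ slack) handles $\sigma_{\gP_h}(\hat V_{h+1}) - \sigma_{\hat\gP_h}(\hat V_{h+1}) \le C_h^k(s,a)$ via $\gG_k^p$ and Lemma~\ref{lem:sa_con}. A backward induction on $h$ (with the truncation at $H$ preserving the inequality since $V_h^\ast \le H$) then gives $\hat V_1^{\pi}(s) \ge V_1^{\pi, \text{worst}}$-type domination, and in particular $\hat V_1^{\pi_k}(s) \ge V_1^\ast(s)$ is \emph{not} quite what we want — rather we get that for the optimal policy $\pi^\ast$, running the optimistic evaluation with $\pi^\ast$ would dominate $V_1^\ast$; combined with the policy-improvement regret this yields the bound.

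More concretely, I would write
\begin{align*}
\sum_{k=1}^K V_1^\ast(s) - \hat V_1^{\pi_k}(s) \le \sum_{k=1}^K \left( \hat V_1^{\pi^\ast,k}(s) - \hat V_1^{\pi_k,k}(s) \right),
\end{align*}
using optimism $V_1^\ast(s) \le \hat V_1^{\pi^\ast,k}(s)$, where $\hat V^{\pi^\ast,k}$ denotes the optimistic value of the \emph{fixed} comparator policy $\pi^\ast$ under the $k$-th empirical model and bonus. Then I would expand the per-episode difference by the standard value-difference / performance-difference identity across the $H$ steps, writing it as a sum over $h$ of $\langle \hat Q_h^k(s,\cdot), \pi_h^\ast(\cdot|s) - \pi_h^k(\cdot|s)\rangle$ terms (the transition-dependent terms telescope because both values are evaluated under the \emph{same} empirical/robust operator $\sigma_{\hat\gP_h}$). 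This is exactly where the fundamental OMD inequality \eqref{eq:omd} from \citet{shani2020optimistic} enters: with learning rate $\beta = \sqrt{2\log A/(H^2 K)}$ and $\|\hat Q_h^k\|_\infty \le H$, summing over $k$ bounds $\sum_k \sum_h \langle \hat Q_h^k, \pi_h^\ast - \pi_h^k\rangle$ by $O(H \cdot \frac{\log A}{\beta} + \beta H \cdot H^2 K) = O(H^2\sqrt{K\log A})$ after the outer sum over $h$ contributes another factor $H$.

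The main obstacle I anticipate is making the optimism step fully rigorous in the robust setting: unlike non-robust PO, the quantity $\sigma_{\hat\gP_h}(\hat V_{h+1})$ is itself a min over transition kernels, so I must argue that $\sigma_{\gP_h}(V) - \sigma_{\hat\gP_h}(V) \le C_h^k$ \emph{and} that this, plus the backward induction propagating $\hat V_{h+1} \ge V_{h+1}^\ast$ through the monotone operator $\sigma$, actually closes — in particular I need $\sigma$ to be monotone in its value argument (it is, being a pointwise min of linear-in-$V$ maps with nonnegative coefficients) and I need the bonus to absorb both the $\sigma$-error and a term accounting for using $\hat V_{h+1}$ rather than $V_{h+1}^\ast$ inside $\sigma$. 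Carefully, the induction should be: assume $\hat V_{h+1}^k \ge V_{h+1}^\ast$ pointwise; then $\sigma_{\hat\gP_h}(\hat V_{h+1}^k) \ge \sigma_{\hat\gP_h}(V_{h+1}^\ast) \ge \sigma_{\gP_h}(V_{h+1}^\ast) - C_h^k$, and adding $\hat r + b_h^k \ge r$ (minus the $C_h^k$-absorbing part of $b_h^k$) gives $\hat Q_h^k \ge Q_h^\ast$, hence $\hat V_h^k(s) = \langle \hat Q_h^k(s,\cdot), \pi_h^k(\cdot|s)\rangle$ — wait, here the comparator policy issue resurfaces, so in fact I should carry the comparator policy $\pi^\ast$ through the evaluation as indicated above rather than $\pi_k$. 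Once that bookkeeping is pinned down, the rest is the routine OMD computation and a union bound folding the failure probability into $\delta$.
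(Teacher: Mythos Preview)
Your high-level plan (use the good event to kill model-error terms, then apply the OMD inequality~\eqref{eq:omd} with $\beta=\sqrt{2\log A/(H^2K)}$ to bound the policy terms) matches the paper's endpoint, but the decomposition you propose is different and contains a gap.

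The paper does \emph{not} introduce an auxiliary optimistic value $\hat V^{\pi^\ast,k}$ nor prove a separate optimism statement. Instead it decomposes $V_h^\ast - \hat V_h^{\pi_k}$ directly:
\[
V_h^\ast - \hat V_h^{\pi_k}
= \langle Q_h^\ast-\hat Q_h^{\pi_k},\pi^\ast\rangle
+ \langle \hat Q_h^{\pi_k},\pi^\ast-\pi_k\rangle,
\]
then splits $\sigma_{\gP_h}(V_{h+1}^\ast)-\sigma_{\hat\gP_h}(\hat V_{h+1}^{\pi_k})
= [\sigma_{\gP_h}(\hat V_{h+1}^{\pi_k})-\sigma_{\hat\gP_h}(\hat V_{h+1}^{\pi_k})]
+ [\sigma_{\gP_h}(V_{h+1}^\ast)-\sigma_{\gP_h}(\hat V_{h+1}^{\pi_k})]$.
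The first bracket plus the reward error is $\le b_h^k$ on $\gG$; the second bracket is bounded by $\max_{P_h\in\gP_h}P_h(\cdot\mid s,a)(V_{h+1}^\ast-\hat V_{h+1}^{\pi_k})$, which sets up a recursion that unrolls under a sequence of worst-case transitions $p_h=\arg\max_{P_h\in\gP_h}P_h(V_{h+1}^\ast-\hat V_{h+1}^{\pi_k})$. After unrolling, only the OMD terms $\langle \hat Q_h^{\pi_k},\pi^\ast-\pi_k\rangle$ remain, and Lemma~\ref{lem:omd} finishes.

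Your route can be salvaged, but the specific claim that ``the transition-dependent terms telescope because both values are evaluated under the same empirical/robust operator $\sigma_{\hat\gP_h}$'' is wrong as stated: $\sigma_{\hat\gP_h}$ is a pointwise minimum, so $\sigma_{\hat\gP_h}(V_1)-\sigma_{\hat\gP_h}(V_2)$ is not $P(V_1-V_2)$ for any fixed $P$ and does not telescope. You still need the one-sided inequality $\sigma_{\hat\gP_h}(V_1)-\sigma_{\hat\gP_h}(V_2)\le \max_{P\in\hat\gP_h}P(V_1-V_2)$, exactly as in the paper, and the recursion then unrolls under these argmax transitions rather than telescoping under a single kernel. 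Once you add that step, your two-stage argument (optimism for $\hat V^{\pi^\ast,k}\ge V^\ast$ via backward induction and monotonicity of $\sigma$, then a robust performance-difference bound for $\hat V^{\pi^\ast,k}-\hat V^{\pi_k}$) does go through and yields the same $O(H^2\sqrt{K\log A})$; the paper's single-pass decomposition simply avoids defining $\hat V^{\pi^\ast,k}$ and folds both pieces into one recursion. Note also that the good event $\gG_k^p$ as stated is phrased with $\hat V_{h+1}^{\pi_k}$, whereas your induction needs it for $\hat V_{h+1}^{\pi^\ast,k}$; this is harmless because the proof of Lemma~\ref{lem:sa_con} actually bounds $\|\hat P_h^{o,k}-P_h^o\|_1$ and hence holds uniformly over $V\in[0,H]^S$, but you should say so explicitly.
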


\begin{proof}
    For any $h \in [1, H]$, we have
\begin{align*}
    & V_h^{\ast}(s) - \hat{V}_h^{\pi_k}(s) \\
    = \ & \langle Q_h^{\ast}(s, \cdot) , \pi_\ast(\cdot \mid s) \rangle - \langle \hat{Q}_h^{\pi_k}(s, \cdot) , \pi_k(\cdot \mid s) \rangle \\
    = \ & \langle Q_h^{\ast}(s, \cdot) -  \hat{Q}_h^{\pi_k}(s, \cdot), \pi_\ast(\cdot \mid s) \rangle + \langle \hat{Q}_h^{\pi_k}(s, \cdot) , \pi_\ast(\cdot \mid s) - \pi_k(\cdot \mid s)  \rangle \\
    = \ & \mathbb{E}_{\pi_\ast} \left[ (r_h(s, a) - \hat{r}_{h}^{k}(s,a)) + (\sPsa (V_{h+1}^\ast)(s) -  \hPsa(\hVpn)(s)) - b_h^k(s,a)\right] \\
    & \ + \langle \hat{Q}_h^{\pi_k}(s, \cdot) , \pi_\ast(\cdot \mid s) - \pi_k(\cdot \mid s)  \rangle \\
    = \ & \mathbb{E}_{\pi_\ast} \left[ (r_h(s, a) - \hat{r}_{h}^{k}(s,a)) +  (\sPsa(\hVpn)(s) -  \hPsa(\hVpn)(s)) - b_h^k(s,a)\right] \\
    & \ + \mathbb{E}_{\pi_\ast} \left[  \sPsa (V_{h+1}^\ast)(s) - \sPsa(\hVpn)(s)\right] + \langle \hat{Q}_h^{\pi_k}(s, \cdot) , \pi_\ast(\cdot \mid s) - \pi_k(\cdot \mid s)  \rangle \,,
\end{align*}
where the third equality is by the update rule of our algorithm and the robust bellman equation.

By the design of our bonus function, conditioned on the good event, we have
\begin{align*}
     (r_h(s, a) - \hat{r}_{h}^{k}(s,a)) + (\sPsa (V_{h+1}^\ast)(s) -  \hPsa(\hVpn)(s)) - b_h^k(s,a) \leq 0 \,.
\end{align*}

Let $q_h(\cdot \mid s,a) = \mathop{\arg\min}_{P_h \in \gP_h} \limits P_h(\cdot \mid s,a) \hVpn $, then we have
\begin{align*}
    & \sPsa (V_{h+1}^\ast)(s) - \sPsa(\hVpn)(s)  \\
    = \ & \min_{P_h \in \gP_h} \limits P_h(\cdot \mid s,a) V_{h+1}^\ast - \min_{P_h \in \gP_h} \limits P_h(\cdot \mid s,a) \hVpn  \\
    = \ & \min_{P_h \in \gP_h} \limits P_h(\cdot \mid s,a) V_{h+1}^\ast - q_h(\cdot \mid s,a) \hVpn  \\
    \le \ & q_h(\cdot \mid s,a) (V_{h+1}^\ast -  \hVpn)  \\
     \le \ & \max_{P_h\in \gP_h} \limits P_h(\cdot \mid s,a) (V_{h+1}^\ast - \hVpn)\,.
\end{align*}

Let $p_h(\cdot \mid s,a) = \mathop{\arg\max}_{P_h \in \gP_h} \limits P_h(\cdot \mid s,a) (V_{h+1}^\ast)(s,a) $, Then we have the following relation hold conditioned on the good event:
\begin{align*}
    & V_h^{\ast}(s) - \hat{V}_h^{\pi_k}(s) \\
    \le \ & \mathbb{E}_{\pi_\ast} \left[  \sup_{P_h\in \gP_h} \limits P_h(\cdot \mid s,a) (V_{h+1}^\ast - \hVpn)\right] + \langle \hat{Q}_h^{\pi_k}(s, \cdot) , \pi_\ast(\cdot \mid s) - \pi_k(\cdot \mid s)  \rangle \\
    = \ & \mathbb{E}_{\pi_\ast, p_h} \left[ V_{h+1}^\ast(s) - \hVpn(s) \right] + \langle \hat{Q}_h^{\pi_k}(s, \cdot) , \pi_\ast(\cdot \mid s) - \pi_k(\cdot \mid s)  \rangle \,.
\end{align*}

Then, by applying above relation recursively and with the fact that for any policy $\pi$ and state $s$, $V_{H+1}^{\ast}(s) = \hat{V}_{H+1}^{\pi_k}(s) = 0$, we have 
\begin{align*}
    V_1^{\ast}(s) - \hat{V}_1^{\pi_k}(s)
    \le \sum^H_{h=1} \mathbb{E}_{\pi_\ast, \{p_{t}\}^{h-1}_{t=1}} \left[ \langle \hat{Q}_h^{\pi_k}(s, \cdot) , \pi_\ast(\cdot \mid s) - \pi_k(\cdot \mid s)  \rangle \right] \,.
\end{align*}

Summing over $k$, we get

\begin{align*}
    \sum^K_{k=1}V_1^{\ast}(s) - \hat{V}_1^{\pi_k}(s)
    & \le \sum^K_{k=1}\sum^H_{h=1} \mathbb{E}_{\pi_\ast, \{p_{t}\}^{h-1}_{t=1}} \left[ \langle \hat{Q}_h^{\pi_k}(s, \cdot) , \pi_\ast(\cdot \mid s) - \pi_k(\cdot \mid s)  \rangle \right] \\ 
    &= \sum^H_{h=1} \mathbb{E}_{\pi_\ast, \{p_{t}\}^{h-1}_{t=1}} \left[ \sum^K_{k=1}\langle \hat{Q}_h^{\pi_k}(s, \cdot) , \pi_\ast(\cdot \mid s) - \pi_k(\cdot \mid s)  \rangle \right] \,.
\end{align*}

By standard results for online mirror descent (Lemma \ref{lem:omd}), we have
\begin{align*}
    \sum^K_{k=1} \langle \hat{Q}_h^{\pi_k}(s, \cdot) , \pi_\ast(\cdot \mid s) - \pi_k(\cdot \mid s)  \rangle \leq \frac{\log (A)}{\beta} + \frac{\beta}{2} \sum^K_{k=1} \sum_{a \in \gA} \pi_h^\ast (a \mid s) (\hat{Q}_h^{\pi_k}(s, a) )^2 \,.
\end{align*}
By the update rule of Algorithm \ref{alg}, we have $0 \le \hat{Q}_h^{\pi_k}(s, a) \le H$, for all $h, k$. Then take $\beta = \sqrt{\frac{2 \log A}{H^2 K}}$, 

\begin{align*}
    \sum^K_{k=1} \langle \hat{Q}_h^{\pi_k}(s, \cdot) , \pi_\ast(\cdot \mid s) - \pi_k(\cdot \mid s)  \rangle \leq 
     \sqrt{2 H^2 K\log A}\,.
\end{align*}

Finally, we have
\begin{align*}
     \sum^K_{k=1}V_1^{\ast}(s) - \hat{V}_1^{\pi_k}(s) \le H \sqrt{2 H^2 K\log A} = O \left(H^2\sqrt{K \log A} \right)\,.
\end{align*}
\end{proof}

\begin{lem}\label{lem:estimation_sa}
With probability at least $ 1 - \delta$, we have
\begin{align*}
    \sum^K_{k=1}(\hat{V}_1^{\pi_k} - V_1^{\pi_k})(s)  = O \left(H^2 S\sqrt{AK \log \left( SAH^2 K^{3/2} ( 1 + \rho) / \delta \right)} \right)\,.
\end{align*}
\end{lem}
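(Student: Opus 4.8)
The plan is to bound $\sum_{k=1}^K (\hat V_1^{\pi_k} - V_1^{\pi_k})(s)$ by tracking the per-step overestimation introduced by the bonus and the transition uncertainty, in the spirit of the ``optimism'' decomposition but adapted to the robust Bellman operator. First I would derive a one-step recursion: for any $h$,
\begin{align*}
    \hat V_h^{\pi_k}(s) - V_h^{\pi_k}(s) = \langle \hat Q_h^{\pi_k}(s,\cdot) - Q_h^{\pi_k}(s,\cdot), \pi_h^k(\cdot\mid s)\rangle\,,
\end{align*}
and then expand $\hat Q_h^{\pi_k} - Q_h^{\pi_k}$ using the algorithm's update rule and the robust Bellman equation. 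This gives (on the good event, where the $\min\{\cdot,H\}$ truncation is not binding)
\begin{align*}
    \hat Q_h^{\pi_k}(s,a) - Q_h^{\pi_k}(s,a) = (\hat r_h^k - r_h)(s,a) + \big(\hPsa(\hVpn)(s) - \sPsa(\Vpn)(s)\big) + b_h^k(s,a)\,,
\end{align*}
which I would split, via adding and subtracting $\sPsa(\hVpn)(s)$, into: (i) the reward error $(\hat r_h^k - r_h)$, controlled by $\gG_k^r$; (ii) the empirical-vs-true robust operator gap $\hPsa(\hVpn)(s) - \sPsa(\hVpn)(s)$, controlled by $\gG_k^p$ (\ie by $C_h^k(s,a)$); (iii) a term $\sPsa(\hVpn)(s) - \sPsa(\Vpn)(s)$ that propagates the value error to step $h+1$ — bounded as in Lemma~\ref{lem:pseudo_regret_sa} by $\max_{P_h\in\gP_h} P_h(\cdot\mid s,a)(\hVpn - \Vpn)$, \ie by a worst-case one-step transition; and (iv) the bonus $b_h^k(s,a)$ itself.

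Next I would argue that terms (i)+(ii)+(iv) together are $O(b_h^k(s,a))$: the bonus in \eqref{bonus_sa} is constructed precisely so that $\sqrt{2\log(\cdot)/N_h^k} \ge |\hat r_h^k - r_h|$ and $H\sqrt{4S\log(\cdot)/N_h^k} + 1/\sqrt K \ge C_h^k(s,a)$ on $\gG$, so the sum of reward error, operator gap, and bonus is at most $2 b_h^k(s,a)$ (up to constants). Then I would unroll the recursion over $h=1,\dots,H$, pushing the value error forward along the adversarially chosen transitions $\{p_t\}$ (exactly as in Lemma~\ref{lem:pseudo_regret_sa}, using $V_{H+1}=\hat V_{H+1}=0$), to get
\begin{align*}
    \hat V_1^{\pi_k}(s) - V_1^{\pi_k}(s) \le \sum_{h=1}^H \mathbb{E}_{\pi_k, \{p_t\}_{t=1}^{h-1}}\big[ O(b_h^k(s_h,a_h)) \big]\,.
\end{align*}
Summing over $k$ and using $b_h^k = O\big(\sqrt{S\log(SAH^2K^{3/2}(1+\rho)/\delta)}\,(1+H)/\sqrt{N_h^k}\big) + O(1/\sqrt K)$, the $1/\sqrt K$ constant terms contribute $O(H\cdot K\cdot H/\sqrt K) = O(H^2\sqrt K)$ overall, and the main contribution reduces to bounding $\sum_{k=1}^K \sum_{h=1}^H \mathbb{E}_{\pi_k}[1/\sqrt{N_h^k(s_h,a_h)}]$.

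The main obstacle is this last counting/concentration step: the expectation is over the \emph{nominal} on-policy distribution of $\pi_k$ (which governs the actual trajectories), but $b_h^k$ is evaluated inside an expectation over the possibly different measure $\mathbb{E}_{\pi_k,\{p_t\}}$ induced by the adversarial transitions used to unroll the recursion. I would handle this by the standard trick: since $\hat Q_h^{\pi_k}(s,a)-Q_h^{\pi_k}(s,a)\ge 0$ on the good event (optimism), $\hat V_h^{\pi_k}-V_h^{\pi_k}\ge 0$, which lets me upper bound the worst-case transition $p_h$ in term (iii) by the actual nominal transition $P_h^o$ plus the $\ell_1$-radius slack $\rho$; the $\rho$ slack times $\|\hVpn - \Vpn\|_\infty \le H$ gives a further $O(\rho H)$ per step that I fold into the bonus (this is why $\rho$ appears only inside logs in the final bound — the extra $\rho$ term matches the $\rho/(H\sqrt K)$-type slack already in $b_h^k$). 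Once everything is re-expressed against the true on-policy measure, a martingale/Azuma argument converts $\mathbb{E}_{\pi_k}[\cdot]$ into the realized visitation counts up to an $O(H^2\sqrt{K\log(1/\delta)})$ fluctuation, and the pigeonhole bound $\sum_{k,h} 1/\sqrt{N_h^k(s_h^k,a_h^k)} = O(H\sqrt{SAK})$ finishes the estimate, yielding the stated $O\big(H^2 S\sqrt{AK\log(SAH^2K^{3/2}(1+\rho)/\delta)}\big)$.
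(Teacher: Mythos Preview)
Your decomposition and recursive unrolling match the paper's proof exactly: expand $\hat Q_h^{\pi_k}-Q_h^{\pi_k}$ via the update rule and the robust Bellman equation, add and subtract $\sPsa(\hVpn)$, bound the propagation term $\sPsa(\hVpn)-\sPsa(\Vpn)$ by a worst-case one-step transition $p_h$, and on the good event absorb (i)+(ii)+(iv) into $2b_h^k$. The paper arrives at the same
\[
\sum_{k=1}^K\sum_{h=1}^H \mathbb{E}_{\pi_k,\{p_t\}_{t=1}^{h}}\big[2b_h^k(s,a)\big]
\]
and then simply invokes Lemma~\ref{lem:visit} on $\sum_{k,h}1/\sqrt{N_h^k(s,a)}$ to conclude; it does \emph{not} attempt to transport the expectation back to the nominal measure, and it does not insert an Azuma step.

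You are right that the expectation is taken under the adversarial kernels $\{p_t\}$ while $N_h^k$ is built from trajectories generated under $P_h^o$, and the paper does not explicitly reconcile this. However, your proposed fix breaks the argument rather than repairing it. Replacing $p_h$ by $P_h^o$ in term (iii) costs $\|p_h(\cdot\mid s,a)-P_h^o(\cdot\mid s,a)\|_1\cdot\|\hVpn-\Vpn\|_\infty\le \rho H$ at \emph{each} level $h$ and \emph{each} episode $k$, so the accumulated slack is $\rho H^2K$, which is linear in $K$. There is no ``$\rho/(H\sqrt K)$-type'' term in the bonus \eqref{bonus_sa} to absorb this: $\rho$ enters $b_h^k$ only through the logarithm, and the additive $1/\sqrt K$ piece is $\rho$-free. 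So the step ``fold into the bonus'' does not go through, and the subsequent Azuma-plus-pigeonhole plan never reaches a sublinear sum. To follow the paper, drop the change-of-measure entirely and apply Lemma~\ref{lem:visit} directly, accepting (as the paper does) that this step is stated without further justification.
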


\begin{proof}
    By the algorithm's update rule and the robust bellman equation, we have 
\begin{align*}
    (\hVp - \Vp )(s)
    = \ & \langle \hQp (s, \cdot) - \Qp (s, \cdot) , \pi_k(\cdot \mid s) \rangle \\
    = \ & \left\langle \hat{r}_h^k(s, \cdot) - r_h^k(s, \cdot)  + ( \sigma_{\hat{\gP}_{(s,\cdot)}}(\hVpn)(s,\cdot) - \sigma_{\gP_{(s,\cdot)}}(\Vpn)(s,\cdot) ) + b_h^k(s,\cdot),  \pi_k(\cdot \mid s) \right\rangle  \\
    = \ & \mathbb{E}_{\pi_k}\left[ \hat{r}_h^k(s, a) - r_h^k(s, a)  + ( \hPsa(\hVpn)(s) - \sPsa (\Vpn)(s) ) + b_h^k(s,a) \right]\,.
\end{align*}
By adding and subtracting a term $\sPsa (\hVpn)(s, a) $, we have 
\begin{align*}
    & \hPsa(\hVpn)(s) - \sPsa (\Vpn)(s) \\
    = \ & \hPsa(\hVpn)(s)  - \sPsa (\hVpn)(s) + \sPsa (\hVpn)(s)  - \sPsa (\Vpn)(s)\\
    \leq \ &   \hPsa (\hVpn)(s) - \sPsa (\hVpn)(s) + \max_{P_h \in \gP_h}P_h(\cdot \mid s,a)( \hVpn - \Vpn) \,.
\end{align*}

Let $p_h(\cdot \mid s,a) = \mathop{\arg\max}_{P_h \in \gP_h} \limits P_h(\cdot \mid s,a)(\hVpn - \Vpn)$, we have
\begin{align*}
    &(\hVp - \Vp )(s) \\
    \leq \ & \mathbb{E}_{\pi_k}\left[ \hat{r}_h^k(s, a) - r_h^k(s, a)+ \hPsa (\hVpn)(s) - \sPsa (\hVpn)(s) + p_h(\cdot \mid s,a)(\hVpn - \Vpn)  + b_h^k(s,a) \right] \\
    = \ & \mathbb{E}_{\pi_k, p_h}\left[ \hat{r}_h^k(s, a) - r_h^k(s, a)+ \hPsa (\hVpn)(s) - \sPsa (\hVpn)(s) + \hVpn(s) - \Vpn(s)  + b_h^k(s,a) \right] 
\end{align*}

By applying the above relation recursively and with the fact that for any policy $\pi$ and state $s$, $V_{H+1}^{\pi_k}(s) =  \hat{V}_{H+1}^{\pi_k}(s) = 0$, we have 
\begin{align*}
    & (\hat{V}_1^{\pi_k} - V_1^{\pi_k})(s) 
    \leq \ &\sum^H_{h=1} \mathbb{E}_{\pi_k, \{p_t\}^h_{t=1}} \left[ \hat{r}_h^k(s, a) - r_h^k(s, a) + \hPsa(\hVpn)(s) -  \sPsa (\hVpn)(s) + b_h^k(s,a)\right]\,.
\end{align*}

Conditioned on the good even and by the design of our bonus function, we have 
\begin{align*}
    \hat{r}_h^k(s, a) - r_h^k(s, a) + \hPsa(\hVpn)(s) -  \sPsa (\hVpn)(s) \le b_h^k(s,a) \,.
\end{align*}
Then, with probability at least $1 - \delta$, we have
\begin{align*}
     \sum^K_{k=1}(\hat{V}_1^{\pi_k} - V_1^{\pi_k})(s) 
    \leq \ &\sum^K_{k=1}\sum^H_{h=1} \mathbb{E}_{\pi_k, \{p_t\}^h_{t=1}} \left[ 2 b_h^k(s,a)\right]\\
    \leq \ & H\sqrt{K}+ O \left(  H \sqrt{S \log(SAH^2K^{3/2}(4 + \rho) / \delta)} \right)\sum^K_{k=1}\sum^H_{h=1} \mathbb{E}_{\pi_k, \{p_t\}^h_{t=1}} \left[\sqrt{\frac{1}{N_h^k(s,a)}} \right]\,.
\end{align*}

By Lemma \ref{lem:visit}, we have the bound of the visitation counts:
\begin{align*}
    \sum^K_{k=1}\sum^H_{h=1} \sqrt{\frac{1}{N_h^k(s,a)}} \leq 2 H \sqrt{SAK} \,.
\end{align*}
Combining everything, with probability at least $1 - \delta$
\begin{align*}
    \sum^K_{k=1}(\hat{V}_1^{\pi_k} - V_1^{\pi_k})(s)  = O \left(H^2 S\sqrt{AK \log \left( SAH^2 K^{3/2} ( 1 + \rho) / \delta \right)} \right) \,.
\end{align*}

\end{proof}

\begin{lem}\label{lem:sa_con}
For any $h,k,s,a$, the following inequality holds with probability at least $1-\delta^\prime$,
\begin{align*}
    \sPsa (\hVpn)(s)  - \hPsa(\hVpn)(s)
    \leq \ &   H\sqrt{\frac{4 S \log(3SAH^3K^{3/2}(4+\rho)/ \delta^\prime)}{N_h^k(s,a)}} + \frac{1}{H\sqrt{K}}\,.
\end{align*}
\end{lem}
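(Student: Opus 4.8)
The goal is to bound the gap $\sPsa(\hVpn)(s) - \hPsa(\hVpn)(s)$, i.e.\ the difference between the robust support function evaluated on the true uncertainty set (around the true nominal kernel $P_h^o$) and on the empirical set (around $\hat P_h^{o,k}$), both applied to the \emph{same} value vector $\hVpn$. The plan is to pass to the dual representation in Equation~(\ref{eq:inner_sa}): by Lemma~\ref{lem:sa_con}'s own setup (the Fenchel-conjugate reduction), for any value vector $V$ with $0\le V(s')\le H$ we have
\begin{align*}
    \sigma_{\gP_h(s,a)}(V) = \sup_{\eta}\ g(\eta; P_h^o), \qquad g(\eta;P) = \eta - \tfrac{\rho}{2}(\eta - \min_{s'} V(s'))_+ - \sum_{s'} P(s'\mid s,a)(\eta - V(s'))_+,
\end{align*}
and similarly $\hPsa(V) = \sup_\eta g(\eta;\hat P_h^{o,k})$. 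Since both are suprema of the \emph{same} family of functions that differ only through the linear-in-$P$ term $\sum_{s'} P(s'\mid s,a)(\eta-V(s'))_+$, the difference of the optima is controlled by $\sup_\eta |\sum_{s'}(P_h^o(s'\mid s,a) - \hat P_h^{o,k}(s'\mid s,a))(\eta - V(s'))_+|$: if $\eta^\star$ achieves (or nearly achieves) the sup for the true set, then $\sigma_{\gP} (V) - \hPsa(V) \le g(\eta^\star;P^o) - g(\eta^\star;\hat P) = \sum_{s'}(\hat P - P^o)(s'\mid s,a)(\eta^\star - V(s'))_+$.

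First I would restrict the range of the optimizer $\eta^\star$. Because the objective $g(\eta;P)$ has slope $1$ for $\eta < \min_{s'}V(s')$ and slope $1 - \rho/2 - 1 = -\rho/2 < 0$ once $\eta > \max_{s'} V(s')$ (all the positive-part terms active), the sup is attained at some $\eta^\star \in [\min_{s'}V(s'),\max_{s'}V(s')]$; with $0\le V\le H$ this gives $\eta^\star\in[0,H]$, hence $(\eta^\star - V(s'))_+ \in [0,H]$ for every $s'$. Therefore
\begin{align*}
    \sPsaV(\hVpn)(s) - \hPsa(\hVpn)(s) \le \big\| P_h^o(\cdot\mid s,a) - \hat P_h^{o,k}(\cdot\mid s,a)\big\|_1 \cdot H,
\end{align*}
and it remains to give a high-probability $\ell_1$-concentration bound for the empirical multinomial $\hat P_h^{o,k}(\cdot\mid s,a)$ around $P_h^o(\cdot\mid s,a)$. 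The standard bound (e.g.\ Weissman's inequality or an $\ell_\infty\to\ell_1$ union bound over the $S$ coordinates via Hoeffding) yields $\|P_h^o - \hat P_h^{o,k}\|_1 \le \sqrt{\tfrac{4 S \log(\cdot/\delta')}{N_h^k(s,a)}}$ on an event of probability $\ge 1-\delta'$; choosing the log factor as $\log(3SAH^3K^{3/2}(4+\rho)/\delta')$ absorbs the union bound over $s,a,h$, over the (at most $K$) possible values of $N_h^k(s,a)$, and a crude rounding term. The extra additive $\tfrac{1}{H\sqrt K}$ in the statement is a slack term that handles the degenerate case $N_h^k(s,a)=0$ (the $\max\{\cdot,1\}$ in the definition of the counter) and the $\eta$-net discretization discussed below; I would simply carry it through.

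The main obstacle is making the reduction "difference of suprema $\le$ difference of the shared term at one point" fully rigorous, since $\eta^\star$ for the true set is a random-free quantity but the \emph{bound} on $g(\eta^\star;\hat P)$ still involves the random $\hat P$; the clean way is to note the inequality $\sup_\eta g(\eta;P^o) - \sup_\eta g(\eta;\hat P) \le \sup_\eta\big(g(\eta;P^o)-g(\eta;\hat P)\big) = \sup_\eta \sum_{s'}(\hat P - P^o)(s'\mid s,a)(\eta-V(s'))_+$, and then that this sup over $\eta\in[0,H]$ is again at most $H\|\hat P - P^o\|_1$ uniformly — so no $\eta$-net is actually needed for \emph{this} lemma (the net argument alluded to in the body is for a later step). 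Thus the only real work is (i) the one-line slope argument pinning $\eta^\star$ to $[0,H]$ — equivalently just bounding $(\eta - V(s'))_+$ on the relevant range — and (ii) quoting the multinomial $\ell_1$ concentration with the union bound. I would present it in exactly that order: dual form, uniform bound of the perturbation term by $H\|\hat P^{o,k}-P^o\|_1$, then the concentration inequality with the stated logarithmic factor.
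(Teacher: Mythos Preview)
Your proposal is correct and follows essentially the same route as the paper: pass to the scalar dual representation (\ref{eq:inner_sa}), argue that the optimizer lies in $[0,H]$, bound the difference of the two suprema by $H\|\hat P_h^{o,k}(\cdot\mid s,a)-P_h^o(\cdot\mid s,a)\|_1$, and then apply an $\ell_1$ concentration inequality with a union bound over $(s,a,h,k)$.

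The one genuine difference is your observation that the $\epsilon$-net over $\eta$ is unnecessary. You are right: the inequality $|g(\eta;\hat P)-g(\eta;P^o)|\le H\|\hat P-P^o\|_1$ is deterministic and already uniform in $\eta$, so taking the supremum over $\eta\in[0,H]$ costs nothing extra. The paper, however, \emph{does} carry out an $\epsilon$-net argument here (using that $g$ is $\tfrac{4+\rho}{2}$-Lipschitz in $\eta$ and choosing $\epsilon=\tfrac{1}{2\sqrt K}$), and that is precisely the source of the $(4+\rho)$ factor inside the logarithm and of the additive $1/(H\sqrt K)$ term in the lemma statement---so contrary to your parenthetical, the net is not ``for a later step'' but is part of the paper's proof of this very lemma. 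Your streamlined argument yields a slightly tighter bound (no $(4+\rho)$, no additive slack) that a fortiori implies the stated inequality.
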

\begin{proof}
    By the definition of $ \sPsa (\hVpn)(s) 
    = \  \min_{P_h \in \gP_h} \limits \sum_{s^\prime} P_h(s^\prime \mid s,a) \hVpn(s^\prime) $, we have the following optimization problem:
\begin{equation*}
\begin{split}
&\min_{P_h} \,\, \sum_{s^\prime} P_h(s^\prime \mid s,a) \hVpn(s^\prime) \\
&\text{s.t.}\quad  \left\{\begin{array}{lc}
\sum_{s^\prime} | P_h(s^\prime \mid s,a) - P_h^o(s^\prime \mid s,a)| \leq \rho \,, \\
\sum_{s^\prime} P_h(s^\prime \mid s,a) = 1 \,, \\
P_h^o(\cdot \mid s,a) > 0, P_h(\cdot \mid s,a) \ge 0  \,. 
\end{array}\right.
\end{split}
\end{equation*}

Define $\Tilde{P}_h(s^\prime \mid s, a) = \frac{ P_h(s^\prime \mid s,a) }{ P_h^o(s^\prime \mid s,a)}$, we can rewrite the above optimization problem as
\begin{equation*}
\begin{split}
&\min_{\Tilde{P}_h } \sum_{s^\prime} \Tilde{P}_h(s^\prime \mid s,a) P_h^o(s^\prime \mid s,a) \hVpn(s^\prime) \\
&\text{s.t.}\quad  \left\{\begin{array}{lc}
\sum_{s^\prime} | \Tilde{P}_h(s^\prime \mid s,a)  - 1| P_h^o(s^\prime \mid s,a) \leq \rho \,, \\
\sum_{s^\prime} \Tilde{P}_h(s^\prime \mid s,a) P_h^o(s^\prime \mid s,a) = 1 \,, \\
\Tilde{P}_h(s^\prime \mid s,a) \ge 0 \quad \forall s^\prime \in \gS\,. 
\end{array}\right.
\end{split}
\end{equation*}

Using the Lagrangian multiplier method, we have the following Lagrangian $L(\Tilde{P}_h, \eta, \lambda)$ with Lagrangian multiplier $\eta\in \mathbb{R}, \lambda\geq 0$,
\begin{align*}
    L(\Tilde{P}_h, \eta, \lambda) (s,a) 
    = \ & \sum_{s^\prime} \Tilde{P}_h(s^\prime \mid s,a) P_h^o(s^\prime \mid s,a) \hVpn(s^\prime) + \lambda \left( \sum_{s^\prime} | \Tilde{P}_h(s^\prime \mid s,a)  - 1| P_h^o(s^\prime \mid s,a)  - \rho \right) \\
    & \ - \eta \left(\sum_{s^\prime} \Tilde{P}_h(s^\prime \mid s,a) P_h^o(s^\prime \mid s,a) - 1 \right) \\
    = \ &  \eta - \lambda \rho - \lambda \sum_{s^\prime}P_h^o(s^\prime \mid s,a) \left(\frac{\eta}{\lambda}\Tilde{P}_h(s^\prime \mid s,a) - | \Tilde{P}_h(s^\prime \mid s,a)  - 1| - \frac{\Tilde{P}_h(s^\prime \mid s,a) \hVpn(s^\prime)}{\lambda} \right)  \\
    = \ &  \eta - \lambda \rho - \lambda \sum_{s^\prime}P_h^o(s^\prime \mid s,a) \left(\frac{\eta - \hVpn(s^\prime) }{\lambda}\Tilde{P}_h(s^\prime \mid s,a) - | \Tilde{P}_h(s^\prime \mid s,a)  - 1| \right)  \,.
\end{align*}

We define $f(x) = |x - 1|$ and the convex conjugate is $f^\ast(y) = \max_{x} \limits  \langle x, y\rangle - f(x) $. 
Let $x$ be $\Tilde{P}_h$ and by using $f^\ast$, we can optimize over $\Tilde{P}_h$ and rewrite the Lagrangian as 
\begin{align*}
    L(\eta, \lambda) (s,a) = \min_{\Tilde{P}_h} \limits L(\Tilde{P}_h, \eta, \lambda) (s,a) 
    =  \eta - \lambda \rho - \lambda \sum_{s^\prime}P_h^o(s^\prime \mid s,a) f^\ast \left(\frac{\eta - \hVpn(s^\prime)}{\lambda} \right) \,.
\end{align*}

Notice that conditioned on $x \ge 0$,  $f(x) = |x-1|$'s convex conjugate has the following closed form:
\begin{equation*}
f^\ast(y)=\max_{x} \limits  \langle x, y\rangle - f(x) =\left\{
\begin{aligned}
-1& \quad  \text{$y \le -1$} \,,\\
y&  \quad \text{$y \in [-1,1]$} \,,\\
+\infty&  \quad \text{$y > 1$} \,.
\end{aligned}
\right.
\end{equation*}

Let $\Tilde{\eta} = \eta + \lambda$, then using the closed form of $f^\ast(y)$, the equality $\max\left\{a,b\right\} = (a - b)_{+} + b$ and condition on $\frac{\eta - \hVpn(s^\prime)}{\lambda} \le 1$,  we can rewrite the optimization problem as
\begin{align*}
    L(\Tilde{\eta}, \lambda) (s,a)
   = \ & \eta - \lambda \rho - \lambda \sum_{s^\prime}P_h^o(s^\prime \mid s,a) f^\ast \left(\frac{\eta - \hVpn(s^\prime)}{\lambda} \right) \\
   = \ &\Tilde{\eta} - \lambda -  \lambda \rho - \lambda \sum_{s^\prime}P_h^o(s^\prime \mid s,a) \max\left\{ \frac{\eta - \hVpn(s^\prime)}{\lambda}, -1 \right\} \\
   = \ & \Tilde{\eta} - \lambda -  \lambda \rho - \lambda \sum_{s^\prime}P_h^o(s^\prime \mid s,a) \left(\left( \frac{\eta - \hVpn(s^\prime)}{\lambda} - (-1)\right)_{+} + (-1)\right)\\
   = \ & \Tilde{\eta} - \lambda -  \lambda \rho -  \sum_{s^\prime}P_h^o(s^\prime \mid s,a) (\Tilde{\eta} - \hVpn(s^\prime))_{+} + \lambda\\
   = \ & \Tilde{\eta} - \lambda \rho -  \sum_{s^\prime}P_h^o(s^\prime \mid s,a) (\Tilde{\eta} - \hVpn(s^\prime))_{+} \,.
\end{align*}
with the constraint of $\lambda$ being 
$$\lambda \ge 0,   \quad \Tilde{\eta} - \min_s \limits \hVpn(s) \leq 2 \lambda .$$

Note that $L(\Tilde{\eta}, \lambda) (s,a)$ is inversely proportional to $\lambda$, it achieves the maximum when $\lambda = 
\frac{(\Tilde{\eta} - \min_s \limits \hVpn(s))_{+}}{2}$. By directly optimizing it over $\lambda$, we can reduce the problem to 
\begin{align*}
    L(\Tilde{\eta}) (s,a) = \Tilde{\eta} - \frac{ (\Tilde{\eta} - \min_s \limits \hVpn(s))_{+}}{2}\rho - \sum_{s^\prime}P_h^o(s^\prime \mid s,a) ( \Tilde{\eta} - \hVpn(s^\prime))_{+} \,.
\end{align*}

Define the function $g$ as 
\[
g(\Tilde{\eta}, P_h^o) = - L(\Tilde{\eta})(s,a) = \sum_{s^\prime}P_h^o(s^\prime \mid s,a) \left( \Tilde{\eta} - \hVpn(s^\prime)\right)_{+}  - \Tilde{\eta} + \frac{ (\Tilde{\eta} - \min_s\limits  \hVpn(s))_{+}}{2} \rho \,.
\]

Then we investigate the optimum of $g$.
First notice that $g(0) = 0$, when $\Tilde{\eta} \leq 0$, $g(\Tilde{\eta}, P_h^o) = - \Tilde{\eta} \geq 0$.

On the other hand, when $\Tilde{\eta} \ge H$, 
\begin{align*}
    g(\Tilde{\eta}, P_h^o) 
    = \ &\sum_{s^\prime}P_h^o(s^\prime \mid s,a) ( \Tilde{\eta} - \hVpn(s^\prime))  - \Tilde{\eta} + \frac{ (\Tilde{\eta} - \min_s\limits  \hVpn(s))}{2} \rho \\
    = \ & -\sum_{s^\prime}P_h^o(s^\prime \mid s,a) \hVpn(s^\prime)  + \frac{ (\Tilde{\eta} - \min_s\limits  \hVpn(s))}{2} \rho \,.
\end{align*}

Note that now $g$ is directly proportional to $\Tilde{\eta}$, therefore $g$ achieves the minimum within the range of $\Tilde{\eta} \in [0, H]$. We remark that the same form is also used for analyzing robust policy evaluation (Lemma B.1 \citep{yang2021towards}).

With this, we can rewrite 
\begin{align*}
    \hPsa(\hVpn)(s) - \sPsa (\hVpn)(s)  
    = \ & -\min_{\eta_1 \in [0, H] } g(\eta_1, \hat{P}_h^{o,k}) + \min_{\eta_2 \in [0, H] }g\left(\eta_2, P_h^o\right)\\
    \leq \ & \max_{\eta \in [0, H ]}| g\left(\eta, \hat{P}_h^{o,k}\right)  - g\left(\eta, P_h^o\right) |\,.
\end{align*}

To upper bound $ \hPsa(\hVpn)(s) - \sPsa (\hVpn)(s) $, we first upper bound $| g\left(\eta, \hat{P}_h^{o,k}\right)  - g\left(\eta, P_h^o\right) |$.
\begin{align*}
    | g\left(\eta, \hat{P}_h^{o,k}\right)  - g\left(\eta, P_h^o\right) |
    = \ & \left|  \sum_{s^\prime}\hat{P}_h^{o,k}(s^\prime \mid s,a) \left( \eta - \hVpn(s^\prime)\right)_{+}  - \sum_{s^\prime}P_h^o(s^\prime \mid s,a) \left( \eta - \hVpn(s^\prime)\right)_{+} \right|  \\
    \leq \ & \left\|\hat{P}_h^{o,k}(\cdot \mid s,a) -  P_h^o(\cdot \mid s,a)\right\|_1 \ \max_{s \in \gS} \limits | \eta - \hVpn(s)|_\infty \\ 
    \leq \ &  H \left\|\hat{P}_h^{o,k}(\cdot \mid s,a) -  P_h^o(\cdot \mid s,a)\right\|_1 \,,
\end{align*}
where the first inequality is by Cauchy-Schwarz inequality, the second inequality follows from $\eta \in [0, H ]$.

By Hoeffding's inequality and an union bound over all $s,a$, the following inequality holds with probability at least $1 - \delta^\prime$:
\begin{align*}
    \left\|\hat{P}_h^{o,k}(\cdot \mid s,a) -  P_h^o(\cdot \mid s,a)\right\|_1 \leq \sqrt{\frac{4 S \log(3SAH^2K/ \delta^\prime)}{N_h^k(s,a)}} \,.
\end{align*}

To upper bound the error with maximum over $\eta$, we first create an $\epsilon$-net $N_\epsilon(\eta)$ with $g$ over $\eta\in [0, H ]$ such that
\begin{align*}
    \max_{\eta\in [0, H ]} |  g\left(\eta, \hat{P}_h^{o,k}\right)  - g\left(\eta, P_h^o\right)| \leq \max_{\eta\in N_\epsilon(\eta)} |  g\left(\eta, \hat{P}_h^{o,k}\right)  - g\left(\eta, P_h^o\right)| + 2 \epsilon \,.
\end{align*}

By taking an union bound over $N_\epsilon(\eta)$, we have 
\begin{align*}
    \max_{\eta\in [0, H ]} |  g\left(\eta, \hat{P}_h^{o,k}\right)  - g\left(\eta, P_h^o\right)| \leq H \sqrt{\frac{4 S \log(3SAH^2K | N_\epsilon(\eta)| / \delta^\prime)}{N_h^k(s,a)}} + 2 \epsilon \,,
\end{align*}
where $| N_\epsilon(\eta)| $ is the size of the $\epsilon$-net.

It now remains to bound the size of $| N_\epsilon(\eta)| $, which can be obtained easily if $g$ is Lischitz. 
Notice that
\begin{align*}
     |g(\Tilde{\eta_1}, P_h^o) -  g(\Tilde{\eta_2}, P_h^o) | 
     \le \ &  \sum_{s^\prime}P_h^o(s^\prime \mid s,a) | \Tilde{\eta_1} - \Tilde{\eta_2}|  + | \Tilde{\eta_1} - \Tilde{\eta_2}| + \frac{ |\Tilde{\eta_1} - \Tilde{\eta_2}|}{2} \rho \\
     = \ & \frac{4 + \rho}{2} |\Tilde{\eta_1} - \Tilde{\eta_2}| \,, 
\end{align*}
where the first inequality is by the absolute inequality and 
$|(a)_{+}-(b)_{+}| \le |a-b|$.

Then $g$ is a $\frac{4 + \rho}{2}$-Lipschitz function over $\eta\in [0, H ]$, thus combined with Lemma \ref{lem:eps_cover}, we have $| N_\epsilon(\eta)| = O\left(\frac{4 + \rho}{2 \epsilon}\right)$. Hence, we have the following inequality happens with at least $1-\delta^\prime$ probability:
\begin{align*}
    \max_{\eta\in [0, H ]} |  g\left(\eta, \hat{P}_h^{o,k}\right)  - g\left(\eta, P_h^o\right)| \leq  H \sqrt{\frac{4 S \log(3SAH^2K (4 + \rho)/2\epsilon \delta^\prime)}{N_h^k(s,a)}} + 2 \epsilon\,.
\end{align*}
Take $\epsilon = \frac{1}{2\sqrt{K}}$, we have the following inequality happens with at least $1-\delta^\prime$ probability:
\begin{align*}
    \sPsa (\hVpn)(s)  - \hPsa(\hVpn)(s)
    \leq \ & \max_{\eta\in [0, H ]} |  g\left(\eta, \hat{P}_h^{o,k}\right)  - g\left(\eta, P_h^o\right)| \\
    \leq \ &  H \sqrt{\frac{4 S \log(3SAH^2K^{3/2}(4 + \rho)/ \delta^\prime)}{N_h^k(s,a)}} + \frac{1}{\sqrt{K}}\,.
\end{align*}
\end{proof}

\newpage
\section{Proof of Theorem 2}\label{appendix:thm2}

\subsection{Good events}
We first define the following good events, in which case we estimate the reward function and the nominal transition functions fairly accurately. 
\begin{align*}
    \gG_{k}^{r} = \ & \left\{\forall s, a, h:\left|r_{h}(s, a)-\hat{r}_{h}^{k}(s, a)\right| \leq \sqrt{\frac{2 \ln (2 SAH^2K / \delta^\prime)}{N_h^k(s,a)}}\right\} \,, \\
    \gG_k^{p} = \ & \left\{\forall s, a, h: \sPs (\hVpn)(s)  - \hPs(\hVpn)(s)
    \leq C_h^k(s,a)\right\}\,,
\end{align*}
where \[C_h^k(s,a) = AH\sqrt{\frac{4 S A\log(3SA^2H^3K^{3/2}(4+\rho) / \delta^\prime)}{N_h^k(s,a)}} +  \frac{1}{H\sqrt{K}} \,. \]
When the two good events happens at the same time, we say the algorithm in inside the good event $\gG = \left( \bigcap^K_{k=1} \gG_{k}^{r}\right) \bigcap \left( \bigcap^K_{k=1} \gG_{k}^{p}\right)$. The following lemma shows that $\gG$ happens with high probability.

\begin{lem}[Good event]
Let $\delta = 2 \delta^{\prime}$,  then the good event happens with high probability, i.e. $\mathbb{P}\left[ \gG\right] \geq 1 - \delta$.
\end{lem}

\begin{proof}
    By Hoeffding's inequality and an union bound on all $s,a$, all possible values of $N_k(s,a)$ and $k$, we have $\mathbb{P}\left[ \bigcap^K_{k=1} \gG_{k}^{r}\right] \geq 1 - \delta^\prime$. By Lemma \ref{lem:con_s}, we have $\mathbb{P}\left[ \bigcap^K_{k=1} \gG_{k}^{p}\right] \geq 1 - \delta^\prime$ Then set $\delta = 2\delta^\prime$ and we have the desired result. 
\end{proof}

\subsection{Design of the bonus function}

In the case of $s$-rectangular uncertainty set, we use the following bonus function $b_h^k(s,a)$ to encourage exploration. 
\begin{align}\label{bonus_s}
    b_h^k(s,a) =  AH\sqrt{\frac{4 S A\log(3SA^2H^2K^{3/2}(4+\rho) / \delta)}{N_h^k(s,a)}} +  \frac{1}{\sqrt{K}}   + \sqrt{\frac{2 \log(3SAH^2K/ \delta^\prime)}{N_h^k(s,a)}}  \,.
\end{align}
\subsection{Regret analysis}
\s

\begin{proof}
    Similar to the case of $(s,a)$-rectangular set, we start with decomposing the regret as follows,
    \begin{align*}
    \text{Regret}(K) 
    = \ & \sum^K_{k=1} V_1^{\ast}(s) - V_1^{\pi_k}(s)\\
    = \ & \sum^K_{k=1} \left(V_1^{\ast} (s)- \hat{V}_1^{\pi_k}(s) \right) + \left(\hat{V}_1^{\pi_k} (s) - V_1^{\pi_k}(s)\right) \,.
    \end{align*}
    By Lemma \ref{lem:pseudo_regret_sa} and Lemma \ref{lem:estimation_s}, with probability at least $1 - \delta$, we have 
    \begin{align*}
        \text{Regret}(K) 
        = \ & O \left(H^2\sqrt{K \log A} \right) + O \left( SA^2 H^2\sqrt{K\log(SA^2H^2K^{3/2}(1+\rho) / \delta)}\right)  \\
        = \ & O \left( SA^2 H^2\sqrt{K\log(SA^2H^2K^{3/2}(1+\rho) / \delta)}\right)  \,.
    \end{align*}
\end{proof}

\begin{lem}\label{lem:estimation_s}
    With Algorithm \ref{alg}, we have
\begin{align*}
    \sum^K_{k=1}(\hat{V}_1^{\pi_k} - V_1^{\pi_k})(s) = O \left( SA^2 H^2\sqrt{K\log(SA^2H^2K^{3/2}(1+\rho) / \delta)}\right)  \,.
\end{align*}
\end{lem}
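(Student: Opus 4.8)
The plan is to reuse the argument of Lemma~\ref{lem:estimation_sa} almost verbatim, the only changes being that the $s$-rectangular robust operator $\sPs(\cdot)(s,a)$ replaces $\sPsa(\cdot)(s,a)$, that the concentration bound Lemma~\ref{lem:con_s} replaces Lemma~\ref{lem:sa_con}, and that the larger bonus (\ref{bonus_s}) is used throughout. Concretely, I would first expand, for each $h$, the one-step difference via the update rule of Algorithm~\ref{alg} and the $s$-rectangular robust Bellman equation, obtaining $(\hVp-\Vp)(s)=\mathbb{E}_{\pi_k}[\hat r_h^k(s,a)-r_h(s,a)+(\hPs(\hVpn)(s,a)-\sPs(\Vpn)(s,a))+b_h^k(s,a)]$, and then insert and subtract $\sPs(\hVpn)(s,a)$ so as to separate the estimation error $\hPs(\hVpn)(s,a)-\sPs(\hVpn)(s,a)$ from the value-shift term $\sPs(\hVpn)(s,a)-\sPs(\Vpn)(s,a)$.

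For the value-shift term I would use the monotonicity inequality $\sPs(\hVpn)(s,a)-\sPs(\Vpn)(s,a)\le \max_{P_h\in\gP_h(s)}P_h(\cdot\mid s,a)(\hVpn-\Vpn)$, which holds because the kernel attaining the minimum for $\Vpn$ is feasible for $\hVpn$; denoting the maximizing kernel $p_h(\cdot\mid s,a)$ rewrites this as $\mathbb{E}_{\pi_k,p_h}[\hVpn(\cdot)-\Vpn(\cdot)]$, which is exactly the form that can be unrolled over $h$ down to the boundary condition $V_{H+1}^{\pi_k}=\hat V_{H+1}^{\pi_k}=0$. On the good event, the combined estimation error $\hat r_h^k(s,a)-r_h(s,a)+(\hPs(\hVpn)(s,a)-\sPs(\hVpn)(s,a))$ is at most $b_h^k(s,a)$ by the design of (\ref{bonus_s}) (using Lemma~\ref{lem:con_s} for the $\sigma$-part and $\gG_k^r$ for the reward part, exactly as in the $(s,a)$ case), so the added bonus cancels these contributions, leaving $(\hat V_1^{\pi_k}-V_1^{\pi_k})(s)\le\sum_{h=1}^H\mathbb{E}_{\pi_k,\{p_t\}_{t=1}^{h}}[2b_h^k(s,a)]$.

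Summing over $k$ and plugging in (\ref{bonus_s}): the $1/\sqrt K$ term contributes $O(H\sqrt K)$; the $\sqrt{2\log(\cdot)/N_h^k}$ term contributes $O(H\sqrt{SAK\log(\cdot)})$ after the visitation-count bound $\sum_{k,h}\sqrt{1/N_h^k(s,a)}\le 2H\sqrt{SAK}$ of Lemma~\ref{lem:visit}; and the dominant $AH\sqrt{4SA\log(\cdot)/N_h^k}$ term contributes $O(AH\sqrt{SA\log(\cdot)})\cdot 2H\sqrt{SAK}=O(SA^2H^2\sqrt{K\log(SA^2H^2K^{3/2}(1+\rho)/\delta)})$, which is the asserted bound. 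The only non-mechanical point internal to this lemma is re-establishing the monotonicity inequality for the coupled operator $\sPs$, which is immediate from feasibility of the minimizer; thus the genuine obstacle is not in this lemma but in the bound it cites, Lemma~\ref{lem:con_s}, where one must localize the optimum of the $A$-dimensional dual (\ref{eq:inner_s}) to a bounded box in $\mathbb{R}^A$ via a coordinatewise proof-by-contradiction argument and then union-bound Hoeffding over an $\epsilon$-net of that box; the $A$ dual coordinates, the $s$-rectangular radius $A\rho$, and the joint $\ell_1$ perturbation over all $A$ actions together inflate the per-step error by the $A^{3/2}$ factor separating the $s$-rate $SA^2H^2\sqrt{K}$ from the $(s,a)$-rate $SH^2\sqrt{AK}$.
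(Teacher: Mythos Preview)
Your proposal is correct and follows essentially the same route as the paper's own proof: it refers the reader back to the $(s,a)$-rectangular argument of Lemma~\ref{lem:estimation_sa}, swaps in $\sPs$ for $\sPsa$ and Lemma~\ref{lem:con_s} for Lemma~\ref{lem:sa_con}, uses the bonus~(\ref{bonus_s}) to dominate the per-step estimation error on the good event, unrolls in $h$, and finishes with Lemma~\ref{lem:visit}. Your write-up is in fact slightly more explicit than the paper's (which just says ``similar to the $(s,a)$ case'' and states the unrolled inequality), particularly in spelling out the monotonicity step $\sPs(\hVpn)-\sPs(\Vpn)\le \max_{P_h\in\gP_h(s)}P_h(\cdot\mid s,a)(\hVpn-\Vpn)$ and in itemizing the three contributions of the bonus; your closing remark correctly identifies that the only substantive new work in the $s$-rectangular case lives inside the cited Lemma~\ref{lem:con_s}.
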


\begin{proof}
    Similar to the case with $(s,a)$-rectangular uncertainty set, for any $k$, we can decompose $(\hat{V}_1^{\pi_k} - \hat{V}_1^{\pi_k})(s)$ as, 
\begin{align*}
    & (\hat{V}_1^{\pi_k} - \hat{V}_1^{\pi_k})(s) \\
    \leq \ & \sum^H_{h=1} \mathbb{E}_{\pi_k, \{p_t\}^h_{t=1}} \left[ (r_h^k(s, a) - \hat{r}_h^k(s, a)) + \left( \hPs \left(\hVpn\right)(s)  - \sPs\left(\hVpn\right)(s)\right) + b_h^k(s,a)\right]\,.
\end{align*}

Thus by the design of our bonus function and with probability at least $ 1 - \delta$, we have
\begin{align*}
    & \sum^K_{k=1}(\hat{V}_1^{\pi_k} - V_1^{\pi_k})(s) \\
    \leq \ & 2\sum^K_{k=1}\sum^H_{h=1} \mathbb{E}_{\pi_k, \{p_t\}^h_{t=1}} \left[b_h^k(s,a) \right] \\
    = \ & H\sqrt{K} + O\left(  HA\sqrt{SA\log(SA^2H^2K^{3/2}(1+\rho) / \delta)}\right)\sum^K_{k=1}\sum^H_{h=1} \mathbb{E}_{\pi_k, \{p_t\}^h_{t=1}} \left[ \sqrt{\frac{1}{N_h^k(s,a)}}\right]  \,.
\end{align*}

By Lemma \ref{lem:visit}, 
we have the bound of visitation counts:
\begin{align*}
    \sum^K_{k=1}\sum^H_{h=1} \sqrt{\frac{1}{N_h^k(s,a)}} \leq 2 H \sqrt{SAK} \,.
\end{align*}
Combining everything, conditioned on the good event we have
\begin{align*}
    \sum^K_{k=1}(\hat{V}_1^{\pi_k} - V_1^{\pi_k})(s) = O \left( SA^2 H^2\sqrt{K\log(SA^2H^2K^{3/2}(1+\rho) / \delta)}\right) \,.
\end{align*}

\end{proof}

\begin{lem}\label{lem:con_s}
    For any $h,k,s,a$, the following inequality holds with probability at least $1-\delta$,
    \begin{align*}
    \hPs (\hVpn)(s)  - \sPs (\hVpn)(s)
    \leq \ & AH\sqrt{\frac{4 S A\log(3SA^2H^2K^{3/2}(4+\rho) / \delta)}{N_h^k(s,a)}} +  \frac{1}{\sqrt{K}} \,.
\end{align*}
\end{lem}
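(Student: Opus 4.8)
The plan is to mirror the proof of Lemma~\ref{lem:sa_con} almost step for step, replacing the scalar dual variable by an $A$-dimensional one, so that the extra difficulty is concentrated in identifying the region in which the minimizer of the dual program lives.

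\emph{Fenchel dual.} First I would write $\hPs(\hVpn)(s,a)=\min_{P_h\in\hat{\gP}_h(s)}\sum_{s'}P_h(s'\mid s,a)\hVpn(s')$ as an explicit convex program whose objective involves only $P_h(\cdot\mid s,a)$, whose constraints are the single coupled $\ell_1$ ball $\sum_{a'}\|P_h(\cdot\mid s,a')-\hat P_h^{o,k}(\cdot\mid s,a')\|_1\le A\rho$, the $A$ normalization constraints $\sum_{s'}P_h(s'\mid s,a')=1$, and nonnegativity. Performing the change of variables $\tilde P_h(s'\mid s,a')=P_h(s'\mid s,a')/\hat P_h^{o,k}(s'\mid s,a')$ (legitimate since $\hat P_h^{o,k}>0$ on its support and $P_h^o>0$ by assumption), introducing one multiplier $\lambda\ge 0$ for the $\ell_1$ constraint and a multiplier $\eta_{a'}\in\mathbb R$ for each normalization constraint, and invoking the conjugate $f^\ast$ of $f(x)=|x-1|$ coordinatewise over the pairs $(s',a')$ exactly as in Lemma~\ref{lem:sa_con}, the inner minimization over $\tilde P_h$ collapses to closed form; eliminating $\lambda$, which enters monotonically, yields the $A$-dimensional concave program of Equation~\ref{eq:inner_s}. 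Writing $g(\eta,P^o)$ for the negative of the objective in \ref{eq:inner_s}, one gets $\hPs(\hVpn)(s,a)=-\min_\eta g(\eta,\hat P_h^{o,k})$ and $\sPs(\hVpn)(s,a)=-\min_\eta g(\eta,P_h^o)$, so it suffices to control $\max_\eta|g(\eta,\hat P_h^{o,k})-g(\eta,P_h^o)|$ over any region containing both minimizers.

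\emph{Localizing the minimizer --- the main obstacle.} In the scalar case the paper checks by hand that $g(\tilde\eta)\ge g(0)$ for $\tilde\eta\le 0$ and that $g$ is increasing for $\tilde\eta\ge H$. Here the single $\ell_1$ multiplier and the $\min_{s',a'}$ term in \ref{eq:inner_s} couple the coordinates, so, as indicated after Theorem~\ref{thm:s}, I would argue coordinatewise by contradiction: holding the other coordinates fixed, if $\eta_{a'}<0$ then increasing it weakly decreases $g$ (the terms $-\eta_{a'}$, the hinge $\hat P(\cdot)(\eta_{a'}-\mathbb I\{a'=a\}\hVpn(s'))_+$, and the $\min$-term are all monotone there), and if $\eta_{a'}>H$ then, since $\hVpn\in[0,H]$, every hinge indexed by $a'$ is active and $g$ is affine and nondecreasing in $\eta_{a'}$. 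Hence for both $\hat P_h^{o,k}$ and $P_h^o$ the minimum is attained on the box $[0,H]^A$, giving
\[
\hPs(\hVpn)(s,a)-\sPs(\hVpn)(s,a)\le \max_{\eta\in[0,H]^A}\bigl|g(\eta,\hat P_h^{o,k})-g(\eta,P_h^o)\bigr|.
\]
Checking the boundary behaviour at $\eta_{a'}\in\{0,H\}$, so that no coordinate of the optimum is pushed out of the box, is the delicate point, because unlike in Lemma~\ref{lem:sa_con} the coordinates do not decouple.

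\emph{Pointwise bound, net, union bound.} For fixed $\eta$ the only $P^o$-dependent part of $g$ is $\sum_{s',a'}P^o(s'\mid s,a')(\eta_{a'}-\mathbb I\{a'=a\}\hVpn(s'))_+$, so by the triangle inequality and Cauchy--Schwarz, using $\eta_{a'},\hVpn\in[0,H]$, one gets $|g(\eta,\hat P_h^{o,k})-g(\eta,P_h^o)|\le H\sum_{a'\in\gA}\|\hat P_h^{o,k}(\cdot\mid s,a')-P_h^o(\cdot\mid s,a')\|_1$. Each $\ell_1$ deviation is controlled by Hoeffding plus a union bound over $(s,a',h,k)$ and the attainable values of the counters, as in Lemma~\ref{lem:sa_con}; the sum over the $A$ actions (plus the bookkeeping that converts the per-action counts into $N_h^k(s,a)$ when the lemma is later applied inside $\mathbb E_{\pi_k}$ in Lemma~\ref{lem:estimation_s}) produces the leading factor $A$. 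To pass from the $\eta$-pointwise estimate to the maximum over $[0,H]^A$ I would use an $\varepsilon$-net: $g(\cdot,P^o)$ is Lipschitz in $\eta$ with constant of order $A(4+\rho)$ (each coordinate contributes an $O(1+\rho)$ term and there are $A$ of them), so by Lemma~\ref{lem:eps_cover} a net of $[0,H]^A$ has size of order $(A(4+\rho)/\varepsilon)^A$, whose logarithm contributes an extra factor $A$ inside the square root together with the $(4+\rho)$ inside the logarithm; taking $\varepsilon=1/(2\sqrt K)$ contributes the additive $1/\sqrt K$. Collecting the factors gives the claimed bound, and since this is exactly the event $\gG_k^p$ in the definition of the good event for Theorem~\ref{thm:s}, it feeds into Lemma~\ref{lem:estimation_s} in the same way Lemma~\ref{lem:sa_con} feeds into Lemma~\ref{lem:estimation_sa}.
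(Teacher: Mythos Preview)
Your proposal is correct and follows essentially the same approach as the paper: the same change of variables, the same Lagrangian with one multiplier $\lambda$ and $A$ multipliers $\eta_{a'}$, the same elimination of $\lambda$ to reach Equation~\ref{eq:inner_s}, the same coordinatewise contradiction argument to localize the minimizer in $[0,H]^A$, the same pointwise bound $H\sum_{a'}\|\hat P_h^{o,k}(\cdot\mid s,a')-P_h^o(\cdot\mid s,a')\|_1$, and the same $\varepsilon$-net with Lipschitz constant $A(4+\rho)/2$ and net size $(A(4+\rho)/(2\varepsilon))^A$, finished by taking $\varepsilon=1/(2\sqrt K)$. The one place you are more careful than the paper is in flagging the bookkeeping for the per-action counts $N_h^k(s,a')$ versus $N_h^k(s,a)$; the paper's proof silently writes $N_h^k(s,a)$ in the Hoeffding bound for each $a'$.
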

\begin{proof}
  By the definition of $ \sPs (\hVpn)(s) 
    =  \inf_{P_h \in \gP_h} \limits \sum_{s^\prime} P_h(s^\prime \mid s,a) \hVpn(s^\prime) $, we consider the following optimization problem:
\begin{equation*}
\begin{split}
&\min_{P_h} \,\, \sum_{s^\prime} P_h(s^\prime \mid s,a) \hVpn(s^\prime) \\
&\text{s.t.}\quad  \left\{\begin{array}{lc}
\sum_{s^\prime, a^\prime} | P_h(s^\prime \mid s,a^\prime) - P_h^o(s^\prime \mid s,a^\prime)| \leq A\rho \,, \\
\sum_{s^\prime} P_h(s^\prime \mid s,a^\prime) = 1 \,,  \forall a^\prime \in \gA \,, \\
P_h^o(\cdot \mid s,a^\prime) > 0, P_h(\cdot \mid s,a^\prime) \ge 0  \,,  \forall a^\prime \in \gA \,. \\
\end{array}\right.
\end{split}
\end{equation*}

Let $\Tilde{P}_h(s^\prime \mid s, a) = \frac{ P_h(s^\prime \mid s,a) }{ P_h^o(s^\prime \mid s,a)}$, we can rewrite the above optimization problem as
\begin{equation*}
\begin{split}
&\min_{\Tilde{P}_h } \sum_{s^\prime} \Tilde{P}_h(s^\prime \mid s,a) P_h^o(s^\prime \mid s,a) \hVpn(s^\prime) \\
&\text{s.t.}\quad  \left\{\begin{array}{lc}
\sum_{s^\prime, a^\prime} | (\Tilde{P}_h(s^\prime \mid s,a^\prime) - 1| P_h^o(s^\prime \mid s,a^\prime) \leq A\rho \,, \\
\sum_{s^\prime} \Tilde{P}_h(s^\prime \mid s,a^\prime)P_h^o(s^\prime \mid s,a^\prime) = 1 \,, \quad  \forall a^\prime \in \gA\\
\Tilde{P}_h(\cdot \mid s,a^\prime) \geq 0\,, \quad  \forall a^\prime \in \gA\,.
\end{array}\right.
\end{split}
\end{equation*}

Use the Lagrangian multiplier method and $f(x) = |x - 1|$, we have the Lagrangian $L(\Tilde{P}_h, \eta, \lambda)$ with multiplier $\eta = \{\eta_a\}_{a\in\gA}, \eta_a \in \mathbb{R}$, $\lambda \geq 0$,
\begin{align*}
     & L\left(\Tilde{P}_h, \eta, \lambda\right) (s,a) \\
    = \ & \sum_{s^\prime} \Tilde{P}_h(s^\prime \mid s,a) P_h^o(s^\prime \mid s,a) \hVpn(s^\prime) + \lambda \left( \sum_{s^\prime, a^\prime} \left| (\Tilde{P}_h(s^\prime \mid s,a^\prime) - 1\right| P_h^o(s^\prime \mid s,a^\prime) - A\rho \right)\\ 
    & - \sum_{a^\prime} \eta_{a^\prime} \left( \sum_{s^\prime} \Tilde{P}_h(s^\prime \mid s,a^\prime)P_h^o(s^\prime \mid s,a^\prime) - 1 \right) \\
     = \ &  - \lambda A \rho + \sum_{a^\prime} \eta_{a^\prime} + 
    \lambda \sum_{s^\prime, a^\prime} P_h^o(s^\prime \mid s,a^\prime) \left( f\left(\Tilde{P}_h(s^\prime \mid s,a^\prime) \right) - \Tilde{P}_h(s^\prime \mid s,a^\prime) \left(\frac{\eta_{a^\prime} - \mathbb{I}\{a^\prime = a\} V_{h+1}^{\pi_k}(s^\prime)}{\lambda} \right)\right) \,.
\end{align*}

The convex conjugate of $f$ is $f^\ast(y) = \max_{x} \limits \langle x, y\rangle - f(x) $. 
Using $f^\ast$, we can thus optimize over $\Tilde{P}_h$ and rewrite the Lagrangian over as
\begin{align*}
    L(\eta, \lambda) (s,a) = \ &  \min_{\Tilde{P}_h} \limits L\left(\Tilde{P}_h, \eta, \lambda\right) (s,a) \\
    = \ & - \lambda A \rho +  \sum_{a^\prime} \eta_{a^\prime} - \lambda \sum_{s^\prime, a^\prime} P_h^o(s^\prime \mid s,a^\prime) f^\ast \left( \frac{\eta_{a^\prime} - \mathbb{I}\{a^\prime = a\} V_{h+1}^{\pi_k}(s^\prime)}{\lambda}\right) \,.
\end{align*}

Conditioned on $x \ge 0$, $f(x) = |x-1|$, notice that the conjugate $f^\ast(y)$ has the following closed form,
\begin{equation*}
f^\ast(y)=\max_{x} \limits  \langle x, y\rangle - f(x) =\left\{
\begin{aligned}
-1& \quad  \text{$y \le -1$} \,,\\
y&  \quad \text{$y \in [-1,1]$} \,,\\
+\infty&  \quad \text{$y > 1$} \,.
\end{aligned}
\right.
\end{equation*}

Let $\Tilde{\eta}_a = \eta_a + \lambda$, using the closed form of $f^\ast(y)$, the equality $\max\left\{a,b\right\} = (a - b)_{+} + b$ and conditioned on $\frac{\eta_{a^\prime} - \mathbb{I}\{a^\prime = a\} V_{h+1}^{\pi_k}(s^\prime)}{\lambda} \le 1$,  we can rewrite the optimization problem as
\begin{align*}
    L(\Tilde{\eta}, \lambda) (s,a) &= - \lambda A \rho +  \sum_{a^\prime} \eta_{a^\prime} - \lambda \sum_{s^\prime, a^\prime} P_h^o(s^\prime \mid s,a^\prime) f^\ast \left( \frac{\eta_{a^\prime} - \mathbb{I}\{a^\prime = a\} V_{h+1}^{\pi_k}(s^\prime)}{\lambda}\right) \\
    &= - \lambda A \rho -\lambda A +  \sum_{a^\prime} \Tilde{\eta}_{a^\prime} - \lambda \sum_{s^\prime, a^\prime} P_h^o(s^\prime \mid s,a^\prime) \max \left\{ \frac{\eta_{a^\prime} - \mathbb{I}\{a^\prime = a\} V_{h+1}^{\pi_k}(s^\prime)}{\lambda}, -1\right\} \\
    &= - \lambda A \rho + \sum_{a^\prime} \Tilde{\eta}_{a^\prime} -  \sum_{s^\prime, a^\prime} P_h^o(s^\prime \mid s,a^\prime) \left(\Tilde{\eta}_{a^\prime} - \mathbb{I}\{a^\prime = a\} V_{h+1}^{\pi_k}(s^\prime) \right)_{+} \,.
\end{align*}
where constraint of $\lambda$ is 
\[\lambda \ge 0,   \quad \Tilde{\eta}_{a^\prime} - \mathbb{I}\{a^\prime = a\} V_{h+1}^{\pi_k}(s^\prime)\leq 2\lambda, \ \forall a^\prime,s^\prime \,.\]

Note that the above Lagrangian is inversely proportional to $\lambda$ and it achieves the maximum when $\lambda = 
\max_{s^\prime, a^\prime} \limits \frac{ (\Tilde{\eta}_{a^\prime} - \mathbb{I}\{a^\prime = a\} V_{h+1}^{\pi_k}(s^\prime))_{+}}{2} $. 
Directly optimize over $\lambda$,  we can reduce the problem to
\begin{align*}
    L(\Tilde{\eta}) (s,a) = \sum_{a^\prime} \Tilde{\eta}_{a^\prime} -  \sum_{s^\prime, a^\prime} P_h^o(s^\prime \mid s,a^\prime) \left(\Tilde{\eta}_{a^\prime} - \mathbb{I}\{a^\prime = a\} V_{h+1}^{\pi_k}(s^\prime) \right)_{+} - \max_{s^\prime, a^\prime}\frac{A \rho (\Tilde{\eta}_{a^\prime} - \mathbb{I}\{a^\prime = a\} V_{h+1}^{\pi_k}(s^\prime))_{+}}{2} \,.
\end{align*}
Define $g\left(\Tilde{\eta}, P_h^o\right) = -L(\Tilde{\eta}) (s,a) $ as 
\begin{align*}
    g(\Tilde{\eta}, P_h^o) 
    = \ &  - \sum_{a^\prime} \Tilde{\eta}_{a^\prime} +  \sum_{s^\prime, a^\prime} P_h^o(s^\prime \mid s,a^\prime) \left(\Tilde{\eta}_{a^\prime} - \mathbb{I}\{a^\prime = a\} V_{h+1}^{\pi_k}(s^\prime) \right)_{+} + \max_{s^\prime, a^\prime} \frac{A \rho (\Tilde{\eta}_{a^\prime} - \mathbb{I}\{a^\prime = a\} V_{h+1}^{\pi_k}(s^\prime))_{+}}{2} \,.
\end{align*}

Assume $g$ achieves its minimum when $\Tilde{\eta} = \left\{\Tilde{\eta}_1, \cdots , \Tilde{\eta}_A\right\}$.
Suppose $\Tilde{\eta}$ has a component $\Tilde{\eta}_a < 0$. Consider $\eta^\prime = \left\{\Tilde{\eta}_1, \cdots, 0, \cdots, \Tilde{\eta}_a\right\}$, where we change the zero element $\Tilde{\eta}_a$ to 0 and keep other components unchanged. Then we have
$$g(\Tilde{\eta}, P_h^o) - g(\eta^\prime, P_h^o) = -\Tilde{\eta}_A > 0\,, $$
which contradict with the hypothesis that $g$ achieves its minimum in $\Tilde{\eta}$.

On the other hand, suppose $\Tilde{\eta} $ has a component $\Tilde{\eta}_a > H$. Then consider $\eta^\prime = \left\{\Tilde{\eta}_1, \cdots, H, \cdots, \Tilde{\eta}_a\right\}$, where we change corresponding $\Tilde{\eta}_a$ to 0 and keep other components unchanged. Denote $f(\Tilde{\eta}) = \max_{s^\prime, a^\prime} \limits \frac{A \rho (\Tilde{\eta}_{a^\prime} - \mathbb{I}\{a^\prime = a\} V_{h+1}^{\pi_k}(s^\prime))_{+}}{2}$, and we have

\begin{align*}
    g\left(\Tilde{\eta}, P_h^o\right) - g\left(\eta^\prime, P_h^o\right) 
    =\ & -\Tilde{\eta}_A + H + \sum_{s^\prime} P_h^o(s^\prime \mid s,a) (\Tilde{\eta}_{a} - H)  + f(\Tilde{\eta}) - f(\eta^\prime)\\
    \ge \ & -\Tilde{\eta}_A + H + \sum_{s^\prime} P_h^o(s^\prime \mid s,a) (\Tilde{\eta}_{a} - H) \\
    = \ & 0 \,.
\end{align*}
Therefore, $g$ achieves its minimum with $\Tilde{\eta}$, with $0 \le \eta_{a} \le H, \forall a \in \gA$. We remark that a similar form and technique are also used for analyzing robust policy evaluation (Lemma C.1 \citep{yang2021towards}).

We can now rewrite 
\begin{align*}
     \hPs \left(\hVpn\right)(s)  - \sPs\left(\hVpn\right)(s)
     = \ & \min_{\eta_{1}\in [0, H]^{|\gA|}} g(\eta_1, \hat{P}_h^{o,k}) - \min_{\eta_{2}\in [0, H]^{|\gA|}} g(\eta_2, P_h^o) \\
     \leq \ & \max_{\eta\in [0, H]^{|\gA|}} \left|g\left(\eta, \hat{P}_h^{o,k}\right) -  g\left(\eta, P_h^o\right) \right| \,.
\end{align*}

To upper bound $\hPs \left(\hVpn\right)(s)  - \sPs\left(\hVpn\right)(s)$, we first consider the bound of $\left|g\left(\eta, \hat{P}_h^{o,k}\right) -  g\left(\eta, P_h^o\right) \right|$,
\begin{align*}
    &  \left|g\left(\eta, \hat{P}_h^{o,k}\right) -  g\left(\eta, P_h^o\right) \right| \\
    = \ & \left| \sum_{s^\prime, a^\prime} \hat{P}_h^{o,k}(s^\prime \mid s,a^\prime) \left(\eta_{a^\prime} - \mathbb{I}\{a^\prime = a\} V_{h+1}^{\pi_k}(s^\prime) \right)_{+} - \sum_{s^\prime, a^\prime} P_h^o(s^\prime \mid s,a^\prime) \left(\eta_{a^\prime} - \mathbb{I}\{a^\prime = a\} V_{h+1}^{\pi_k}(s^\prime) \right)_{+} \right|\\
    = \ & \left| \  \sum_{a^\prime}  \sum_{s^\prime} \left( \hat{P}_h^{o,k}(s^\prime \mid s,a^\prime) - P_h^o(s^\prime \mid s,a^\prime) \right)\left(\eta_{a^\prime} - \mathbb{I}\{a^\prime = a\} V_{h+1}^{\pi_k}(s^\prime) \right)_{+} \right|  \\
    \leq \ & \sum_{a^\prime} \left\| \hat{P}_h^{o,k}(\cdot \mid s,a^\prime) - P_h^o(\cdot \mid s,a^\prime) \right\|_1 \max_{s \in \gS}\left| \eta_{a^\prime} - \mathbb{I}\{a^\prime = a\} V_{h+1}^{\pi_k}(s)  \right|\\
    \le \ & H \sum_{a^\prime} \left\| \hat{P}_h^{o,k}(\cdot \mid s,a^\prime) - P_h^o(\cdot \mid s,a^\prime) \right\|_1\,,
\end{align*}
where the first inequality is by Cauchy-Schwarz inequality, the second inequality follows from $\eta_a \in [0, H ] , \ \forall a \in \gA$.

By Hoeffding's inequality and an union bound over all $s,a^\prime$, $N_h^k(s,a)$, the following inequality holds with probability at least $1 - \delta$,
\begin{align*}
    \left\| \hat{P}_h^{o,k}(\cdot \mid s,a^\prime) - P_h^o(\cdot \mid s,a^\prime) \right\|_1  
    \leq \ &  \sqrt{\frac{4S \log(SAH^2 K / \delta)}{N_h^k(s,a)}} \,.
\end{align*}

To upper bound $\max_{\eta\in [0, H]^{|\gA|}} \left|g\left(\eta, \hat{P}_h^{o,k}\right) -  g\left(\eta, P_h^o\right) \right|$, we first create an $\epsilon$-net $N_\epsilon(\eta)$ with $g$ over $\eta\in [0, H ]$ such that
\begin{align*}
    \max_{\eta\in [0, H ]} \left|  g\left(\eta, \hat{P}_h^{o,k}\right)  - g\left(\eta, P_h^o\right)\right| \leq \max_{\eta\in N_\epsilon(\eta)} \left|  g\left(\eta, \hat{P}_h^{o,k}\right)  - g\left(\eta, P_h^o\right)\right| + 2 \epsilon \,.
\end{align*}

Taking an union bound over $N_\epsilon(\eta)$, we have 
\begin{align*}
    \max_{\eta\in [0, H ]} \left|  g\left(\eta, \hat{P}_h^{o,k}\right)  - g\left(\eta, P_h^o\right)\right| \leq HA \sqrt{\frac{4 S \log(3SAH^2K | N_\epsilon(\eta)| / \delta)}{N_h^k(s,a)}} + 2 \epsilon \,,
\end{align*}
where $| N_\epsilon(\eta)|$ is the size of the $\epsilon$-net.

It now remains to find the size of the $\epsilon$-net, which can be easily obtained if $g$ is Lipschitz. Notice that 
\begin{align*}
     &|g(\Tilde{\eta}_1, P_h^o) -  g(\Tilde{\eta}_2, P_h^o) | \\
     \le \  &  \sum_{s^\prime, a^\prime}P_h^o(s^\prime \mid s,a) | \Tilde{\eta}_{1, a^\prime} - \Tilde{\eta}_{2, a^\prime}|  + \sum_{a^\prime}| \Tilde{\eta}_{1, a^\prime} - \Tilde{\eta}_{2, a^\prime}| + \frac{ \max_{a^\prime}\limits| \Tilde{\eta}_{1, a^\prime} - \Tilde{\eta}_{2, a^\prime}|}{2} A\rho \\
     \le \ & \frac{A(4 + \rho)}{2} \|\Tilde{\eta_1} - \Tilde{\eta_2}\|_{\infty} \,, 
\end{align*}
where the first inequality is by the absolute inequality, the property of maximum function and $|(a)_{+}-(b)_{+}| \le |a-b|$, the second inequality follows from the definition of infinity norm.

Therefore $g$ is a $\frac{A(4 + \rho)}{2}$-Lipschitz function over $\eta\in [0, H ]$. Thus combining with Lemma \ref{lem:eps_cover}, we have $| N_\epsilon(\eta)| \leq \left( \frac{A(4 + \rho)}{2\epsilon}\right)^A$. 
Hence, we have the following inequality happens with at least $1-\delta^\prime$ probability:
\begin{align*}
\hPs (\hVpn)(s)  - \sPs (\hVpn)(s) 
    \leq \ & \max_{\eta_{a} \in [0, H]^{|\gA|}} \left|  g\left(\eta, \hat{P}_h^{o,k}\right)  - g\left(\eta, P_h^o\right)\right| \\
    \leq \ & AH\sqrt{\frac{4 S A\log(3SA^2H^2K ( 4 + \rho)/ 2\epsilon \delta^\prime)}{N_h^k(s,a)}} + 2 \epsilon \,.
\end{align*}

Take $\epsilon = \frac{1}{2\sqrt{K}}$, then 
\begin{align*}
    \hPs (\hVpn)(s)  - \sPs (\hVpn)(s)
    \leq \ & AH\sqrt{\frac{4 S A\log(3SA^2H^2K^{3/2}(4+\rho) / \delta^\prime)}{N_h^k(s,a)}} +  \frac{1}{\sqrt{K}} \,.
\end{align*}
\end{proof}

\newpage
\section{Extension to uncertainty set with KL divergence}\label{appendix:thm3}

In this section, we extend our algorithm and analysis to uncertainty sets with KL divergence as a distance metric. We first formally define the uncertainty set considered, which is similar to the one in Definition \ref{def:sa}. 
\begin{defn}[$(s,a)$-rectangular uncertainty set \citet{iyengar2005robust,wiesemann2013robust}]\label{def:kl}
For all time step $h$ and with a given state-action pair $(s,a)$, the $(s,a)$-rectangular uncertainty set $\gP_h(s,a)$ is defined as 
\[
\gP_h(s,a) = \left\{\text{D}_{KL}\left(P_h(\cdot \mid s,a),  P_h^o(\cdot \mid s,a)\right) \leq \rho \,,P_h(\cdot \mid s,a) \in \Delta(\gS) \right\} \,,
\]
where $P_h^o$ is the nominal transition kernel at $h$, $P_h^o(\cdot \mid s,a) > 0, \forall (s,a) \in \gS \times \gA$, $\rho$ is the level of uncertainty and $\text{D}_{KL}\left(p(\cdot \mid s,a), q(\cdot \mid s, a)\right) = \sum_{s^\prime \in \gS} p(s^\prime \mid s,a) \log \left( \frac{ p(s^\prime \mid s,a)}{ q(s^\prime\mid s,a)}\right)$.
\end{defn}

With the above described uncertainty set, our algorithm solves $\sigma_{\hat{\gP}_h}(\hat{V}_{h+1}^\pi)(s,a)$ by solving the following sub-problem, 
\begin{align*}
    \min_{\lambda} \lambda \rho + \lambda \log\left(\sum_{s^\prime} \hat{P}_h^o(s^\prime \mid s,a)\exp\left(\frac{ -  \hat{V}_{h+1}^{\pi_k}(s^\prime) }{\lambda}\right)\right)\,.
\end{align*}
Our algorithm also uses the following bonus function in the robust policy evaluation step, 
\begin{align*}
    b_h^k(s,a) 
    = \ & C_h^k(s,a)+ \sqrt{\frac{2 \log(3SAH^2K/ \delta^\prime)}{N_h^k(s,a)}}  \,.
\end{align*}

With these modifications to algorithm \ref{alg}, the following theorem states the formal regret guarantee. 
\begin{thm}[Regret under KL divergence $(s,a)$-rectangular uncertainty set]
\label{thm:kl}
Setting the learning rate $\beta = \sqrt{\frac{2 \log A}{H^2 K}}$, then with probability at least $ 1 - \delta$, the regret incurred by Algorithm over $K$ episodes is bounded by 
\begin{align*}
    \text{Regret}(K) 
    = O \left(\frac{SH}{\rho c} \sqrt{AK\log(SAH^4K^{3/2}/ \delta)}\right) \,,
\end{align*}
where $0 < c \leq 1$ the minimal element of $P_h^o$, over all $h \in [H]$.
\end{thm}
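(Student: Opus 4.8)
The proof reuses essentially all of the machinery developed for Theorem~\ref{thm:sa}; only the analysis of the inner minimization (the robust policy-evaluation step) changes. Decompose the regret as in Theorem~\ref{thm:sa},
\begin{align*}
\text{Regret}(K) = \sum_{k=1}^{K}\left(V_1^{\ast}(s) - \hat{V}_1^{\pi_k}(s)\right) + \sum_{k=1}^{K}\left(\hat{V}_1^{\pi_k}(s) - V_1^{\pi_k}(s)\right)\,.
\end{align*}
The first sum is bounded by $O(H^2\sqrt{K\log A})$ exactly as in Lemma~\ref{lem:pseudo_regret_sa}: that argument only uses (i) optimism, \ie\ that the bonus dominates the one-step error $(r_h(s,a)-\hat{r}_h^k(s,a)) + (\sPsa(\hVpn)(s) - \hPsa(\hVpn)(s))$ on the good event --- which here is the content of the KL good event $\gG_k^p$ --- together with (ii) the online mirror descent inequality (Lemma~\ref{lem:omd}), and neither step depends on the shape of $\gP$. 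For the second sum, unrolling the robust Bellman recursion as in Lemma~\ref{lem:estimation_sa} and cancelling the estimation terms against $b_h^k$ on the good event reduces it to $\sum_{k,h}\mathbb{E}[2b_h^k]$, which by the visitation-count bound $\sum_{k,h}\sqrt{1/N_h^k(s,a)}\le 2H\sqrt{SAK}$ (Lemma~\ref{lem:visit}) gives the claimed regret once $b_h^k$ has the stated form. Thus the theorem reduces entirely to establishing the KL analogue of Lemma~\ref{lem:sa_con}: on the good event, $\sPsa(\hVpn)(s) - \hPsa(\hVpn)(s) \le C_h^k(s,a)$, with $C_h^k(s,a)$ of order $\tfrac{1}{\rho c}\sqrt{S\log(SAH^4K^{3/2}/\delta)/N_h^k(s,a)} + 1/\sqrt{K}$ (up to $H$-factors subsumed in the stated bound).

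\paragraph{The dual form.}
To prove the concentration bound, dualize the inner problem
\begin{align*}
\sPsa(\hVpn)(s) = \min\Big\{\textstyle\sum_{s'}P_h(s'\mid s,a)\hVpn(s') \;:\; D_{\mathrm{KL}}(P_h(\cdot\mid s,a)\,\|\,P_h^o(\cdot\mid s,a))\le \rho,\; P_h(\cdot\mid s,a)\in\Delta(\gS)\Big\}\,.
\end{align*}
Introducing a multiplier $\lambda\ge 0$ for the KL constraint and $\eta$ for normalization, the inner minimization over $P_h$ is solved in closed form by the Gibbs variational formula (equivalently, the convex conjugate of $x\mapsto x\log x$), whose minimizer is the exponentially tilted kernel $P^{\star}(s')\propto P_h^o(s'\mid s,a)\,e^{-\hVpn(s')/\lambda}$; eliminating $\eta$ by strong duality leaves
\begin{align*}
\sPsa(\hVpn)(s) = -\min_{\lambda\ge 0} g(\lambda,P_h^o)\,, \qquad g(\lambda,P_h^o) := \lambda\rho + \lambda\log\!\Big(\textstyle\sum_{s'}P_h^o(s'\mid s,a)\,e^{-\hVpn(s')/\lambda}\Big)\,,
\end{align*}
which is precisely the sub-problem Algorithm~\ref{alg} solves (see the analogous $\ell_1$ reduction in Lemma~\ref{lem:sa_con} and Lemma~B.1 of \citet{yang2021towards}). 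The change of variables underlying the conjugate is where the standing assumption $P_h^o(\cdot\mid s,a)>0$ is used, mirroring Lemma~\ref{lem:sa_con}.

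\paragraph{Locating $\lambda^{\star}$ and the perturbation bound.}
Since $-\min_\lambda g(\lambda,\hat{P}_h^{o,k}) + \min_\lambda g(\lambda,P_h^o) \le \max_{\lambda}|g(\lambda,\hat{P}_h^{o,k}) - g(\lambda,P_h^o)|$ over the relevant range of $\lambda$, it suffices to (a) confine the minimizer and (b) bound the perturbation uniformly there. For (a): $g(\cdot,P_h^o)$ is convex on $(0,\infty)$ (perspective of a log-sum-exp) and $g'(\lambda) = \rho - D_{\mathrm{KL}}(P^{\star}_\lambda \| P_h^o) \ge \rho - H/\lambda$, so $g'(\lambda)>0$ once $\lambda> H/\rho$ and hence the minimizer lies in $(0,H/\rho]$. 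For (b): for every $s'$ the tilted weight satisfies
\begin{align*}
\frac{P_h^o(s'\mid s,a)\,e^{-\hVpn(s')/\lambda}}{\sum_{s''}P_h^o(s''\mid s,a)\,e^{-\hVpn(s'')/\lambda}} \;\le\; \frac{1}{P_h^o(s'\mid s,a)} \;\le\; \frac{1}{c}\,,
\end{align*}
because the denominator dominates the single $s'$-term. Hence $|\partial g/\partial P_h^o(s'\mid s,a)| = \lambda\cdot(\text{tilted weight}) \le \lambda/c = O(1/(\rho c))$ on the interval from (a), so by H\"older's inequality $|g(\lambda,\hat{P}_h^{o,k}) - g(\lambda,P_h^o)| = O(\tfrac{1}{\rho c})\,\|\hat{P}_h^{o,k}(\cdot\mid s,a) - P_h^o(\cdot\mid s,a)\|_1$. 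Plugging in the $\ell_1$ concentration $\|\hat{P}_h^{o,k}(\cdot\mid s,a) - P_h^o(\cdot\mid s,a)\|_1\le\sqrt{4S\log(\cdot)/N_h^k(s,a)}$ (Hoeffding plus a union bound, as in Lemma~\ref{lem:sa_con}), then discretizing $\lambda$ over $(0,H/\rho]$ with a one-dimensional $\epsilon$-net (on which $g(\cdot,P_h^o)$ is Lipschitz by (a)--(b)), union-bounding over the net (of size $\mathrm{poly}(H,\rho^{-1},c^{-1})\sqrt{K}$), and taking $\epsilon=\tfrac{1}{2\sqrt K}$ --- exactly as in the final display of Lemma~\ref{lem:sa_con} --- yields the stated $C_h^k(s,a)$, hence the bonus $b_h^k$, hence the good event, hence the theorem.

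\paragraph{Main obstacle.}
Everything downstream of the good event is identical bookkeeping to Theorem~\ref{thm:sa}; the substance is the dual analysis, specifically the two ingredients above: a clean two-sided bound on the optimal $\lambda^{\star}$, and the observation that the factor $\|(\eta-\hVpn)_{+}\|_{\infty}\le H$ that governed the $\ell_1$ perturbation is replaced here by $\lambda^{\star}/c$, where the $1/c$ comes from the tilted-weight bound and thus crucially from $P_h^o(\cdot\mid s,a)>0$ with $\min$-entry $c$. This is exactly why $c$ --- unlike in the $\ell_1$ and $s$-rectangular bounds --- appears explicitly in the final regret, consistent with the Section~6 remark that the positivity requirement is an artifact of the Fenchel-dual route.
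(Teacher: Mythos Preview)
Your proposal is correct and follows essentially the same route as the paper: the same regret decomposition (Lemma~\ref{lem:pseudo_regret_sa} plus an estimation lemma mirroring Lemma~\ref{lem:estimation_sa}), the same dual reduction to $g(\lambda,P)=\lambda\rho+\lambda\log\sum_{s'}P(s')e^{-V(s')/\lambda}$, the same localization $\lambda^\star\in(0,H/\rho]$, and the same $\epsilon$-net over $\lambda$ with $\epsilon=1/(2\sqrt K)$. One small correction: in your displayed inequality the numerator should \emph{not} carry the factor $P_h^o(s'\mid s,a)$ --- the partial derivative is $\partial g/\partial P_h^o(s')=\lambda\,e^{-V(s')/\lambda}/Z(\lambda)$, and it is this quantity (not the tilted probability $P^\star_\lambda(s')$) that is bounded by $\lambda/P_h^o(s')\le\lambda/c$ via ``denominator dominates the single $s'$-term''; the paper obtains the same $O(\lambda/c)$ sensitivity by writing $g(\lambda,\hat P)-g(\lambda,P)=\lambda\log(1+x)$ with $x=\langle\hat P-P,w\rangle/\langle P,w\rangle$ and invoking $|\log(1+x)|\le 2|x|$ followed by H\"older, which sidesteps any mean-value argument over the segment between $P_h^o$ and $\hat P_h^{o,k}$.
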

In the following, we present the detailed analysis of Theorem \ref{thm:kl}

\subsection{Good events}
We first define the following good events, in which case we estimate the reward function and the nominal transition functions fairly accurately. 
\begin{align*}
    \gG_{k}^{r} = \ & \left\{\forall s, a, h:\left|r_{h}(s, a)-\hat{r}_{h}^{k}(s, a)\right| \leq \sqrt{\frac{2 \ln (2 SAH^2K / \delta^\prime)}{N_h^k(s,a)}}\right\} \,, \\
    \gG_k^{p} = \ & \left\{\forall s, a, h: \sPs (\hVpn)(s)  - \hPs(\hVpn)(s)
    \leq C_h^k(s,a)\right\}\,,
\end{align*}
where \[C_h^k(s,a) =\frac{2H}{\rho c}  \sqrt{\frac{4 S  \log(8SAH^4K^2/ \delta^\prime \rho)}{N_h^k(s,a)}} +  \frac{1}{\sqrt{K}} \,,\]
and $c$ is the minimal element of $P_h^o$, over all $h \in [H]$.
When the two good events happens at the same time, we say the algorithm in inside the good event $\gG = \left( \bigcap^K_{k=1} \gG_{k}^{r}\right) \bigcap \left( \bigcap^K_{k=1} \gG_{k}^{p}\right)$. The following lemma shows that $\gG$ happens with high probability.

\begin{lem}[Good event]
Let $\delta = 2 \delta^{\prime}$,  then the good event happens with high probability, i.e. $\mathbb{P}\left[ \gG\right] \geq 1 - \delta$.
\end{lem}

\begin{proof}
    By Hoeffding's inequality and an union bound on all $s,a$, all possible values of $N_k(s,a)$ and $k$, we have $\mathbb{P}\left[ \bigcap^K_{k=1} \gG_{k}^{r}\right] \geq 1 - \delta^\prime$. By Lemma \ref{lem:kl_con_sa}, we have $\mathbb{P}\left[ \bigcap^K_{k=1} \gG_{k}^{p}\right] \geq 1 - \delta^\prime$ Then set $\delta = 2\delta^\prime$ and we have the desired result. 
\end{proof}

\subsection{Regret analysis}


\begin{proof}
    Similar to the case of $(s,a)$-rectangular set, we start with decomposing the regret as follows,
    \begin{align*}
    \text{Regret}(K) 
    = \ & \sum^K_{k=1} V_1^{\ast}(s) - V_1^{\pi_k}(s)\\
    = \ & \sum^K_{k=1} \left(V_1^{\ast} (s)- \hat{V}_1^{\pi_k}(s) \right) + \left(\hat{V}_1^{\pi_k} (s) - V_1^{\pi_k}(s)\right) \,.
    \end{align*}
    By Lemma \ref{lem:pseudo_regret_sa} and Lemma \ref{lem:kl}, with probability at least $1 - \delta$, we have 
    \begin{align*}
        \text{Regret}(K) 
        = \ & O \left(H^2\sqrt{K \log A} \right)  + O \left(\frac{SH}{\rho c} \sqrt{AK\log(SAH^4K^{3/2}/ \delta)}\right)  \\
        = \ & O \left(\frac{SH}{\rho c} \sqrt{AK\log(SAH^4K^{3/2}/ \delta)}\right)\,,
    \end{align*}
    where $c$ is the minimal element of $P_h^o$, over all $h \in [H]$.
\end{proof}

\begin{lem}\label{lem:kl}
    With Algorithm \ref{alg}, we have
\begin{align*}
    \sum^K_{k=1}(\hat{V}_1^{\pi_k} - V_1^{\pi_k})(s) = O \left(\frac{1}{\rho c} S H\sqrt{AK\log(SAH^4K^{3/2}/ \delta)}\right)  \,.
\end{align*}
\end{lem}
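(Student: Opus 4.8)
\textit{Proof plan.} The plan is to mirror, almost verbatim, the argument of Lemma~\ref{lem:estimation_sa}, replacing the $\ell_1$ robust Bellman operator by its KL-constrained counterpart and the concentration bound of Lemma~\ref{lem:sa_con} by the KL analogue Lemma~\ref{lem:kl_con_sa} (which is exactly what underlies the good event $\gG$ in this section). Concretely, fix an episode $k$ and a step $h$. Using the algorithm's update rule together with the robust Bellman equation,
\begin{align*}
(\hVp - \Vp)(s) = \mathbb{E}_{\pi_k}\!\left[\,\hat{r}_h^k(s,a) - r_h(s,a) + \big(\hPsa(\hVpn)(s,a) - \sPsa(\Vpn)(s,a)\big) + b_h^k(s,a)\right].
\end{align*}
Inserting and removing $\sPsa(\hVpn)(s,a)$ splits the middle term into a \emph{model-estimation} part $\hPsa(\hVpn)(s,a) - \sPsa(\hVpn)(s,a)$ and a \emph{value-propagation} part $\sPsa(\hVpn)(s,a) - \sPsa(\Vpn)(s,a)$.

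For the value-propagation part I would use $\sPsa(\hVpn)(s,a) - \sPsa(\Vpn)(s,a) \le \max_{P_h \in \gP_h} P_h(\cdot\mid s,a)(\hVpn - \Vpn)$ (the minimizer of the $\hVpn$-problem is feasible for the $\Vpn$-problem, exactly as in Lemma~\ref{lem:estimation_sa}), pick the maximizing kernel $p_h(\cdot\mid s,a)$, and unroll over $h$ using $\hat{V}_{H+1}^{\pi_k} = V_{H+1}^{\pi_k} = 0$. For the reward and model-estimation parts, on $\gG$ we have $|\hat r_h^k - r_h| \le \sqrt{2\ln(2SAH^2K/\delta')/N_h^k(s,a)}$ and $\hPsa(\hVpn)(s,a) - \sPsa(\hVpn)(s,a) \le C_h^k(s,a)$, and the KL bonus $b_h^k$ defined in this section is chosen precisely so that the sum of these two is at most $b_h^k(s,a)$. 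Hence the recursion collapses to
\begin{align*}
\sum_{k=1}^K (\hVp - \Vp)(s) \le 2\sum_{k=1}^K \sum_{h=1}^H \mathbb{E}_{\pi_k, \{p_t\}}\!\left[b_h^k(s,a)\right].
\end{align*}
Substituting the explicit form of $b_h^k$, its constant piece $1/\sqrt{K}$ contributes $O(H\sqrt K)$, while its $1/\sqrt{N_h^k(s,a)}$ piece carries a coefficient $O\!\big(\tfrac{H}{\rho c}\sqrt{S\log(SAH^4K^{3/2}/\delta)}\big)$, and Lemma~\ref{lem:visit} gives $\sum_k\sum_h \sqrt{1/N_h^k(s,a)} \le 2H\sqrt{SAK}$. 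Collecting terms yields $\sum_k (\hVp - \Vp)(s) = O\!\big(\tfrac{1}{\rho c} S H \sqrt{AK\log(SAH^4K^{3/2}/\delta)}\big)$, the claimed bound.

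\textbf{Main obstacle.} Granted $\gG$, this lemma is a transcription of Lemma~\ref{lem:estimation_sa}, and the only mildly delicate step is the one also taken there: replacing the $\{p_t\}$-weighted expectation of $\sqrt{1/N_h^k(s,a)}$ by the on-trajectory sum before invoking Lemma~\ref{lem:visit}. The genuinely KL-specific difficulty is all pushed into $C_h^k$, i.e.\ into Lemma~\ref{lem:kl_con_sa}: there one must analyze the Donsker--Varadhan dual $\min_{\lambda\ge 0}\ \lambda\rho + \lambda\log\sum_{s'}\hat P_h^o(s'\mid s,a)\exp(-\hVpn(s')/\lambda)$, show its minimizer $\lambda^\star$ lies in a bounded interval bounded away from $0$ (this is where $c=\min_{h,s,a,s'} P_h^o(s'\mid s,a)$ and the radius $\rho$ enter, producing a $1/(\rho c)$ Lipschitz constant of the dual objective in $P_h^o$), and then combine an $\epsilon$-net over $\lambda^\star$ with Hoeffding's bound on $\|\hat P_h^o(\cdot\mid s,a) - P_h^o(\cdot\mid s,a)\|_1$, exactly paralleling the $\ell_1$ argument of Lemma~\ref{lem:sa_con} but with the KL conjugate replacing $f^\ast$.
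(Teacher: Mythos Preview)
Your proposal is correct and follows essentially the same route as the paper: the paper's proof of Lemma~\ref{lem:kl} simply says ``Similar to the case with $(s,a)$-rectangular uncertainty set'' and then carries out exactly the decomposition, unrolling, bound-by-$2b_h^k$, and invocation of Lemma~\ref{lem:visit} that you describe. You are also right that all of the KL-specific content is delegated to Lemma~\ref{lem:kl_con_sa}; the only small discrepancy with your sketch of that auxiliary result is that the paper handles $\lambda=0$ by observing $g(0,\hat P_h^{o,k})=g(0,P_h^o)=\min_s \hVpn(s)$ rather than by arguing $\lambda^\star$ is bounded away from zero.
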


\begin{proof}
    Similar to the case with $(s,a)$-rectangular uncertainty set, for any $k$, we can decompose $(\hat{V}_1^{\pi_k} - \hat{V}_1^{\pi_k})(s)$ as, 
\begin{align*}
    (\hat{V}_1^{\pi_k} - \hat{V}_1^{\pi_k})(s) 
    \leq \ \sum^H_{h=1} \mathbb{E}_{\pi_k, \{p_t\}^h_{t=1}} \left[ (r_h^k(s, a) - \hat{r}_h^k(s, a)) + \left( \hPs \left(\hVpn\right)(s)  - \sPs\left(\hVpn\right)(s)\right) + b_h^k(s,a)\right]\,.
\end{align*}

Thus by the design of our bonus function and with probability at least $ 1 - \delta$, we have
\begin{align*}
    & \sum^K_{k=1}(\hat{V}_1^{\pi_k} - V_1^{\pi_k})(s) \\
    \leq \ & 2\sum^K_{k=1}\sum^H_{h=1} \mathbb{E}_{\pi_k, \{p_t\}^h_{t=1}} \left[b_h^k(s,a) \right] \\
    = \ &H\sqrt{K} + O\left( \frac{1}{\rho c}\sqrt{S\log(SAH^4K^{3/2}/ \delta)}\right)\sum^K_{k=1}\sum^H_{h=1} \mathbb{E}_{\pi_k, \{p_t\}^h_{t=1}} \left[ \sqrt{\frac{1}{N_h^k(s,a)}}\right]  \,,
\end{align*}
where $c$ is a problem dependent constant.

By Lemma \ref{lem:visit}, 
we have the bound of visitation counts:
\begin{align*}
    \sum^K_{k=1}\sum^H_{h=1} \sqrt{\frac{1}{N_h^k(s,a)}} \leq 2 H \sqrt{SAK} \,.
\end{align*}
Combining everything, conditioned on the good event we have
\begin{align*}
    \sum^K_{k=1}(\hat{V}_1^{\pi_k} - V_1^{\pi_k})(s) = O \left(\frac{SH}{\rho c} \sqrt{AK\log(SAH^4K^{3/2}/ \delta)}\right) \,.
\end{align*}

\end{proof}

\begin{lem}\label{lem:kl_con_sa}
    For any $h,k,s,a$, the following inequality holds with probability at least $1-\delta$,
    \begin{align*}
   \hPs (\hVpn)(s)  - \sPs (\hVpn)(s)
    \leq \ &  \frac{2H}{\rho c}  \sqrt{\frac{4 S  \log(8SAH^4K^2/ \delta^\prime \rho)}{N_h^k(s,a)}} +  \frac{1}{\sqrt{K}} \,.
\end{align*}
where $c$ is the minimal element of $P_h^o$.
\end{lem}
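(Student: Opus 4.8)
\emph{Proof plan.} The plan is to follow the template of Lemma~\ref{lem:sa_con}, replacing the $\ell_1$ Fenchel dual of the inner minimization by its entropic (Donsker--Varadhan) dual. Fix $(h,k,s,a)$, abbreviate $V:=\hVpn$, $V_{\min}:=\min_{s'\in\gS}V(s')$, and for $\lambda>0$ set $Z_P(\lambda):=\sum_{s'}P(s'\mid s,a)\exp(-V(s')/\lambda)$ and $h_\lambda(P):=-\lambda\rho-\lambda\log Z_P(\lambda)$, with $h_0(P):=\lim_{\lambda\to0^+}h_\lambda(P)=V_{\min}$. By strong duality of the KL-constrained inner minimization (Slater holds since $P_h^o$ is strictly feasible),
\[
\sPsa(V)=\max_{\lambda\ge0}h_\lambda(P_h^o),\qquad \hPsa(V)=\max_{\lambda\ge0}h_\lambda(\hat P_h^{o,k}),
\]
the entropic dual underlying the one-dimensional subproblem the algorithm solves. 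Since $\max_\lambda a(\lambda)-\max_\lambda b(\lambda)\le\max_\lambda|a(\lambda)-b(\lambda)|$, the gap is controlled by $\max_\lambda|h_\lambda(P_h^o)-h_\lambda(\hat P_h^{o,k})|$ over the range of relevant $\lambda$, just as in Lemma~\ref{lem:sa_con} the gap was reduced to $\max_{\eta\in[0,H]}|g(\eta,\hat P_h^{o,k})-g(\eta,P_h^o)|$.

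First I would localize the dual variable. Since $V\in[0,H]$ we have $Z_P(\lambda)\ge e^{-H/\lambda}$ for every distribution $P$, hence $h_\lambda(P)\le H-\lambda\rho$; as $\max_\lambda h_\lambda(P)\ge h_0(P)=V_{\min}\ge0$, every maximizer lies in $[0,H/\rho]$. (If a maximizer equals $0$, the corresponding robust value is $V_{\min}$, which lower-bounds the other robust value, so that direction of the gap is non-positive and nothing is to prove; hence one may assume the active $\lambda$ is positive.) This parallels the restriction $\tilde\eta\in[0,H]$ in Lemma~\ref{lem:sa_con}, and the endpoint $H/\rho$ is already the source of the $1/\rho$ in the target bound.

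The perturbation step is where strict positivity is used. Write $\Delta P:=\hat P_h^{o,k}(\cdot\mid s,a)-P_h^o(\cdot\mid s,a)$ and let $s_{\min}\in\argmin_{s'}V(s')$. Then $|Z_{\hat P_h^{o,k}}(\lambda)-Z_{P_h^o}(\lambda)|\le\|\Delta P\|_1\max_{s'}e^{-V(s')/\lambda}=\|\Delta P\|_1\,e^{-V_{\min}/\lambda}$, while $Z_{P_h^o}(\lambda)\ge P_h^o(s_{\min}\mid s,a)\,e^{-V_{\min}/\lambda}\ge c\,e^{-V_{\min}/\lambda}$ and hence $Z_{\hat P_h^{o,k}}(\lambda)\ge(c-\|\Delta P\|_1)\,e^{-V_{\min}/\lambda}$. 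On the event $\|\Delta P\|_1\le c/2$ the common factor $e^{-V_{\min}/\lambda}$ cancels, so with $|\log a-\log b|\le|a-b|/\min\{a,b\}$,
\[
\big|h_\lambda(P_h^o)-h_\lambda(\hat P_h^{o,k})\big|=\lambda\,\big|\log Z_{P_h^o}(\lambda)-\log Z_{\hat P_h^{o,k}}(\lambda)\big|\le\lambda\cdot\frac{\|\Delta P\|_1\,e^{-V_{\min}/\lambda}}{(c/2)\,e^{-V_{\min}/\lambda}}=\frac{2\lambda}{c}\|\Delta P\|_1\le\frac{2H}{\rho c}\|\Delta P\|_1,
\]
uniformly in $\lambda\in[0,H/\rho]$; the complementary event $\|\Delta P\|_1>c/2$ contributes only lower-order terms, absorbed by the trivial bound $|\hPsa(V)-\sPsa(V)|\le H$ and the $1/\sqrt K$ slack below. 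Thus $|\hPsa(V)-\sPsa(V)|\le\frac{2H}{\rho c}\|\hat P_h^{o,k}(\cdot\mid s,a)-P_h^o(\cdot\mid s,a)\|_1$, and it only remains to bound the $\ell_1$ deviation. Hoeffding's inequality with a union bound over $s,a$, over $h,k$, and over the at most $K$ values of $N_h^k(s,a)$ --- together with, as in Lemma~\ref{lem:sa_con}, an $\tfrac1{2\sqrt K}$-net in $\lambda\in[0,H/\rho]$ (using that $h_\lambda$ is Lipschitz in $\lambda$), which accounts for the extra logarithmic factors and the additive $\tfrac1{\sqrt K}$ --- yields $\|\hat P_h^{o,k}(\cdot\mid s,a)-P_h^o(\cdot\mid s,a)\|_1\le\sqrt{4S\log(8SAH^4K^2/\delta'\rho)/N_h^k(s,a)}$ with probability at least $1-\delta'$, and substituting gives the claim.

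\textbf{The main obstacle} is the degeneracy of the dual variable: as $\lambda\to0^+$ the entropic objective $h_\lambda$ concentrates on the single successor state of smallest value, and a careless log-perturbation bound blows up like $e^{H/\lambda}$. The remedy is to keep the factor $e^{-V_{\min}/\lambda}$ explicit and cancel it against $Z_{P_h^o}(\lambda)\ge c\,e^{-V_{\min}/\lambda}$ --- the only place the hypothesis $P_h^o(\cdot\mid s,a)>0$ genuinely enters, and exactly what produces the $1/(\rho c)$ prefactor (consistent with the paper's remark that strict positivity looks like an analysis artifact). Everything downstream --- the concentration bound, the net, and feeding this estimate into the recursion of Lemma~\ref{lem:kl} and hence Theorem~\ref{thm:kl} --- is routine and follows the $(s,a)$-rectangular $\ell_1$ development.
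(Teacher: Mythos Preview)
Your proposal is correct and follows essentially the same route as the paper: reduce the inner KL-constrained minimization to the one-dimensional entropic dual $g(\lambda,P)=\lambda\rho+\lambda\log\sum_{s'}P(s'\mid s,a)e^{-V(s')/\lambda}$, localize $\lambda\in[0,H/\rho]$, bound the log-perturbation by $O(\lambda/c)\|\hat P-P\|_1$, and finish with Hoeffding plus an $\epsilon$-net over $\lambda$ with $\epsilon=1/(2\sqrt K)$. The only noteworthy variant is the perturbation step: the paper writes the difference as $\lambda\log(1+\text{ratio})$ and applies H\"older to get $2\lambda\max_{s'}|\hat P(s')-P(s')|/P(s')$ directly (so the $e^{-V_{\min}/\lambda}$ factors never appear explicitly), whereas you isolate and cancel $e^{-V_{\min}/\lambda}$ by hand under the event $\|\Delta P\|_1\le c/2$; both arrive at the same $\tfrac{2H}{\rho c}\|\Delta P\|_1$ bound, and the paper's net size is obtained from the rectangle $[0,H/\rho]\times[-H,H]$ rather than a Lipschitz constant, which yields the specific $H^4$ inside the logarithm.
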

\begin{proof}
  By the definition of $ \sPs \left(\hVpn\right)(s) 
    =  \inf_{P_h \in \gP_h} \limits \sum_{s^\prime} P_h(s^\prime \mid s,a) \hVpn(s^\prime) $, we consider the following optimization problem:
\begin{equation*}
\begin{split}
&\min_{P_h} \,\, \sum_{s^\prime} P_h(s^\prime \mid s,a) \hVpn(s^\prime) \\
&\text{s.t.}\quad  \left\{\begin{array}{lc}
\sum_{s^\prime} P_h(s^\prime \mid s,a)\log\left(\frac{P_h(s^\prime \mid s,a)}{P_h^o(s^\prime \mid s,a)}\right)  \leq \rho \,, \\
\sum_{s^\prime} P_h(s^\prime \mid s,a) = 1  \,, \\
P_h^o(\cdot \mid s,a) > 0, P_h(\cdot \mid s,a) \ge 0  \,. \\
\end{array}\right.
\end{split}
\end{equation*}

Let $\Tilde{P}_h(s^\prime \mid s, a) = \frac{ P_h(s^\prime \mid s,a) }{ P_h^o(s^\prime \mid s,a)}$, we can rewrite the above optimization problem as
\begin{equation*}
\begin{split}
&\min_{\Tilde{P}_h } \sum_{s^\prime} \Tilde{P}_h(s^\prime \mid s,a) P_h^o(s^\prime \mid s,a) \hVpn(s^\prime) \\
&\text{s.t.}\quad  \left\{\begin{array}{lc}
\sum_{s^\prime}  \Tilde{P}_h(s^\prime \mid s,a^\prime) P_h^o(s^\prime \mid s,a^\prime)\log\left(\Tilde{P}_h(s^\prime \mid s, a)\right) \leq \rho \,, \\
\sum_{s^\prime} \Tilde{P}_h(s^\prime \mid s,a^\prime)P_h^o(s^\prime \mid s,a) = 1 \,, \\
\Tilde{P}_h(\cdot \mid s,a) \geq 0  \,.
\end{array}\right.
\end{split}
\end{equation*}

Use the Lagrangian multiplier method and $f(x) = x \log x$, we have the Lagrangian $L(\Tilde{P}_h, \eta, \lambda)$ with multiplier $\eta \in \mathbb{R}$, $\lambda \geq 0$,
\begin{align*}
     & L(\Tilde{P}_h, \eta, \lambda) (s,a) \\
    = \ & \sum_{s^\prime} \Tilde{P}_h(s^\prime \mid s,a) P_h^o(s^\prime \mid s,a) \hVpn(s^\prime) + \lambda \left( \sum_{s^\prime}  \Tilde{P}_h(s^\prime \mid s,a^\prime) P_h^o(s^\prime \mid s,a^\prime)\log(\Tilde{P}_h(s^\prime \mid s, a)) - \rho \right)\\ 
    & -  \eta \left( \sum_{s^\prime} \Tilde{P}_h(s^\prime \mid s,a)P_h^o(s^\prime \mid s,a) - 1 \right) \\
     = \ &  - \lambda \rho +\eta + 
    \lambda \sum_{s^\prime} P_h^o(s^\prime \mid s,a) \left( f\left(\Tilde{P}_h(s^\prime \mid s,a^\prime) \right) - \Tilde{P}_h(s^\prime \mid s,a^\prime) \left(\frac{\eta - V_{h+1}^{\pi_k}(s^\prime)}{\lambda} \right)\right) \,.
\end{align*}

The convex conjugate of $f$ is $f^\ast(y) = \max_{x} \limits \langle x, y\rangle - f(x) $. 
Using $f^\ast$, we can thus optimize over $\Tilde{P}_h$ and rewrite the Lagrangian over as
\begin{align*}
    L(\eta, \lambda) (s,a) = \min_{\Tilde{P}_h} \limits L(\Tilde{P}_h, \eta, \lambda) (s,a)  = - \lambda \rho +   \eta - \lambda \sum_{s^\prime} P_h^o(s^\prime \mid s,a) f^\ast \left( \frac{\eta - V_{h+1}^{\pi_k}(s^\prime)}{\lambda}\right) \,.
\end{align*}

Conditioned on $x \ge 0$, $f(x) = x \log x$, notice that the conjugate $f^\ast(y)$ has the following closed form,
\begin{equation*}
f^\ast(y)=\max_{x} \limits  \langle x, y\rangle - f(x) =\exp(y-1) \,.
\end{equation*}

Using the closed form of $f^\ast(y)$, we can rewrite the optimization problem as
\begin{align*}
    L(\eta, \lambda) (s,a) &= - \lambda  \rho +  \eta - \lambda \sum_{s^\prime} P_h^o(s^\prime \mid s,a) f^\ast \left( \frac{\eta -  V_{h+1}^{\pi_k}(s^\prime)}{\lambda}\right) \\
    &= - \lambda \rho +  \eta - \lambda \sum_{s^\prime} P_h^o(s^\prime \mid s,a) \exp\left(\frac{\eta -  V_{h+1}^{\pi_k}(s^\prime) - \lambda}{\lambda}\right)\,.
\end{align*}

Taking the derivative of $\eta$,

\begin{align*}
    \frac{\partial L}{\partial \eta} &= 1 - \sum_{s^\prime} P_h^o(s^\prime \mid s,a) \exp\left(\frac{\eta -  V_{h+1}^{\pi_k}(s^\prime) - \lambda}{\lambda}\right) = 0\,, \\
    \eta &= \lambda - \lambda \log\left(\sum_{s^\prime} P_h^o(s^\prime \mid s,a)\exp\left(\frac{ -  V_{h+1}^{\pi_k}(s^\prime) }{\lambda}\right)\right) \,.
\end{align*}

Directly optimize over $\eta$,  we can reduce the problem to
\begin{align*}
    L(\lambda) (s,a) &= \lambda(1-\rho) -  \lambda \log\left(\sum_{s^\prime} P_h^o(s^\prime \mid s,a)\exp\left(\frac{ -  V_{h+1}^{\pi_k}(s^\prime) }{\lambda}\right)\right)  - \lambda \,, \\
    & = - \lambda \rho - \lambda \log\left(\sum_{s^\prime} P_h^o(s^\prime \mid s,a)\exp\left(\frac{ -  V_{h+1}^{\pi_k}(s^\prime) }{\lambda}\right)\right)   \,.
\end{align*}
Define $g(\lambda, P_h^o) = -L(\lambda) (s,a) $ as 
\begin{align*}
    g(\lambda, P_h^o) 
    = \ &  \lambda \rho + \lambda \log\left(\sum_{s^\prime} P_h^o(s^\prime \mid s,a)\exp\left(\frac{ -  V_{h+1}^{\pi_k}(s^\prime) }{\lambda}\right)\right)\,.
\end{align*}


Note that the Lagrangian multiplier $\lambda \ge 0$.
Then we prove $g$ is bounded within $[-H, H]$ over $[0, H/ \rho]$.

\begin{align*}
    g(\lambda, P_h^o) 
    = \ &  \lambda \rho + \lambda \log\left(\sum_{s^\prime} P_h^o(s^\prime \mid s,a)\exp\left(\frac{ -  V_{h+1}^{\pi_k}(s^\prime) }{\lambda}\right)\right)\,, \\
    \le \ &  \lambda \rho + \lambda \log\left(\sum_{s^\prime} P_h^o(s^\prime \mid s,a)\exp\left(\frac{ -  0 }{\lambda}\right)\right)\,, \\
    = \ & \lambda \rho \le H \,,
\end{align*}
where the first inequality follows from $V_{h+1}^{\pi_k}(s^\prime) \ge 0$ and the second inequality is by $\lambda \le H/ \rho$.

\begin{align*}
    g(\lambda, P_h^o) 
    = \ &  \lambda \rho + \lambda \log\left(\sum_{s^\prime} P_h^o(s^\prime \mid s,a)\exp\left(\frac{ -  V_{h+1}^{\pi_k}(s^\prime) }{\lambda}\right)\right)\,, \\
    \ge \ &  \lambda \rho + \lambda \log\left(\sum_{s^\prime} P_h^o(s^\prime \mid s,a)\exp\left(\frac{ -  H }{\lambda}\right)\right)\,, \\
    = \ & \lambda \rho - H  \ge -H \,,
\end{align*}
where the first inequality follows from $V_{h+1}^{\pi_k}(s^\prime) \le H$ and the second inequality is by $\lambda \ge 0$.

Moreover, from the induction above we know that for any $P$, $g(0, P) \le 0$ and for $\lambda > H / \rho$,
\begin{align*}
    g\left( \lambda , P \right) 
    \geq \lambda \rho  + \lambda \log (\exp( -H / \lambda)) > 0 \,. 
\end{align*}
Therefore, g achieves its minimum  over $\lambda \in [0, H / \rho]$. We remark that the same form is also used for sample complexity results ( \citep{badrinath2021robust,yang2021towards}).

We can now rewrite 
\begin{align*}
     \hPs \left(\hVpn\right)(s)  - \sPs\left(\hVpn\right)(s)
     = \ & \min_{0 \le \lambda_1 \le H/ \rho} g\left(\lambda_1, \hat{P}_h^{o,k}\right) - \min_{0 \le \lambda_2 \le H/ \rho} g\left(\lambda_2, P_h^o\right) \\
     \leq \ & \max_{0 \le \lambda \le H/ \rho} \left|g\left(\lambda, \hat{P}_h^{o,k}\right) -  g\left(\lambda, P_h^o\right) \right| \,.
\end{align*}

By \cite{nilim2005robust} (Appendix C), when $\lambda = 0$, $g\left(\lambda, \hat{P}_h^{o,k}\right) = g\left(\lambda, P_h^o\right) = \min_{s \in \gS} V^{\pi_k}_{h+1}(s)$. Therefore, it suffice to bound over $\max_{c \leq \lambda \le H/ \rho} \left|g\left(\lambda, \hat{P}_h^{o,k}\right) -  g\left(\lambda, P_h^o\right) \right|$, where $c > 0$. We now have 
\begin{align*}
    &  \left|g\left(\lambda, \hat{P}_h^{o,k}\right) -  g\left(\lambda, P_h^o\right) \right| \\
    = \ & \left| \lambda \log\left(\sum_{s^\prime} \hat{P}_h^{o,k}(s^\prime \mid s,a)\exp\left(\frac{ -  V_{h+1}^{\pi_k}(s^\prime) }{\lambda}\right)\right) - \lambda \log\left(\sum_{s^\prime} P_h^o(s^\prime \mid s,a)\exp\left(\frac{ -  V_{h+1}^{\pi_k}(s^\prime) }{\lambda}\right)\right)\right|\\
    = \ & \left | \lambda \log \left( 1 + \frac{\sum_{s^\prime} (\hat{P}_h^{o,k}(s^\prime \mid s,a) - P_h^o(s^\prime \mid s,a))\exp\left(\frac{ -  V_{h+1}^{\pi_k}(s^\prime) }{\lambda}\right)}{\sum_{s^\prime} P_h^o(s^\prime \mid s,a)\exp\left(\frac{ -  V_{h+1}^{\pi_k}(s^\prime) }{\lambda}\right)} \right)\right| \\
    \leq \ & 2 \lambda \left|  \frac{\sum_{s^\prime} (\hat{P}_h^{o,k}(s^\prime \mid s,a) - P_h^o(s^\prime \mid s,a))\exp\left(\frac{ -  V_{h+1}^{\pi_k}(s^\prime) }{\lambda}\right)}{\sum_{s^\prime} P_h^o(s^\prime \mid s,a)\exp\left(\frac{ -  V_{h+1}^{\pi_k}(s^\prime) }{\lambda}\right)} \right|\\
    \leq \ & 2\lambda \max_{s^\prime} \left| \frac{\hat{P}_h^{o,k}(s^\prime \mid s,a) - P_h^o(s^\prime \mid s,a)}{P_h^o(s^\prime \mid s,a)} \right|
\end{align*}
where the first inequality follows from $|\log(1 + x)| \leq 2|x|$ and the second inequality follows from the Holder's inequality.

By Hoeffding's inequality and an union bound over all $s,a^\prime$, $N_h^k(s,a)$, the following inequality holds with probability at least $1 - \delta$,
\begin{align*}
    \max_{s^\prime} \left| \hat{P}_h^{o,k}(s^\prime \mid s,a) - P_h^o(s^\prime \mid s,a) \right| \le \left\| \hat{P}_h^{o,k}(\cdot \mid s,a) - P_h^o(\cdot \mid s,a) \right\|_1  
    \leq \ &  \sqrt{\frac{4S \log(SAH^2 K / \delta)}{N_h^k(s,a)}} \,.
\end{align*}

Then we create an $\epsilon$-net $N_\epsilon(\lambda)$ with $g$ over $\lambda \in [0, H / \rho]$ such that
\begin{align*}
    \max_{\lambda \in [0, H / \rho]} |  g(\lambda, \hat{P}_h^{o,k})  - g(\lambda, P_h^o)| \leq \max_{\lambda\in N_\epsilon(\eta)} |  g(\lambda, \hat{P}_h^{o,k})  - g(\lambda, P_h^o)| + 2 \epsilon \,.
\end{align*}

Then we know that $| N_\epsilon(\lambda)| $ is bounded by the area of the rectangle $[0, H/ \rho] \times [-H, H] $ over $\epsilon^2$, 
\begin{align*}
     | N_\epsilon(\lambda)| \leq \frac{2H^2}{\rho \epsilon^2} \,.
\end{align*}


Taking an union bound over $N_\epsilon(\lambda)$ and denote $c = \min_{s^\prime} \limits P_h^o(\cdot \mid s,a)$, we have the following inequality happens with at least $1-\delta^\prime$ probability:
\begin{align*}
\hPs (\hVpn)(s)  - \sPs (\hVpn)(s) 
\leq \ & \max_{\lambda\in [0, H / \rho]} |  g(\lambda, \hat{P}_h^{o,k})  - g(\lambda, P_h^o)| \\
    \leq \ & \max_{\lambda\in N_\epsilon(\lambda)} |  g(\lambda, \hat{P}_h^{o,k})  - g(\lambda, P_h^o)| + 2 \epsilon \\
    \leq \ & 2 \frac{H}{\rho}  \max_{s^\prime} \left| \frac{\hat{P}_h^{o,k}(s^\prime \mid s,a) - P_h^o(s^\prime \mid s,a)}{P_h^o(s^\prime \mid s,a)} \right| + 2 \epsilon \\
    \le \ &  2 \frac{H}{\rho c} \sqrt{\frac{4S \log(2SAH^4 K / \delta^\prime \rho \epsilon^2)}{N_h^k(s,a)}}  + 2 \epsilon \,,
\end{align*}

Take $\epsilon = \frac{1}{2\sqrt{K}}$, then 
\begin{align*}
    \hPs (\hVpn)(s)  - \sPs (\hVpn)(s) 
    \leq \ & 2 \frac{H}{\rho c}  \sqrt{\frac{4 S  \log(8SAH^4K^2/ \delta^\prime \rho)}{N_h^k(s,a)}} +  \frac{1}{\sqrt{K}} \,.
\end{align*}
\end{proof}

\newpage
\section{Proof of Proposition 1}\label{appendix:prop}
\hard*
\begin{proof}
We consider a robust MDP with three states $s_0, s_1, s_2$ and two actions $a_0, a_1$. Without loss of generality, we let $s_0$ be the initial state.  On the initial state $s_0$, both actions will lead to a reward of $0$. On state $s_1$, a reward of $1 / (H-1)$ is given for both actions. On state $s_2$, a reward of $-1 / (H-1)$ is given for both actions. The nominal transition dynamic of the MDP is the following. Taking action $a_0$ on $s_0$ will be transited to $s_1$ with a probability of $\epsilon$ and be transited to $s_2$ with a probability of $\epsilon$, while $\epsilon > 0.5$. Taking the other action $a_1$ will have equal probability of transiting to $s_1$ and $s_2$. The states $s_1$ and $s_2$ are absorbing, in the sense that taking any action on these two states will be transited by to the same state. The transition of the MDP is also illustrated in Figure \ref{fig:hard}, where a dashed line denotes a probabilistic transition and a solid line denotes deterministic transition.  
\begin{figure}[h]
    \centering
    \includegraphics[width=0.4\textwidth]{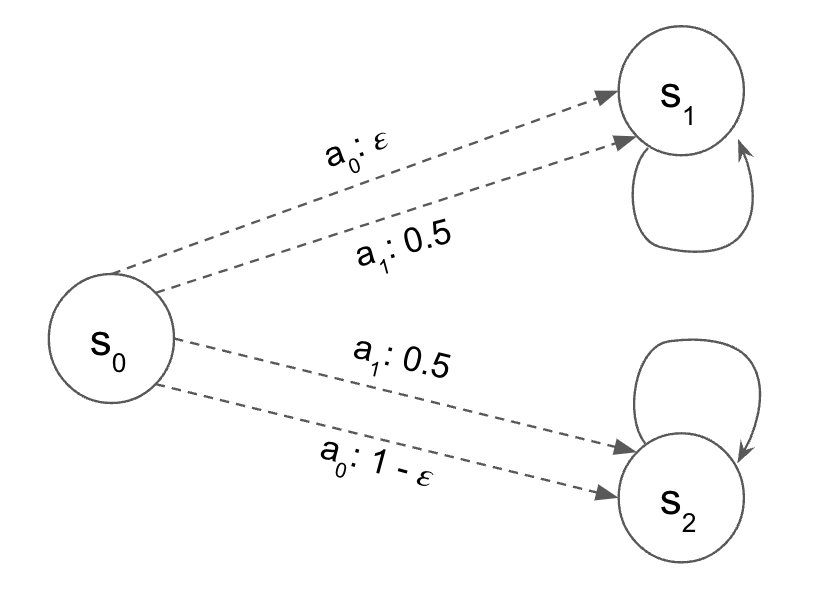}
    \includegraphics[width=0.4\textwidth]{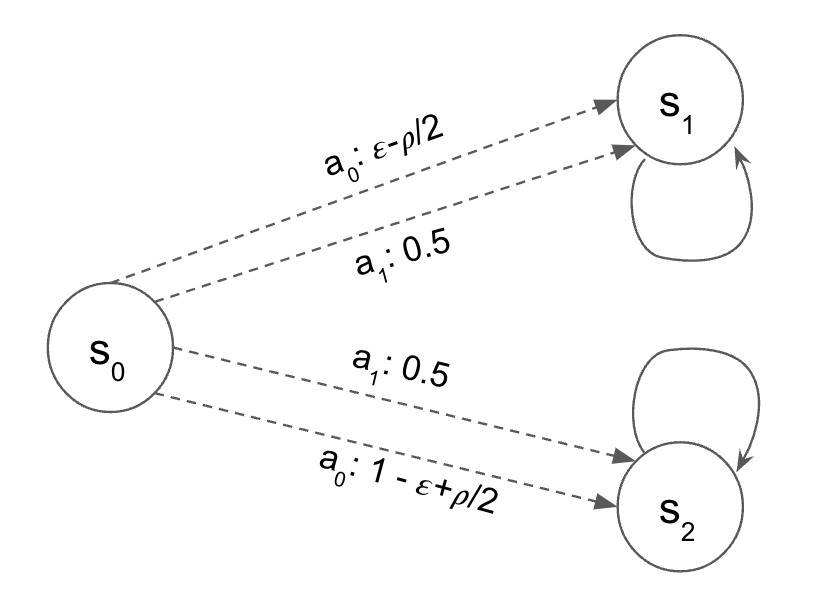}
    \caption{The left figure describes the nominal transition dynamic of the MDP. The right figure describes the robust transition dynamic of the MDP. }
    \label{fig:hard}
\end{figure}
With the nominal transition, it is clear that an optimal policy would be always taking $a_0$. Denote this policy as $\pi_{o, \ast}$, the value for this policy under nominal transition over $K$ episodes is 
\begin{align*}
    V^{\pi_{o, \ast}}(s_0) = K (H - 1) \left(\epsilon \cdot \frac{1}{H - 1} - (1 - \epsilon) \cdot \frac{1}{H - 1} \right) = 2\epsilon - 1 > 0 \,,
\end{align*}
where the last inequality is due to $\epsilon > 0$.

However, consider the uncertainty radius $\rho$ and the robust transition denoted by the right figure of Figure \ref{fig:hard}. That is, taking $a_0$ on $s_0$ will leads to a transition to $s_1$ with probability $ \epsilon - \rho / 2$ and to $s_2$ with probability $1 - \epsilon + \rho / 2$. Note that as $\epsilon > 0.5$, $\rho \leq 1$, $\epsilon - \rho / 2 > 0$. Moreover, this transition is indeed the worst case transition for any non-uniform policy. Let $\Tilde{V}$ denotes the robust value under the above described transition. With a uniform policy $\pi$, the value of it under this transition is 
\begin{align*}
    \Tilde{V}^\pi(s_0) = K (H - 1) \left(0.5 \left(\epsilon - \frac{\rho}{2} \right)\cdot \frac{1}{H - 1} - 0.5 \left(1 - \epsilon +\frac{\rho}{2}\right)) \cdot \frac{1}{H - 1} \right) = \epsilon - \rho / 2 - 0.5 \,.
\end{align*}
The value of $\pi_{o,\ast}$ is, however, 
\begin{align*}
    \Tilde{V}^{\pi_{o, \ast}}(s_0) = K (H - 1) \left(\left(\epsilon - \frac{\rho}{2} \right)\cdot \frac{1}{H - 1} - \left(1 - \epsilon +\frac{\rho}{2}\right)) \cdot \frac{1}{H - 1} \right) = 2\epsilon - \rho  - 1 \,.
\end{align*}
For any $2\epsilon - 1 \leq \rho \leq 1$, we have $\Tilde{V}^{\pi_{o, \ast}}(s_0) \leq \Tilde{V}^\pi(s_0) $. Since $\epsilon > 0.5$ is arbitrary, the optimal policy under the nominal transition is non-robust even under the slightest perturbation. 
\end{proof}
\newpage
\section{Auxiliary lemmas}
\begin{lem}[\cite{bartlett2013theoretical}]\label{lem:eps_cover}
An $\epsilon$-cover of a subset $T$ of a pseudometric space $(S, d)$ is a set $\hat{T} \subset T$ such that for each $t \in T$ there is a $\hat{t} \in \hat{T}$ such that $d(t, \hat{t}) \leq \epsilon$. The $\epsilon$-covering number of $T$ is
$$
N(\epsilon, T, d)=\min \left\{|\hat{T}|: \hat{T} \text { is an } \epsilon \text {-cover of } T \right\} \,.
$$
    Let $F_{d}$ be the set of $L$-Lipschitz functions (wrt $\|\cdot\|_{\infty}$ ) mapping from $[0,1]^{d}$ to $[0,1]$. Then
$$
\log N\left(\epsilon, F_{d},\|\cdot\|_{\infty}\right)=\Theta\left(\left(\frac{L}{\epsilon}\right)^{d}\right) \,.
$$
\end{lem}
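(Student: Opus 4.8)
The statement is the classical Kolmogorov--Tikhomirov metric-entropy estimate for Lipschitz function classes, so the plan is to establish the matching upper and lower bounds separately, by an explicit covering construction and an explicit packing construction. Throughout I would treat the dimension $d$ as a fixed constant (as it is in our applications: $d=1$ for the $(s,a)$-rectangular set and $d=A$ for the $s$-rectangular set), so the implied constants in $\Theta(\cdot)$ may depend on $d$; I would also assume at the outset that $\epsilon$ is small relative to $\min\{1,L\}$, since otherwise $N(\epsilon,F_d,\|\cdot\|_\infty)=O(1)$ and $(L/\epsilon)^d=O(1)$ and the identity is trivial.

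For the upper bound $\log N(\epsilon,F_d,\|\cdot\|_\infty)=O((L/\epsilon)^d)$, I would tile $[0,1]^d$ by $M=O((L/\epsilon)^d)$ axis-aligned subcubes of side $\delta=\epsilon/(4L)$ and discretize the range $[0,1]$ into multiples of $\epsilon/4$. To each $f\in F_d$ I associate the ``staircase'' $\tilde f$ that is constant on each cell, equal to $f$ at that cell's base vertex rounded to the nearest range level. Since $f$ is $L$-Lipschitz with respect to $\|\cdot\|_\infty$ and each cell has $\|\cdot\|_\infty$-diameter $\delta$, $f$ oscillates by at most $L\delta\le\epsilon/4$ over a cell, so $\|f-\tilde f\|_\infty\le\epsilon/2$; the realizable staircases thus form an external $\epsilon/2$-net, which the standard internal-versus-external adjustment (pick one genuine element of $F_d$ within $\epsilon/2$ of each net point) turns into an $\epsilon$-cover inside $F_d$ of the same cardinality. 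To count the staircases, enumerate the cell vertices along a boustrophedon path in which consecutive vertices differ in a single coordinate by $\delta$: the first rounded value has $O(1/\epsilon)$ choices, and because $f$ changes by at most $L\delta=O(\epsilon)$ between adjacent vertices, each later rounded value has only $O(1)$ admissible choices given its predecessor. Hence the number of staircases is at most $O(1/\epsilon)\cdot C^{\,M-1}$ for an absolute constant $C$, so $\log N(\epsilon,F_d,\|\cdot\|_\infty)\le O(M)=O((L/\epsilon)^d)$.

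For the lower bound $\log N(\epsilon,F_d,\|\cdot\|_\infty)=\Omega((L/\epsilon)^d)$, I would exhibit a subset of $F_d$ of cardinality $2^{\Omega((L/\epsilon)^d)}$ whose pairwise $\|\cdot\|_\infty$-distances strictly exceed $2\epsilon$; any $\epsilon$-cover must then have at least that many elements, since a ball of radius $\epsilon$ contains at most one point of such a set. Partition $[0,1]^d$ into $M'=\Omega((L/\epsilon)^d)$ subcubes of side $\ell\asymp\epsilon/L$ and, on the cube $Q_i$ with center $c_i$, place the tent $\phi_i(x)=\bigl(L(\tfrac{\ell}{2}-\|x-c_i\|_\infty)\bigr)_+$, extended by $0$ outside $Q_i$; each $\phi_i$ is $L$-Lipschitz with respect to $\|\cdot\|_\infty$, vanishes off the interior of $Q_i$, and attains $L\ell/2\asymp\epsilon$ at $c_i$. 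For $S\subseteq[M']$ set $f_S:=\sum_{i\in S}\phi_i$; since the cubes tile the domain and overlap only on a null set, $f_S$ is $L$-Lipschitz (its almost-everywhere gradient has $\ell_1$-norm at most $L$) and, for $\epsilon$ small, takes values in $[0,1]$, while for $S\ne S'$ any center $c_i$ with $i$ in the symmetric difference witnesses $\|f_S-f_{S'}\|_\infty=L\ell/2>2\epsilon$ once the constant in $\ell\asymp\epsilon/L$ is chosen suitably. This produces $2^{M'}$ strictly $2\epsilon$-separated points, so $\log N(\epsilon,F_d,\|\cdot\|_\infty)\ge M'\log 2=\Omega((L/\epsilon)^d)$, matching the upper bound.

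There is no conceptual obstacle here, since this is a textbook estimate; the only real work is bookkeeping. The points needing care are: choosing the grid resolution and range discretization so the staircase error is genuinely at most $\epsilon$ rather than merely $O(\epsilon)$; confirming that a boustrophedon ordering truly keeps consecutive vertices at $\ell_\infty$-distance $\delta$; carrying out the external-to-internal cover conversion cleanly; and verifying that the summed tents stay $L$-Lipschitz across the shared cube faces where $f_S$ is only piecewise smooth, which follows from the almost-everywhere gradient bound together with continuity. I would perform all of this with $d$ fixed and absorb every $d$-dependent factor into the $\Theta(\cdot)$.
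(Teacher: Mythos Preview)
The paper does not prove this lemma at all: it appears in the auxiliary-lemmas appendix with only a citation to Bartlett's lecture notes and no accompanying argument. Your plan is the standard Kolmogorov--Tikhomirov proof---grid discretization of domain and range with a boustrophedon increment-counting argument for the upper bound, and a disjoint tent-function packing for the lower bound---and it is correct as written (the only bookkeeping points you flag, such as the external-to-internal cover conversion and the piecewise-Lipschitz gluing across cube faces, all go through routinely). So you are supplying a full proof where the paper simply invokes a known result.

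One incidental observation: in the places where the paper actually applies this lemma (the concentration Lemmas for the $(s,a)$- and $s$-rectangular cases), what is really used is only the much weaker fact that a single $L$-Lipschitz scalar function on $[0,H]$ (respectively on $[0,H]^A$) admits an $\epsilon$-net of the \emph{domain} of size $O(L/\epsilon)$ (respectively $O((L/\epsilon)^A)$). Your argument proves the full function-class metric-entropy statement, which is strictly stronger than what the downstream proofs require.
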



\begin{lem}[Lemma 7.5 \cite{agarwal2019reinforcement}]\label{lem:visit}
For arbitrary $K$ sequence of trajectories $\{s_h^k, a_h^k\}_{h=1}^H$, $k = 1, \ldots, K$, we have
    \begin{align*}
        \sum^K_{k=1} \sum^H_{h=1} \frac{1}{\sqrt{N_h^k(s_h^k, a_h^k)}} \leq 2 H \sqrt{SAK} \,.
    \end{align*}
\end{lem}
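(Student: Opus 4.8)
The plan is to prove the bound by a standard double-counting argument, treating each step $h\in[H]$ separately and then invoking the Cauchy--Schwarz inequality over the state-action pairs.

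First I would exchange the order of summation and group the episodes according to which state-action pair is visited at step $h$:
\[
\sum^K_{k=1}\sum^H_{h=1}\frac{1}{\sqrt{N_h^k(s_h^k,a_h^k)}}
= \sum^H_{h=1}\ \sum_{(s,a)\in\gS\times\gA}\ \sum_{k\,:\,(s_h^k,a_h^k)=(s,a)}\frac{1}{\sqrt{N_h^k(s,a)}}\,.
\]
Fix a step $h$ and a pair $(s,a)$, and let $m_h(s,a)$ be the number of episodes $k\in[K]$ with $(s_h^k,a_h^k)=(s,a)$. Listing these episodes chronologically, on the $j$-th of them exactly $j-1$ earlier episodes visited $(s,a)$ at step $h$, so the counter is $N_h^k(s,a)=\max\{j-1,1\}$. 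Hence the innermost sum equals $1+\sum_{i=1}^{m_h(s,a)-1} i^{-1/2}$, and comparing this partial sum with an integral ($\sum_{i=1}^{n} i^{-1/2}\le 2\sqrt n-1$) bounds it by $2\sqrt{m_h(s,a)}$.

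Next, for the fixed step $h$, I would sum over $(s,a)$. Every episode passes through exactly one state-action pair at step $h$, so $\sum_{(s,a)} m_h(s,a)=K$, and Cauchy--Schwarz gives
\[
\sum_{(s,a)\in\gS\times\gA} 2\sqrt{m_h(s,a)}\ \le\ 2\sqrt{SA}\left(\sum_{(s,a)} m_h(s,a)\right)^{1/2} = 2\sqrt{SAK}\,.
\]
Summing over $h=1,\dots,H$ then yields $\sum_{k,h} N_h^k(s_h^k,a_h^k)^{-1/2}\le 2H\sqrt{SAK}$, which is the claim.

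Since the statement is quoted verbatim from \cite{agarwal2019reinforcement} (their Lemma 7.5), there is no genuine obstacle here; the only point requiring a little care is the $\max\{\cdot,1\}$ truncation in the definition of $N_h^k$, which merely replaces $0$ by $1$ on the first visit to each pair and is absorbed into the constant $2$ without loss. An equivalent alternative is a telescoping/potential argument tracking $\sum_{(s,a)}\sqrt{N_h^k(s,a)}$ across consecutive episodes, but the explicit per-pair enumeration above is the cleanest route.
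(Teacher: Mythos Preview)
Your proposal is correct and follows essentially the same route as the paper's proof: both exchange the order of summation to group by $(h,s,a)$, bound the per-pair sum via $\sum_{i=1}^{n} i^{-1/2}\le 2\sqrt{n}$, apply Cauchy--Schwarz over $(s,a)$ using $\sum_{(s,a)} m_h(s,a)=K$, and then sum over $h$. Your treatment of the $\max\{\cdot,1\}$ truncation is in fact more carefully spelled out than the paper's, which glosses over the first-visit term.
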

\begin{proof}
We have
\begin{align*}
\sum^K_{k=1} \sum^H_{h=1} \frac{1}{\sqrt{N_h^k\left(s_h^k, a_h^k\right)}} 
= \ & \sum_{h=1}^{H} \sum_{(s, a) \in \gS \times \gA} \sum_{i=1}^{N_h^K(s, a)} \frac{1}{\sqrt{i}} \\
\leq \ & 2 \sum_{h=1}^{H}  \sum_{(s, a) \in \gS \times \gA}\sqrt{N_h^K(s, a)} \\
\leq \ & \sum_{h=1}^{H} \sqrt{S A \sum_{s, a} N_h^K(s, a)} \\
=\ & H \sqrt{S A K} \,,
\end{align*}
where the first inequality is by $\sum^N_{i=1} \frac{1}{ \sqrt{i}} \leq 2 \sqrt{N}$ and the second inequality follows by Cauchy-Schwarz inequality.
\end{proof}

\begin{lem}[Fundamental inequality of Online Mirror Descent for RL (Lemma 17 \cite{shani2020optimistic})]\label{lem:omd}
Let $\beta>0$. Let $\pi_{h}^{1}(\cdot \mid s)$ be the uniform distribution. Then, by updating with OMD and with KL divergence regularization, for any $k \in[K], h \in[H]$ and $s \in \mathcal{S}$, the following holds for any stationary policy $\pi$, 
\begin{align}\label{eq:omd}
    \sum_{k=1}^{K}\left\langle Q_{h}^{k}(\cdot \mid s), \pi_{h}^{k}(\cdot \mid s)-\pi_{h}(\cdot \mid s)\right\rangle \leq \frac{\log A}{\beta}+\frac{\beta}{2} \sum_{k=1}^{K} \sum_{a} \pi_{h}^{k}(a \mid s)\left(Q_{h}^{k}(s, a)\right)^{2} \,.
\end{align}
\end{lem}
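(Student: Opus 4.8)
The plan is to recognize that the policy-improvement step of Algorithm~\ref{alg} is online mirror descent on the simplex $\Delta(\gA)$ with the negative-entropy mirror map (whose Bregman divergence is $D_{\mathrm{KL}}$), and that the inequality is the textbook regret bound for the induced exponentiated-gradient update --- this is Lemma~17 of \citet{shani2020optimistic}, whose short argument I would reproduce. Fix $h$ and $s$ and abbreviate $\pi^k := \pi_h^k(\cdot\mid s)$, $u := \pi_h(\cdot\mid s)$, $q^k := Q_h^k(s,\cdot)\in[0,H]^{\gA}$; the update is the proximal step $\pi^{k+1}=\arg\min_{\pi\in\Delta(\gA)}\{\beta\langle q^k,\pi\rangle + D_{\mathrm{KL}}(\pi\,\|\,\pi^k)\}$, with the familiar closed form $\pi^{k+1}(a)\propto \pi^k(a)\exp(-\beta q^k(a))$.

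The engine is the three-point (prox) inequality: for every $u\in\Delta(\gA)$,
\[
\beta\langle q^k,\pi^{k+1}-u\rangle \;\le\; D_{\mathrm{KL}}(u\,\|\,\pi^k) - D_{\mathrm{KL}}(u\,\|\,\pi^{k+1}) - D_{\mathrm{KL}}(\pi^{k+1}\,\|\,\pi^k).
\]
Splitting $\beta\langle q^k,\pi^k-u\rangle = \beta\langle q^k,\pi^k-\pi^{k+1}\rangle + \beta\langle q^k,\pi^{k+1}-u\rangle$, summing over $k\in[K]$, telescoping the difference $D_{\mathrm{KL}}(u\,\|\,\pi^k)-D_{\mathrm{KL}}(u\,\|\,\pi^{k+1})$, dropping $-D_{\mathrm{KL}}(u\,\|\,\pi^{K+1})\le 0$, and using $D_{\mathrm{KL}}(u\,\|\,\pi^1)\le\log A$ (since $\pi^1$ is uniform) leaves us needing only the per-round stability estimate $\beta\langle q^k,\pi^k-\pi^{k+1}\rangle - D_{\mathrm{KL}}(\pi^{k+1}\,\|\,\pi^k)\le \tfrac{\beta^2}{2}\sum_a\pi^k(a)\,(q^k(a))^2$; dividing the resulting bound by $\beta$ then gives the claim.

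For the stability estimate I would use the explicit update. Writing $Z^k:=\sum_a\pi^k(a)e^{-\beta q^k(a)}$, so that $\log(\pi^{k+1}(a)/\pi^k(a)) = -\beta q^k(a)-\log Z^k$, yields the identity $D_{\mathrm{KL}}(\pi^{k+1}\,\|\,\pi^k) = -\beta\langle q^k,\pi^{k+1}\rangle-\log Z^k$, hence $\beta\langle q^k,\pi^k-\pi^{k+1}\rangle - D_{\mathrm{KL}}(\pi^{k+1}\,\|\,\pi^k) = \beta\langle q^k,\pi^k\rangle + \log Z^k$. Because $q^k\ge 0$, the pointwise inequality $e^{-x}\le 1-x+\tfrac{x^2}{2}$ (valid for $x\ge0$) gives $Z^k\le 1-\beta\langle q^k,\pi^k\rangle + \tfrac{\beta^2}{2}\langle (q^k)^2,\pi^k\rangle$, and then $\log Z^k\le -\beta\langle q^k,\pi^k\rangle + \tfrac{\beta^2}{2}\langle (q^k)^2,\pi^k\rangle$ via $\log(1+t)\le t$; substituting cancels the $\beta\langle q^k,\pi^k\rangle$ terms and leaves exactly $\tfrac{\beta^2}{2}\sum_a\pi^k(a)(q^k(a))^2$, as desired.

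The only delicate point is this stability bound: one must avoid the crude route of lower-bounding $D_{\mathrm{KL}}(\pi^{k+1}\,\|\,\pi^k)$ by Pinsker's inequality, which would produce a worse $\tfrac{\beta^2}{2}\|q^k\|_\infty^2$ term instead of the ``local-norm'' quantity $\sum_a\pi^k(a)(q^k(a))^2$, and instead exploit both the closed form of the multiplicative update and the sign/boundedness of $q^k$ (nonnegativity is what makes the second-order expansion of $e^{-\beta q^k}$ valid, while the clipping to $[0,H]$ is what later lets one bound $\sum_a\pi^k(a)(q^k(a))^2\le H^2$ when this lemma is applied in Lemma~\ref{lem:pseudo_regret_sa}). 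Everything else is routine telescoping of the Bregman terms.
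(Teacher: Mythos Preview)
Your argument is correct and is the standard proof of the exponentiated-gradient (local-norm) regret bound: the three-point prox inequality, telescoping of $D_{\mathrm{KL}}(u\,\|\,\pi^k)-D_{\mathrm{KL}}(u\,\|\,\pi^{k+1})$, and the stability estimate via the closed-form multiplicative update together with $e^{-x}\le 1-x+\tfrac{x^2}{2}$ for $x\ge 0$ and $\log(1+t)\le t$. In particular, your observation that one must exploit the explicit multiplicative-weights form (rather than Pinsker) to obtain the local-norm term $\sum_a\pi^k(a)(q^k(a))^2$ is exactly the right point.

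The paper itself does not prove this lemma at all: it is listed among the auxiliary lemmas and simply attributed to Lemma~17 of \citet{shani2020optimistic}. So there is no ``paper's proof'' to compare against; what you have written is precisely the argument one finds in that reference (and, equivalently, in standard sources such as \citet{orabona2019modern}, Theorem~10.4). Your write-up would serve as a complete, self-contained proof should the paper wish to include one.
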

\newpage
\section{More experimental details}
\paragraph{Other configurations and set up }
The episode length is set to $20$ and all algorithms are trained with $3000$ episodes. The evaluation results are averaged over $20$ runs and is presented with $1$ standard deviation. All experiments are conducted with 64 core ADM 3990X. 
\paragraph{Results with KL divergence uncertainty sets}
With the uncertainty set described with KL divergence, we present the following experimental results. All other configurations and set up remains the same with those for uncertainty set with $\ell_1$ distance.
\begin{figure}[h]
     \centering
     \begin{subfigure}[b]{0.3\textwidth}
         \centering
         \includegraphics[width=\textwidth]{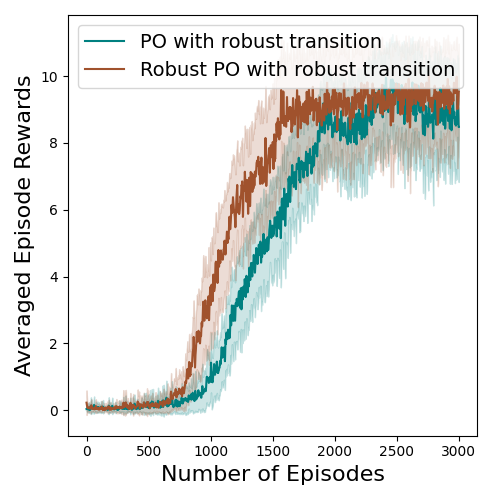}
         \caption{$\rho = 0.1$}
         \label{fig:kl_rho1}
     \end{subfigure}
     \hfill
     \begin{subfigure}[b]{0.3\textwidth}
         \centering
         \includegraphics[width=\textwidth]{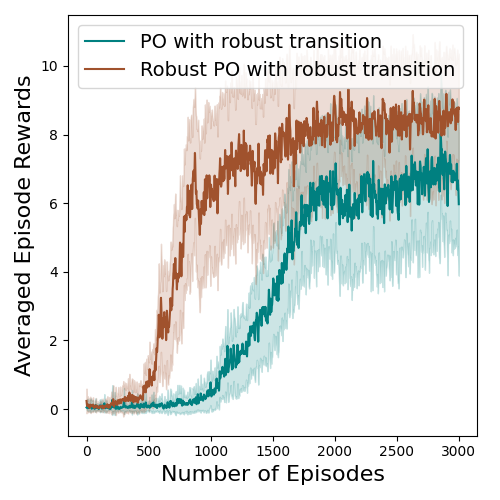}
          \caption{$\rho = 0.2$}
         \label{fig:kl_rho2}
     \end{subfigure}
     \hfill
     \begin{subfigure}[b]{0.3\textwidth}
         \centering
         \includegraphics[width=\textwidth]{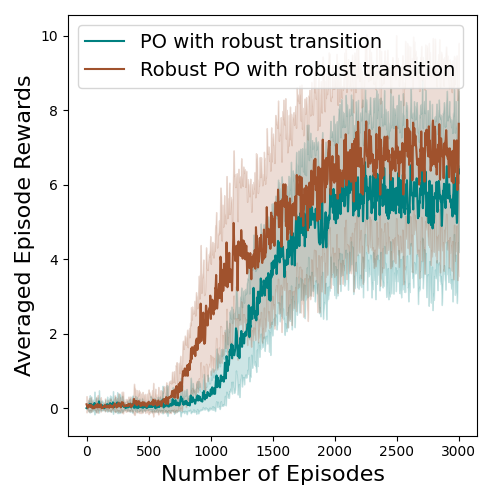}
          \caption{$\rho = 0.3$}         \label{fig:kl_rho3}
     \end{subfigure}
    \caption{Cumulative rewards obtained by robust and non-robust policy optimization on robust transition with different level of uncertainty $\rho = 0.1, 0.2, 0.3$ under KL divergence.}
        \label{fig:exp_kl}
\end{figure}

\end{document}